\theoremstyle{plain}
\newtheorem{theorem}{Theorem}[section]
\newtheorem{lemma}[theorem]{Lemma}
\newtheorem{corollary}[theorem]{Corollary}
\theoremstyle{definition}
\newtheorem{definition}[theorem]{Definition}
\theoremstyle{remark}
\icmltitlerunning{Submission and Formatting Instructions for ICML 2025}
\newcommand{\cmark}{\ding{51}}%
\newcommand{\xmark}{\ding{55}}%
\theoremstyle{plain}
\newcommand{\EE}{\mathbb{E}}
\newcommand{\Prob}{\mathbb{P}}
\newcommand{\TR}{\textcolor{red}}
\icmltitlerunning{Policy-labeled Preference Learning: Is Preference Enough for RLHF?}
\begin{document}
\allowdisplaybreaks
\twocolumn[
\icmltitle{Policy-labeled Preference Learning: Is Preference Enough for RLHF?}

\icmlsetsymbol{equal}{*}

\begin{icmlauthorlist}
% \icmlauthor{Taehyun Cho}{equal,snu,lg}
\icmlauthor{Taehyun Cho}{equal,snu}
\icmlauthor{Seokhun Ju}{equal,snu}
\icmlauthor{Seungyub Han}{snu}
\icmlauthor{Dohyeong Kim}{snu}
\icmlauthor{Kyungjae Lee}{kor}
\icmlauthor{Jungwoo Lee}{snu}
\end{icmlauthorlist}

\icmlaffiliation{snu}{Seoul National University, Seoul, South Korea}
% \icmlaffiliation{lg}{Work done during internship at LG AI Research, Seoul, South Korea}
\icmlaffiliation{kor}{Korea University, Seoul, South Korea}

\icmlcorrespondingauthor{Kyungjae Lee}{kyungjae\_lee@korea.ac.kr}
\icmlcorrespondingauthor{Jungwoo Lee}{junglee@snu.ac.kr}

% You may provide any keywords that you
% find helpful for describing your paper; these are used to populate
% the "keywords" metadata in the PDF but will not be shown in the document
\icmlkeywords{Machine Learning, ICML}

\vskip 0.3in
]
\printAffiliationsAndNotice{\icmlEqualContribution} % otherwise use the standard text.

% \maketitle

\begin{abstract}
To design rewards that align with human goals, Reinforcement Learning from Human Feedback (RLHF) has emerged as a prominent technique for learning reward functions from human preferences and optimizing policies via reinforcement learning algorithms. 
However, existing RLHF methods often misinterpret trajectories as being generated by an optimal policy, causing inaccurate likelihood estimation and suboptimal learning.
Inspired by Direct Preference Optimization framework which directly learns optimal policy without explicit reward,
we propose \textit{policy-labeled preference learning} (PPL), to resolve \textit{likelihood mismatch} issues by modeling human preferences with \textit{regret}, which reflects behavior policy information. 
We also provide a contrastive KL regularization, derived from regret-based principles, to enhance RLHF in sequential decision making. Experiments in high-dimensional continuous control tasks demonstrate PPL's significant improvements in offline RLHF performance and its effectiveness in online settings.
\end{abstract}

%%%%%%%%%%%%%%%%%%%%%%%%%%%%%%%%%%%%%%%%%%%%%%%%%%%%%%%%%%%%
\section{Introduction}
Preference-based reinforcement learning (PbRL), a branch of RLHF, focuses on learning optimal policies directly from human preferences, avoiding the needs for explicit, handcrafted rewards. Unlike traditional RLHF, which infers numerical rewards, PbRL derives reward signals from preference comparisons between trajectory pairs. As RLHF applications expand into complex domains such as large language models (LLMs) \citep{achiam2023gpt, touvron2023llama, ye2024online, meng2024simpo, xiao2024cal} and robotic manipulation \citep{christiano2017deep, hwang2023sequential}, aligning agents with human intentions has become increasingly important.

Early RLHF research \citep{lee2021pebble, park2022surf} assumed humans prefer trajectories with higher cumulative rewards, leading to a two-step learning process: (1) training a \textit{reward} model to align with human preferences and (2) applying a RL algorithm to optimize the \textit{policy} using the learned reward. 
Recently, \citet{rafailov2024direct} introduced Direct Preference Optimization (DPO), which bypasses the need for an explicit reward function and directly optimizes the policy based on preferences. This simplification reduces computational complexity and dependency on potentially imperfect reward models, improving training stability. 
DPO has demonstrated superior performance over reward-based RLHF methods, particularly in fine-tuning LLMs \citep{bai2022training, stiennon2020learning, ziegler2019fine} and offline RL benchmarks \citep{hejna2024inverse}.

While DPO has shown strong performance in LLM fine-tuning and offline RL benchmarks, its assumptions are largely shaped by the structure of LLM training. 
Many RLHF studies assume contextual bandits or
deterministic Markov decision processes (MDPs), where the next state is determined solely by the previous state (prompt) and action (response), leading to a simplified transition model. However, in standard RL settings, state transitions involve \textit{environmental stochasticity}, introducing additional uncertainty that complicates both policy optimization and MDP estimation \citep{yang2022dichotomy}. 
Since outcomes depend on external stochasticity beyond the agent’s control, observed transitions may not always accurately provide the agent’s policy performance. Consequently, inferring optimal behavior from observed sequences becomes more challenging.
For example, an agent executing a suboptimal policy might transition to a preferred state with low probability, whereas an optimal policy could occasionally lead to an undesirable state due to environmental randomness.
This contrast with the deterministic nature of LLM training, where responses are directly mapped from prompts, highlights a key limitation when applying DPO to general RL problems.

This challenge becomes more critical in offline RL, where learning relies on pre-collected datasets rather than direct interaction with environment.
When data comes from diverse policies but lacks explicit information about the behavior policies, distinguishing whether outcome quality stems from policy suboptimality or environmental stochasticity becomes difficult. 
Given this ambiguity, a key question arises:

\ul{\textit{Can preference data generated by diverse policies sufficiently guide sequential decision-making, or is additional information required?}}

To address this question, we propose a novel RLHF framework, \textit{Policy-labeled Preference Learning} (PPL), which leverages regret-based preference modeling while explicitly labeling the behavior policy.
Unlike conventional approaches that rely solely on preferences between trajectory pairs, PPL incorporates policy information directly into the learning process, disentangling the effects of environmental stochasticity and the suboptimality of behavior policies.

To provide theoretical insights, we define a \textit{reward equivalence class}—a set of reward functions that induce the same optimal policy—and derive a bijective mapping that allows regret to be expressed as a function of the optimal policy. 
We show that, unlike the partial sum of rewards, regret is uniquely defined with respect to a given optimal policy, making it a well-structured metric that mitigates issues related to reward sparsity and enhances the stability of learning. 
Furthermore, we introduce \textit{contrastive KL regularization}, which sequentially aligns policies with preferred trajectories while explicitly contrasting them against less preferred ones.
Empirically, to consider the fact that real-world offline data often consists of rollouts from diverse policies, we construct homogeneous and heterogeneous datasets in the MetaWorld environment and evaluate performance across various offline datasets.

% In summary, our contributions are as follows:

% \begin{itemize}
%     \item We propose the \textbf{Policy-labeled Preference Learning (PPL)} framework, which overcomes the limitations of existing DPO methods by incorporating behavior policy information through a regret-based formulation. 

%     \item We introduce the \textbf{policy deviation theorem}, which theoretically provides a principled method for evaluating non-optimal policies. Additionally, we derive a \textbf{contrastive KL regularization} term that directly facilitates preference learning by explicitly accounting for behavior policies, addressing gaps in existing models.

%     \item PPL outperforms existing DPO-based methods in complex robotic manipulation tasks. This showcases its adaptability to stochastic environments and its robustness in handling challenges of sequential decision-making.
% \end{itemize}

%%%%%%%%%%%%%%%%%%%%%%%%%%%%%%%%

% \section{Related Work}
% \begin{itemize}
%     \item PEBBLE \citep{lee2021pebble}
%     \item DPO \citep{rafailov2024direct}
%     \item DPPO \citep{an2023direct}
%     \item CPL \citep{hejna2023contrastive}
% \end{itemize}
%%%%%%%%%%%%%%%%%%%%%%%%%%%%%%%%%%%%%%%%%%%%%%%%%%%%%%%%%%%%
% \newpage

\section{Preliminaries}

% In classical RL, the optimal policy can only be found within a class of deterministic policies. 
% % \TR{To align with human preferences, however, the optimal policy may need to be stochastic, which conventional RL might fail to represent the desired reward system.} 
% In contrast, Maximum Entropy(MaxEnt) framework allows stochastic policies to be optimal, providing a more flexible framework.
% This broadens the set of potential optimal policies and enables more effective formulation of the underlying reward system that aligns with the given optimal policy.

\paragraph{Maximum Entropy Framework.}
% \TR{Before introducing our PbRL framework, we briefly recap the MaxEnt framework within an infinite-horizon Markov decision process (MDP).}
We define the MDP as $\mathcal{M} = (\mathcal{S}, \mathcal{A}, \mathbb{P}, r, \gamma)$ characterized by state space $\mathcal{S}$, action space $\mathcal{A}$, transition kernels $\mathbb{P}$ which represents the probability of the next state $s'$ given the current state $s \in \mathcal{S}$ and action $a \in \mathcal{A}$, underlying reward $r \in [r_{\text{min}},r_{\text{max}}]$, and discount factor $\gamma$.
For notational simplicity, we denote the expectation over trajectories $\tau = (s_0,a_0,s_1,a_1, \cdots)$ generated by a policy $\pi$ and the transition kernel $\Prob$ as $\EE_{\tau \sim \Prob^{\pi}}[\cdot]$.

The MaxEnt framework provides an optimal policy which not only maximizes the expected cumulative return, but also the entropy for each visited state:
% \vspace{-5pt}
\begin{align*}
% \label{eq: maxent}
    \pi^*_{\text{MaxEnt}}
    = \arg\max_{\pi}   \EE_{\tau \sim \Prob^{\pi}} \Big[ \sum_{t \geq 0} \gamma^t (r(s_t, a_t) + \alpha \mathcal{H}^{\pi}(\cdot|s_t) ) \Big],
    % \quad \forall (s,a) \in \mathcal{S}\times \mathcal{A}
\end{align*}
\vspace{-5pt}

where $\mathcal{H}^{\pi}(\cdot|s) = - \EE_{\pi}[\log \pi(\cdot|s)]$ is the entropy of policy $\pi$ at state $s$.
Here, $\alpha$ is a temperature hyperparameter that determine the relative importance of entropy and reward.
For clarity, we say $\pi^*_{\text{MaxEnt}}$ 
% in Equation (\ref{eq: maxent}) 
as $\alpha$-optimal.
In addition, soft $Q$-function $Q^{\pi}(s,a)$ is defined as the expected cumulative return augmented by an entropy terms, expressed as;

\vspace{-5pt}
\begin{align*}
    Q^{\pi}(s,a) = 
    r(s,a) + \EE_{\tau \sim \Prob^{\pi}}\Big[ \sum_{t>0} \gamma^t ( r(s_t,a_t) + \alpha \mathcal{H}^{\pi}(\cdot|s_t) ) \Big].
\end{align*}
\vspace{-10pt}

Analogously, we can derive soft value function $V^{\pi}(s)$ and soft Bellman equation as follows:
\begin{align*}
    V^{\pi}(s) = \EE_{a \sim \pi}[Q^{\pi}(s,a) - \alpha \log \pi(a|s)],
    \\ Q^{\pi}(s,a) 
    % & = r(s,a) + \gamma \EE_{s' \sim \Prob} \Big[ \EE_{a' \sim \pi} [Q^{\pi}(s', a') + \alpha \mathcal{H}^{\pi}(\cdot|s') ] \Big] 
    % \\ 
    = r(s,a) + \gamma \EE_{s' \sim \Prob} \Big[ V^{\pi}(s') ] \Big] 
    % \quad \forall (s,a) \in \mathcal{S} \times \mathcal{A}.
\end{align*}
for all state-action pairs $(s,a) \in \mathcal{S} \times \mathcal{A}$.
Note that the interpretation of the value function is modified by involving the entropy term in the MaxEntRL, \textit{i.e.,} $V^{\pi}(s) \neq \EE_{\pi}[Q^{\pi}(s,a)]$.
For an $\alpha$-optimal policy $\pi^*$, \citet{ziebart2010modeling} derived the relationship between the optimal policy and optimal soft $Q$-function $Q^{\pi^*}$:

\begin{align*}
    \pi^*(a|s) &= \exp\left(\alpha^{-1}( Q^{\pi^*}(s,a) -V^{\pi^*}(s)  )\right),
    \\ V^{\pi^*}(s) & = \alpha \log \int_{a \in \mathcal{A}} \exp\left( \alpha^{-1}Q^{\pi^*}(s,a) da \right).
\end{align*}

\subsection{Preference-based Reinforcement Learning}

\begin{savenotes}
\begin{table}[t]
\centering
\Huge
\caption{Comparison for different preference models under PbRL framework.}
% \vspace{-5pt}
\resizebox{0.49\textwidth}{!}{
\begin{tabular}{@{}cccc@{}}
\toprule[1.5pt]
Algorithm                                                        & Score Function  &  \begin{tabular}[c]{@{}c@{}} {Direct Preference} \\ {Optimization} \end{tabular} & \begin{tabular}[c]{@{}c@{}} {Likelihood} \\ {Matching} \end{tabular} \\ \midrule
\begin{tabular}[c]{@{}c@{}} \textcolor{black}{\textbf{PEBBLE}}\\ \citep{lee2021pebble}\end{tabular}    & $r_{\psi}(s_t, a_t)$   & \xmark  & \xmark                                    \\ \midrule
\begin{tabular}[c]{@{}c@{}} \textcolor{black}{\textbf{DPO}}\\ \citep{rafailov2024direct}\end{tabular} & 
% $\beta \log \frac{\pi_{\psi}(y|s)}{\pi_{\text{ref}}(y|s)}$ 
$ \log {\pi_{\psi}(y|s)}/{\pi_{\text{ref}}(y|s)}$  & \cmark  & \xmark                                    \\ \midrule
\begin{tabular}[c]{@{}c@{}} \textcolor{black}{\textbf{DPPO}}\\ \citep{an2023direct}\end{tabular} & $- \EE_{a \sim \pi_{\psi}(\cdot | s_t)}[\| a - a_t \|_2]$   & \cmark & \xmark                                    \\ \midrule
\begin{tabular}[c]{@{}c@{}} \textcolor{black}{\textbf{CPL}}\\ \citep{hejna2023contrastive}\end{tabular} & $  Q^{\pi_{\psi}}(s_t, a_t) - V^{\pi_{\psi}}(s_t) $    & \cmark & \xmark                                   \\ \midrule
% \rowcolor{GreenYellow}[3pt]
\begin{tabular}[c]{@{}c@{}} \textcolor{magenta}{\textbf{PPL}} \\ {[Ours]} \end{tabular}
& {$- ( V^{\pi_{\psi}}(s_t) - Q^{\TR{\pi}}(s_t, a_t) ) $} 
% & $- ( \EE_{\pi}[Q_{\pi_{\psi}}^{\pi_{\psi}}(s_t, a_t)] - Q_{\pi_{\psi}}^{\pi}(s_t, a_t) ) $ 
 & \cmark & \cmark  \\ 
\bottomrule[1.5pt]
\end{tabular}%
}
\vspace{-10pt}
\label{table: comparison}
\end{table}
\end{savenotes}

Designing a reward function that accurately aligns with human behaviors is inherently challenging. 
To address this, PbRL focuses on learning the optimal policy directly from human preferences rather than relying on predefined rewards. 
In this context, we adopt a reward-free MDP $\mathcal{M}\setminus r$ within the MaxEnt framework. 
% In this context, we adopt a reward-free MDP $\mathcal{M}/r = (\mathcal{S}, \mathcal{A}, \Prob, \gamma)$ within the MaxEnt framework. 
We define a segment 
% $\sigma = (s_0^{\sigma}, a_0^{\sigma}, \dots, s_k^{\sigma}, a_k^{\sigma})$ 
$\zeta = (s_0, a_0, \dots, s_k, a_k)$ 
as a sequence sampled from a dataset $\mathcal{D}$. 
Specifically, human annotators or AI systems are tasked with comparing pairs of trajectory segments $(\zeta^+, \zeta^-)$, where $\zeta^+$ is preferred over $\zeta^-$ ($i.e., \ \zeta^+ \succ \zeta^-$).

% Since designing a reward function that aligns with human behaviors is difficult, PbRL aims to learn the optimal policy from human preferences instead of directly from reward.
% Therefore, we consider a reward-free Markov Decision Process (MDP) $\mathcal{M} / r = (\mathcal{S}, \mathcal{A}, \Prob, \gamma)$ within the MaxEnt framework.
% We denote the segment $\sigma = (s_0^{\sigma},a_0^{\sigma}, \cdots, s_k^{\sigma}, a_k^{\sigma})$ represents a segment sampled from a dataset $\mathcal{D}$.
% Specifically, a human \TR{or AI annotator} compares two segments of trajectories $(\sigma^+, \sigma^-)$ where $\sigma^+$ is preferred to $\sigma^-$.  
% The human assigns a preference label of 1 to the preferred segment, i.e., $\sigma > \sigma' \Rightarrow y=(1,0)$. 
% For brevity, we denote the preferred segment as $\sigma^+$ (preference label $1$), and the less preferred segment as $\sigma^-$ (preference label 0).

\paragraph{Score-based Preference Model.}
Score-based preference model is a natural generalization of RLHF for modeling human preferences through score functions, instead of partial sum of rewards \cite{lee2021pebble}.
This approach extends the Bradley-Terry model \citep{bradley1952rank}, where pairwise comparisons are used to infer relative preferences, by introducing a score function that evaluates all observed state-action pairs within a segment. The preference model then assigns probabilities proportional to the sum of these scores, aligning with the Bradley-Terry framework.
% In this approach, the score function evaluates all observed state-action pairs within a segment, and the preference model assigns probabilities proportional to the sum of these score functions.
To implement the preference model using a neural network, the score function is parametrized as $S_{\psi}$, and the model is trained by minimizing the cross-entropy loss between its predictions and the preference labels derived from the dataset $\mathcal{D}$, as follows:
% \begin{align*}
% % \label{eq: preference}
%     P_{S_\psi}[\sigma > \sigma'] 
%     = \text{logistic}\Big( \sum_{t\geq0} S_{\psi}(s_t^{\sigma},a_t^{\sigma}) - S_{\psi}(s_t^{\sigma'},a_t^{\sigma'}) \Big),
% \end{align*}
\begin{align*}
% \label{eq: preference}
    P_{S_\psi}[\zeta^+ \succ \zeta^-] 
    = \sigma\Big( \sum_{t\geq0} S_{\psi}(s_t^{+},a_t^{+}) 
    - S_{\psi}(s_t^{-},a_t^{-}) \Big),
\end{align*}
\vspace{-15pt}
\begin{align*}
% \label{eq: objective}
    \mathcal{L}(S_\psi ; {\mathcal{D}}) 
    &= - \EE_{(\zeta^+, \zeta^-) \sim \mathcal{D}} \Big[ 
    \log P_{S_\psi}[\zeta^+ \succ \zeta^-] 
    \Big],
\end{align*}
% \begin{align*}
% % \label{eq: objective}
%     \mathcal{L}(S_\psi ; {\mathcal{D}}) 
%     &= - \EE_{(\sigma, \sigma', y) \sim \mathcal{D}} \Big[ 
%     y(0) \log P_{S_\psi}[\sigma > \sigma'] 
%     \\ & \quad + y(1) \log P_{S_\psi}[\sigma < \sigma']
%     \Big],
% \end{align*}
where $\sigma(x) = 1/ (1 + e^{-x})$ and each $(s_t^+ , a_t^+)$ and $(s_t^- , a_t^-)$ is the $t$-th state and action of preferred segment $\zeta^+$ and less preferred segment $\zeta^-$, respectively.
For notational simplicity, we abbreviate \( \EE_{(\zeta^+, \zeta^-) \sim \mathcal{D}} \) as \( \EE_{\mathcal{D}} \).

% Then the gradient of the objective function can be written as
% \vspace{-5pt}
% \begin{align*}
%     \nabla_{\psi} \mathcal{L}(S_\psi ; {\mathcal{D}}) 
%     = - \EE_{(\zeta^+, \zeta^-) \sim \mathcal{D}} \Big[ 
%     P_{S_{\psi}}[\zeta^- \succ \zeta^+]
%     \\ 
%     \nabla_{\psi}\sum_{t \geq 0}\Big( 
%     S_{\psi}(s_t^+, a_t^+) - S_{\psi}(s_t^-, a_t^-) \Big)
%     \Big].
% \end{align*}
% This implies that the gradient of the objective function increases the score function of the preferred segments while decreasing that of the less preferred segments, with the change weighted by the model's prediction error.

% While it remains unclear which score functions humans inherently adopt to reflect their preferences, preference models can be refined to better align with human judgment by adjusting them based on intuitive examples.
Although it is unclear how humans evaluate their preferences, preference models can be improved to better align with human judgment by refining them based on intuitive examples.
If the score function does not align with human preference evaluation, the model may produce counterintuitive outcomes.
For example, \citet{knox2022models} demonstrated that using the partial sum of rewards as a score function overlooks a critical issue in sparse reward MDPs: all segments that fail to reach the goal are treated as equally preferable, regardless of their contributions.

As shown in Figure \ref{fig: reward vs regret}, sparse reward MDPs provide little feedback for the states that do not reach the terminal goal, leading to meaningless comparison of the preferences in the early- and mid-stage segments based solely on return sums.
In contrast, regret is more evenly distributed across timesteps, making it a more effective score for comparing segment preferences regardless of their position in the trajectory.
This highlights the importance of modeling preference with the score function that aligns with human intuition. Various approaches to designing such score functions have been proposed, as summarized in Table \ref{table: comparison}.

\begin{figure}[t]
    \centering
    \includegraphics[width= .23 \textwidth]{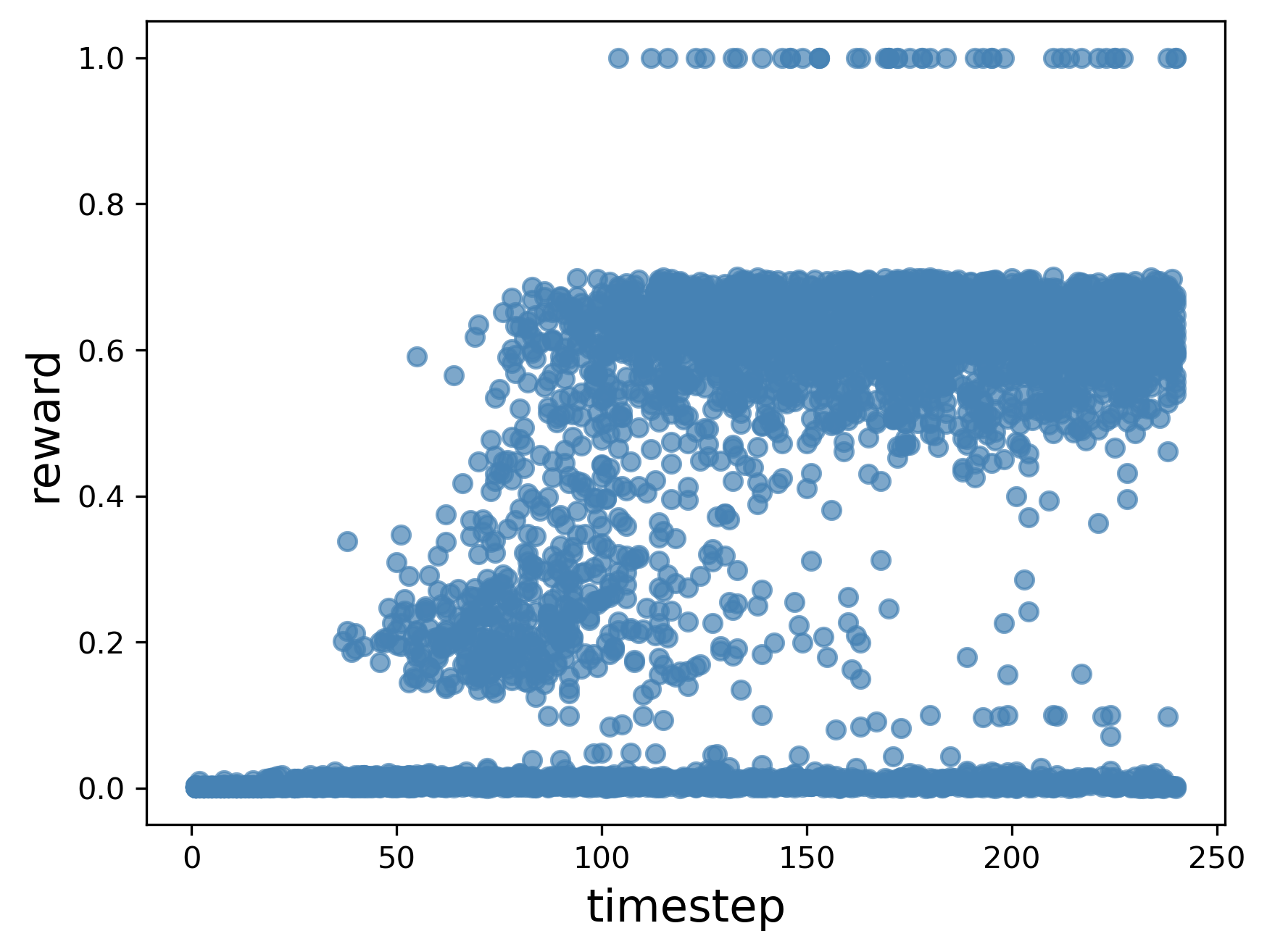}
    \includegraphics[width= .23 \textwidth]{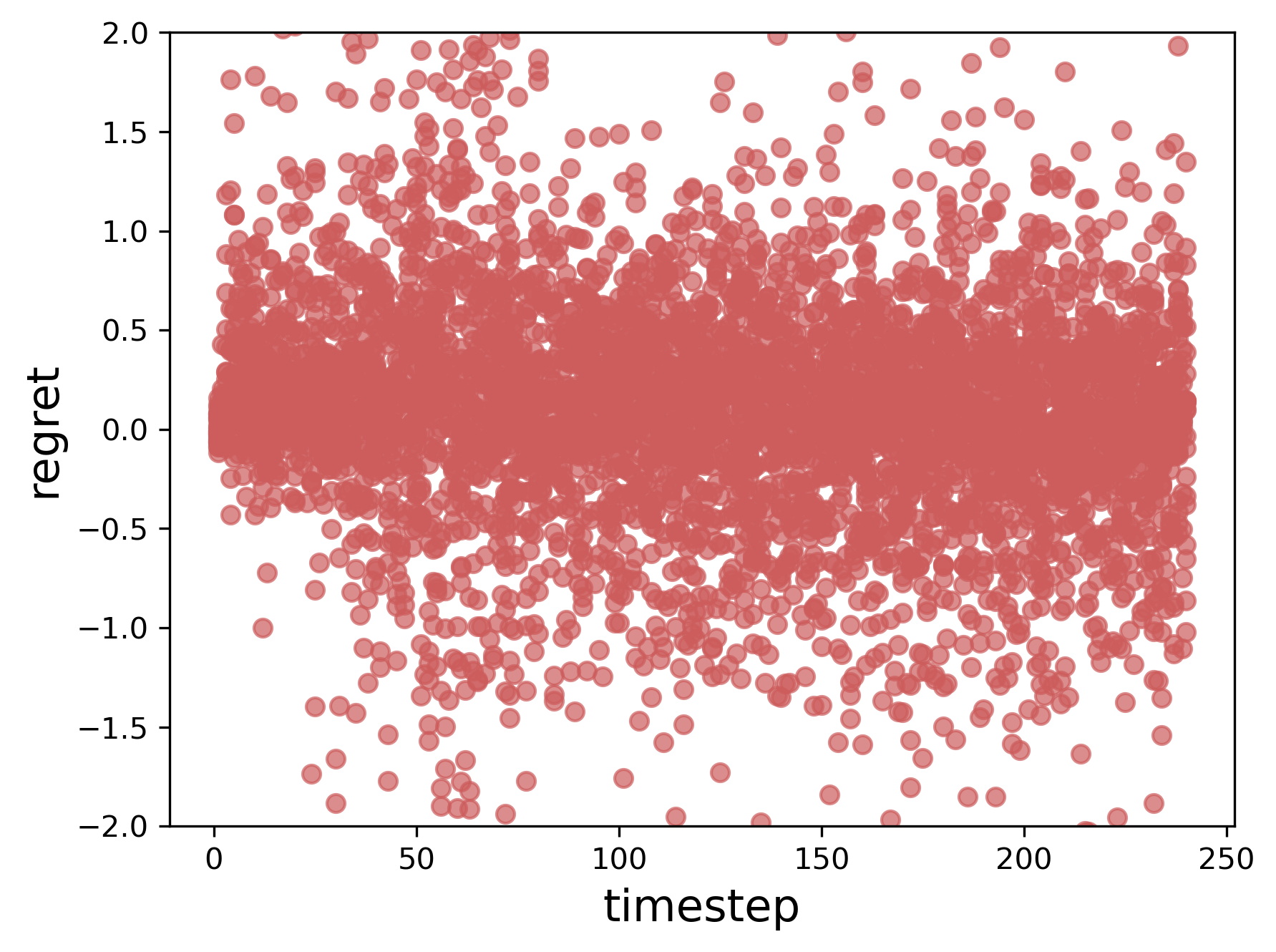}
    % \vspace{-10pt}
    \caption{Visualization of 5000 samples in  \texttt{Bin-Picking-v2} environment. While the ground-truth reward \textbf{(left)} is sparse and mainly provided upon task completion, regret \textbf{(right)} is more evenly distributed across all timesteps, making it a more informative score function for partial trajectory evaluation.}
    \label{fig: reward vs regret}
    \vspace{-10pt}
\end{figure}

%%%%%%%%%%%%%%%%%%%%%%%%%%%%%%%%%%%%%%%%%%%%%%%%%%%%%%%%%%%%%%%%%%%%%%%%%%%%%%%%%%%%

\paragraph{Optimal Advantage-based Preference Model.}
% \cite{hejna2023contrastive} and \cite{knox2024learning} proposed regret-based model.
% We will demonstrate that our proposed preference model incorporates \textit{regret}, a theoretically well-defined measure in the reinforcement learning (RL) literature, and addresses an issue that has been overlooked in previous research.
\citet{hejna2023contrastive} proposed \textit{Contrastive Preference Learning} (CPL), which is based on an optimal advantage-based preference model \citep{knox2022models}, treating as a regret-based preference model. 
The CPL score is defined as the difference between the value of the action taken and the average value under the optimal policy, (\textit{i.e.,}
$A_{\pi^*}(s_t, a_t)
        \coloneqq Q^{\pi^*}(s_t, a_t) - V^{\pi^*}(s_t)
        = \alpha \log \pi^*(a_t | s_t).$)
Leveraging the relationship between the optimal advantage and the optimal policy within the MaxEnt framework, their objective can be reformulated into a policy-based expression, enabling the optimal policy to be learned directly without relying on reward:
% \vspace{-5pt}
\begin{align}
    & \mathcal{L}_{\text{CPL}(\textcolor{red}{\lambda})}(\pi_{\psi}; \mathcal{D}) \label{Eq: lambda}
    \\&= - \alpha \EE_{\mathcal{D}}
    \bigg[ \log   \sigma\Big(\sum_{t \geq 0}  \log \pi_{\psi}(a_t^+ | s_t^+){- {\textcolor{red}{\lambda}} \log \pi_{\psi}(a_t^- | s_t^-) } \Big) \bigg].
    \nonumber
\end{align}%
% \vspace{-10pt}
% \begin{align}
%     & \mathcal{L}_{\text{CPL}}(\pi_{\psi}; \mathcal{D}) \nonumber
%     % \\& = - \alpha \EE_{(\zeta^+,\zeta^-)\sim \mathcal{D}}
%     \\& = - \alpha \EE_{ \mathcal{D}} 
%     \bigg[ \log   \sigma\Big(\sum_{t \geq 0}  \log \pi_{\psi}(a_t^+ | s_t^+){-  \log \pi_{\psi}(a_t^- | s_t^-) } \Big) \bigg]. 
%     \label{Eq: lambda}
% \end{align}
% To prevent multiple optimal policies emerging due to the shift-invariance property of the Bradley-Terry model in practice (\textit{i.e.,} $ P_{S({\pi}_{\psi}) + C} = P_{S({\pi}_{\psi})} $), they introduced an asymmetric regularizer, $\lambda$. 
% This regularizer assigns lower weights to the gradients of less preferred segments \TR{(we discuss details in Appendix ?)}
However, the standard score-based preference loss is convex but not strictly convex, leading to the existence of multiple optimal solutions.  
\citet{hejna2023contrastive} identified that the shift-invariance property of the loss function (\textit{i.e.,} \( P_{S({\pi}_{\psi}) + C} = P_{S({\pi}_{\psi})} \)) causes out-of-distribution actions to be overly weighted, deteriorating learning performance.  
To mitigate this issue, they introduced an asymmetric regularizer \(\lambda\), which reduces the gradient weight on less preferred actions, breaking the inherent symmetry and stabilizing the learning process.

\begin{figure*}[t]
    \centering
    \includegraphics[width= .9 \textwidth]{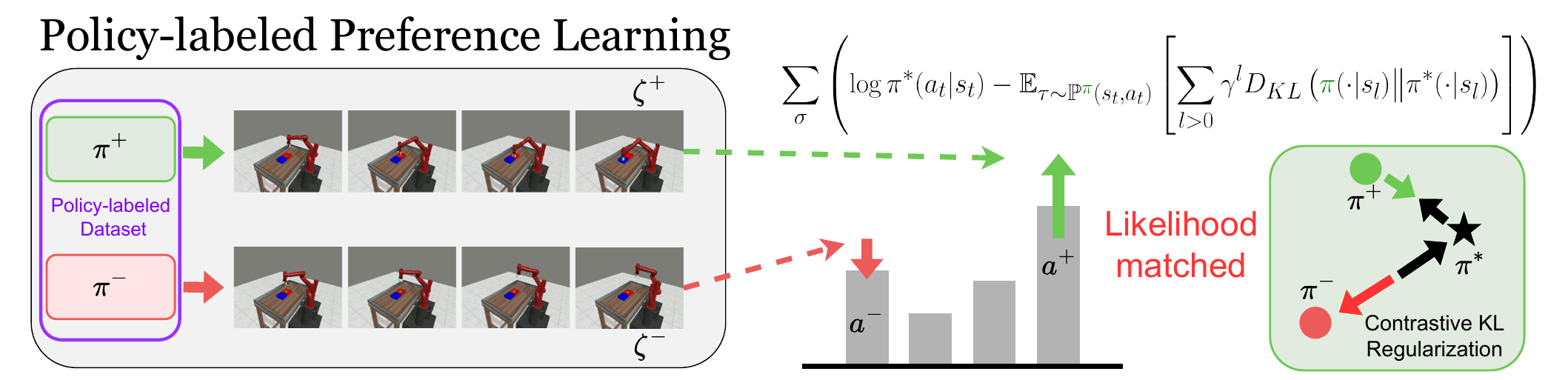}
    \caption{Unlike existing DPO algorithms, PPL aligns segment likelihoods by incorporating behavior policies. It reweights gradients based on closeness to the optimal policy, forming a contrastive learning framework.
    }
    \label{fig: ppl}
    \vspace{-15pt}
\end{figure*}

\section{Policy-labeled Preference Learning}

% In this section, we introduce the \textit{regret-based preference model}, highlight its differences from related research, and discuss potential issues inherent in the prior approaches.
% Precisely, we emphasize the issue of \textit{likelihood mismatch}  where sampled segments are incorrectly interpreted as if they were generated by optimal policy, leading to suboptimal learning outcomes.
% To resolve this, we propose the \textbf{Policy-labeled Preference Learning (PPL)}, which leverages a regret-based model to correctly capture the likelihood of sampled segments. 
% Finally, we present the theoretical results derived from the formulation of the PPL framework.

This section introduces the \textit{regret-based preference model} and its distinctions from prior work, with a focus on the issue of \textit{likelihood mismatch}, where sampled segments are misinterpreted as optimal, leading to suboptimal learning. To address this, we propose \textbf{Policy-labeled Preference Learning (PPL)}, which employs a regret-based model to accurately estimate segment likelihoods. Finally, we present theoretical results derived from the PPL framework.

\subsection{Is Preference Enough for RLHF?}

\paragraph{Negative Regret vs Optimal Advantage.}
In prior work, \citet{hejna2023contrastive} utilized the \textit{optimal advantage} as the score in CPL to introduce a \textit{regret}-based preference model.
While they presented these two concepts as equivalent, they differ significantly in their precise definitions and implications.
Optimal advantage refers to the relative benefit of taking a specific action $a$ under the optimal policy $\pi^*$ ($i.e., Q^{\pi^*}(s,a) - V^{\pi^*}(s)$).
In contrast, negative regret captures the performance difference between the behavior policy $\pi$ and the optimal policy $\pi^*$ ($i.e., Q^{\TR{\pi}}(s,a) - V^{\pi^*}(s)$).
The key difference between these concepts lies in whether the \textit{behavior policy} is incorporated into the score.

From a perspective of regret, the optimal advantage disregards the source of the trajectories and evaluates the actions taken solely based on $Q^{\pi^*}$. Consequently, it implicitly treats all trajectories as if they were generated by the optimal policy.
This raises an important question: what impact does this assumption -- \textit{treating all behavior polices as optimal} -- have on the regret-based learning process?

\vspace{-5pt}

\paragraph{Likelihood Mismatch.}

\begin{figure}[h!]
% \begin{wrapfigure}{r}{0.5\textwidth}
    \centering
    \vspace{-10pt}
    \includegraphics[width=0.49\textwidth]{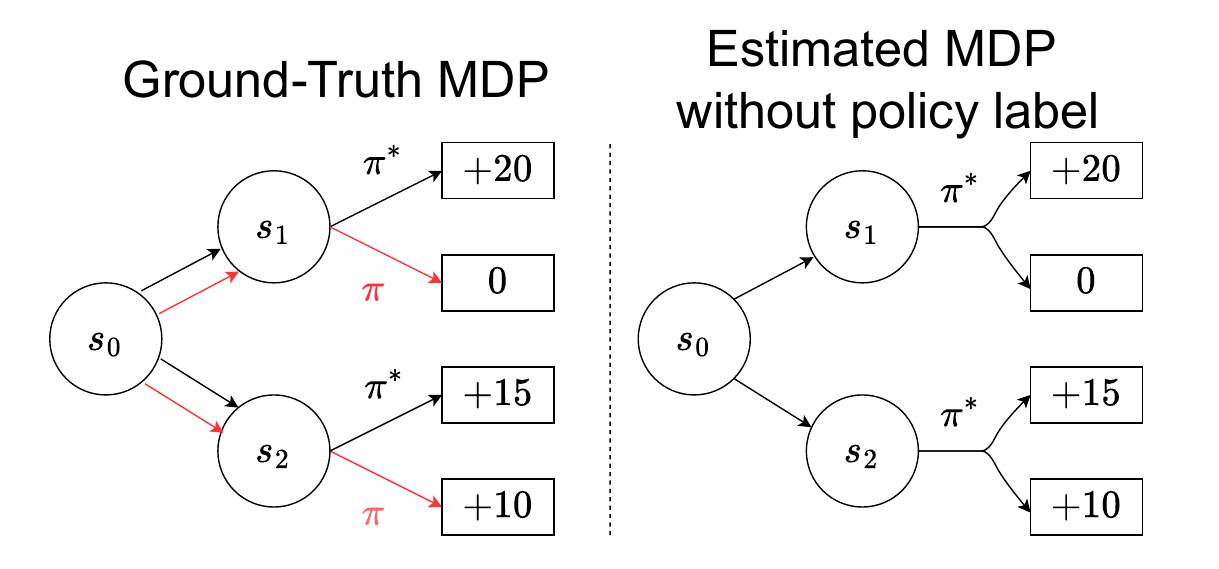}
    \vspace{-10pt}
    \caption{Illustration of the likelihood mismatch problem.  
    Although the behavior policy $\pi$ differs from the optimal policy $\pi^*$, the learning process incorrectly assumes all data is generated by $\pi^*$.
    As a result, while $\pi^*$ prefers $s_1$, this misinterpretation leads to the incorrect conclusion that $s_2$ is preferred, causing suboptimal learning outcomes.}
    \label{fig:PPL_CPL}
    % \vspace{-15pt}
% \end{wrapfigure}
\end{figure}
% \vspace{-10pt}

Likelihood mismatch occurs when outcome differences between two segments, which actually stem from behavior policy differences, are mistakenly attributed to environmental stochasticity. This misinterpretation leads to incorrect likelihood assignments.
Figure \ref{fig:PPL_CPL} illustrates this issue in an offline setting where offline data from both a suboptimal policy $\pi$ and an optimal policy $\pi^*$ lacks explicit policy labels. In this scenario, all data is mistakenly assumed to be generated by the optimal policy $\pi^*$, leading to misinterpretations during learning.

To understand how preference labels are assigned in this setting, let us first consider the left-side figure. The red trajectory, generated by the suboptimal policy $\pi$, assigns a higher score (+10) to $s_2$, making it appear more preferable than $s_1$. In contrast, the black trajectory, generated by the optimal policy $\pi^*$, assigns a higher score (+20) to $s_1$, leading to the opposite preference. These conflicting results can be properly distinguished when policy labels are available, allowing the model to infer the suboptimality of $\pi$ by evaluating preferences separately for each policy.

Now consider the right-side Figure \ref{fig:PPL_CPL}, where the same data is used but without policy labels. Since all data is incorrectly assumed to originate from $\pi^*$, the model observes contradictory outcomes—$s_2$ being preferred in one case and $s_1$ in another—despite assuming a single policy. 
Lacking policy labels, the model misinterprets this discrepancy as environmental stochasticity rather than differences in policies, distorting the learned MDP and leading to incorrect likelihood estimates for trajectories.
To mitigate this issue, it is crucial to explicitly track and incorporate the behavior policy $\pi$ for each segment, ensuring accurate interpretation and proper differentiation of feedback.
Thus, replacing optimal advantage with regret, which reflects the suboptimality of the behavior policy, provides a principled solution.

% \vspace{-10pt}

\paragraph{Regret-based Model Requires the Behavior Policy.}
In essence, regret quantifies how much better we could have done if we had followed the optimal policy instead of the behavior policy.
A larger regret indicates that the behavior policy is significantly less efficient compared to the optimal policy.
We remark that the regret is the difference between the \textit{expected return under optimal policy} and the \textit{achieved return under behavior policy}.
Based on the conventional definition of \textit{regret}, we reformulate negative regret in a policy-based form within the MaxEnt framework:
    \begin{align}
        & - \text{Reg}_{\pi^*}^{\pi}(s_t^{}, a_t^{}) \nonumber
        \\ & \coloneqq 
        - \underbrace{ V^{\pi^*}(s_t)}_{\text{expected return under $\pi^*$}}+ \underbrace{Q^{\TR{\pi}}(s_t^{}, a_t^{})}_{\text{achieved return under $\TR{\pi}$}}  \label{Eq: difference}
        % - ( \EE_{\pi^*}[Q^{\pi^*}(s_t^{\sigma}, a)] - Q^{\pi}(s_t^{\sigma}, a_t^{\sigma}) )
        \\& \overset{ (\text{Thm }\ref{theorem: Difference}) }{=}  \alpha \Big( \underbrace{\log \pi^*(a_t^{} | s_t^{})}_{\text{increase likelihood}} 
         -\underbrace{   \bar{D}_{\text{KL}}\big(\TR{\pi} || \pi^* ;s_t,a_t\big)  }_{\text{decrease sequential forward KL}} \Big).     \label{Eq: regret}
    \end{align}
In summary, the regret for the preferred segment can be decomposed into two components:
First, it {increases the likelihood} of actions taken in preferred segments, aligning the behavior policy with human preferences. Second, it {reduces the sequential forward KL divergence}, correcting for likelihood mismatch by considering long-term differences between the behavior policy and the optimal policy. 
Analogously, for the less preferred segment, the regret exhibits the opposite tendencies.
Based on Equation (\ref{Eq: regret}), our objective can be formulated as follows:
% \begin{align*}
%     & P^{(\pi^+, \pi^-)}_{\pi_\psi}= \sigma\bigg( - \sum_{t \geq 0} \text{Reg}^{\pi^+}_{\pi_{\psi}}(s_t^+ , a_t^+) - \text{Reg}^{\pi^-}_{\pi_{\psi}}(s_t^{-} , a_t^{-})
%     \bigg),
% \end{align*}
% \vspace{-10pt}
%\label{eq: loss}
    % = - \EE_{(\sigma, \sigma',y,p) \sim (\mathcal{D}, \mathcal{E}) }\Big[ \log P_{\pi_\psi}^{(\pi, \pi')}[\sigma^+ > \sigma^-] \Big]
    % = - \EE_{(\zeta^+, \zeta^-,p) \sim \mathcal{D} }

\begin{align}
    & \mathcal{L}_{\text{PPL}}(\pi_{\psi}; \mathcal{D} ) =\nonumber \\
    &  - \EE_{\mathcal{D} }
    \bigg[ \log \sigma \bigg(  
    - \sum_{t \geq 0}  \text{Reg}^{\pi^+}_{\pi_{\psi}}(s_t^+, a_t^+) - \text{Reg}^{\pi^-}_{\pi_{\psi}}(s_t^-, a_t^-)
    \bigg) \bigg] \nonumber
\end{align}
where the policy label for the preferred and less preferred segments are denoted as $\pi^+$ and $\pi^-$, respectively.
The detailed derivation of this formulation will be introduced in the next section and Appendix \ref{subsec:derivation}.

\subsection{Theoretical Analysis}

% \paragraph{Deriving Regret in Policy-based Closed-form.}
Consider a triplet $(\pi^*, (\zeta^+,\pi^+), (\zeta^-, \pi^-) )$, where the segments $\zeta^+$ and $\zeta^-$ are generated by policies $\pi^+$ and $\pi^-$, respectively.
The (unknown) optimal policy $\pi^*$ serves as the basis for determining the underlying reward and ensuring consistent preferences. 
During the learning process, we assume that each segment is labeled by its behavior policy.
Under this setup, the policy-labeled preference model is expressed as: 

\begin{align*}
    P_{\pi^*}^{(\pi^+,\pi^-)}
    %[\zeta^+ > \zeta^-] 
    % = \frac{ \exp\left(\sum_{t \geq 0} -    \text{Reg}_{\pi^*}^{\pi}(s_t^{\sigma}, a_t^{\sigma}) \right) }{ \exp\left( \sum_{t \geq 0} -\text{Reg}_{\pi^*}^{\pi}(s_t^{\sigma}, a_t^{\sigma}) \right) + \exp\left( \sum_{t \geq 0} -\text{Reg}_{\pi^*}^{\pi'}( s_t^{\sigma'}, a_t^{\sigma'}) \right) }  
    % \\& = \frac{ \exp\left( - \sum_{t \geq 0} \EE_{\pi^*}[Q^{\pi^*}(s_t^{\sigma}, a)] - Q^{\pi}(s_t^{\sigma}, a_t^{\sigma})  \right) }{ \exp\left( - \sum_{t \geq 0} \EE_{\pi^*}[Q^{\pi^*}(s_t^{\sigma}, a)] - Q^{\pi}(s_t^{\sigma}, a_t^{\sigma}) \right) 
    % + \exp\left( - \sum_{t \geq 0} \EE_{\pi^*}[Q^{\pi^*}(s_t^{\sigma'}, a)] - Q^{\pi'}(s_t^{\sigma'}, a_t^{\sigma'}) \right) }
    % \\ & 
    &= \sigma\Big( \sum_{t \geq 0} \underbrace{\Big[V^{\pi^*}(s_t^{-}) - V^{\pi^*}(s_t^{+}) \Big]}_{A_t(\pi^*)} 
    \\ & \qquad \quad + \underbrace{\Big[ Q^{\pi^+}(s_t^{+},a_t^{+}) - Q^{\pi^-}(s_t^{-},a_t^{-})
    \Big]}_{B_t(\pi^*, \pi^+, \pi^-)}
    \Big).
\end{align*} 
% where $s_t$ and $a_t$ represent the state and action at step $t$ in segment $\zeta$.
This expression is decomposed into two components: 
(i) $A_t(\pi^*)$, which depends solely on $Q^{\pi^*}$ (note that $V^{\pi^*}(s) = \mathbb{E}_{a \sim \pi^*}[Q^{\pi^*}(s,a)] + \alpha \mathcal{H}^{\pi^*}(\cdot|s)$), and (ii) $B_t(\pi^*, \pi^+, \pi^-)$, which involves $Q^{\pi^+}$ and $Q^{\pi^-}$. 

The main theoretical challenge in performing a direct policy update is expressing the soft optimal $Q$-function and soft $Q$-function of a given policy $\pi$ in closed-form with respect to the optimal policy $\pi^*$.
Before proceeding, we introduce the concept of \textit{equivalence classes} within the MaxEnt framework to analyze the reward structures that make a given policy optimal.

\begin{definition}
% [$(\alpha,\pi^*)$-Equivalence]
\label{def: R and Q}
    The set of reward functions where $\pi^*$ is $\alpha$-optimal is defined as \textit{$(\alpha,\pi^*)$-equivalence class of reward function}, denoted by $\mathcal{R}_{\alpha,\pi^*}$. 
    For every policy $\pi$, the set of $Q^{\pi}$-function generated by any reward function $r_{\alpha,\pi^*} \in \mathcal{R}_{\alpha,\pi^*}$  is defined as the \textit{$(\alpha,\pi^*)$-equivalence class of $Q^{\pi}$-function}, denoted by $\mathcal{Q}_{\alpha,\pi^*}^{\pi}$.
    % We define two reward functions $r_A$ and $r_B$ to be \textbf{$(\alpha,\pi^*)$-equivalent}, $r_A \underset{(\alpha,\pi^*)}{\equiv} r_B$, if and only if the $\alpha$-optimal policy of each reward functions are same as $\pi^*$.
    % We denote the \textbf{$(\alpha,\pi^*)$-equivalent class of reward function} as $\mathcal{R}_{(\alpha, \pi^*)}$. 
    % For a given policy $\pi$, we define the \textbf{$(\alpha,\pi^*)$-equivalent class of $Q^{\pi}$-function}, $\mathcal{Q}^{\pi}_{(\alpha, \pi^*)}$, if each member is generated by some reward function in $\mathcal{R}_{(\alpha, \pi^*)}$ with policy $\pi$.
\end{definition}

Definition \ref{def: R and Q} indicates that a reward function class $\mathcal{R}$ or a $Q^{\pi}$-function class $\mathcal{Q}^{\pi}$ can be partitioned based on the $\alpha$-optimal policy $\pi^*$.
% In reward-based PbRL, any reward within the equivalence class shares the same preference model or objective, which makes learning unstable because the target is not unique.
% However, in DPO, where the optimal policy is the unique target, such instability can be resolved.
For notational simplicity, we denote the ground truth reward function corresponding to the $\alpha$-optimal policy $\pi^*$ as $r_*$ and the $Q^{\pi}$-function induced by $r_*$ as $Q_*^{\pi}$, simplifying the subscript to $*$.

\begin{lemma}[Structural Condition for $\alpha$-optimality]
\label{lemma: 121 correspendence}
    A reward function and a soft optimal $Q$-function where $\pi^*(\cdot | s)$ is $\alpha$-optimal have a one-to-one correspondence with a state-dependent function $\beta: \mathcal{S} \rightarrow \mathbb{R}$, defined as follows:
    \begin{align*}
    \mathcal{R}_{\alpha,\pi^*} & = \{ r_*(s,a) = \alpha  \log \pi^*(a|s) + \beta(s) - \gamma \EE_{\Prob}[\beta(s')]  \}
    \\  \mathcal{Q}^{\pi^*}_{\alpha,\pi^*} & = \{ Q^{\pi^*}_* (s,a) = \alpha \log \pi^*(a|s) + \beta(s)   \}
    \end{align*}
    for all $s \in \mathcal{S}$ and $a \in \mathcal{A}$.
\end{lemma}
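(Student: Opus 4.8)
The plan is to characterize, in both directions, the reward functions for which $\pi^*$ is $\alpha$-optimal, using the closed-form relationship between an $\alpha$-optimal policy and its soft $Q$-function stated in the preliminaries, namely $\pi^*(a|s) = \exp(\alpha^{-1}(Q^{\pi^*}(s,a) - V^{\pi^*}(s)))$ together with $Q^{\pi^*}(s,a) = r(s,a) + \gamma\,\EE_{s' \sim \Prob}[V^{\pi^*}(s')]$. Taking logarithms in the first identity immediately gives $Q^{\pi^*}(s,a) = \alpha\log\pi^*(a|s) + V^{\pi^*}(s)$, so if I set $\beta(s) \coloneqq V^{\pi^*}(s)$, the claimed form of $\mathcal{Q}^{\pi^*}_{\alpha,\pi^*}$ follows directly, and substituting this into the soft Bellman equation yields $r(s,a) = \alpha\log\pi^*(a|s) + V^{\pi^*}(s) - \gamma\,\EE_{\Prob}[V^{\pi^*}(s')]$, i.e. the claimed form of $\mathcal{R}_{\alpha,\pi^*}$ with the same $\beta$. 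This shows every element of the two classes has the stated shape, and that the defining $\beta$ is precisely the soft value function of $\pi^*$ under the reward in question.

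For the converse, I would fix an arbitrary bounded $\beta: \mathcal{S} \to \mathbb{R}$, define $r_\beta(s,a) \coloneqq \alpha\log\pi^*(a|s) + \beta(s) - \gamma\,\EE_{\Prob}[\beta(s')]$, and verify that $\pi^*$ is indeed $\alpha$-optimal for the MDP with reward $r_\beta$. The natural route is to exhibit a solution of the soft Bellman optimality equations: I claim $Q(s,a) = \alpha\log\pi^*(a|s) + \beta(s)$ and $V(s) = \beta(s)$ solve the soft Bellman backup, and then invoke the soft Bellman optimality characterization (and its uniqueness under the contraction property of the soft Bellman operator) to conclude these are the optimal soft value functions and that the induced greedy (softmax) policy is $\pi^*$. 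Concretely, one checks $V(s) = \alpha\log\int_{\mathcal{A}} \exp(\alpha^{-1}Q(s,a))\,da = \alpha\log\int_{\mathcal{A}}\pi^*(a|s)\exp(\alpha^{-1}\beta(s))\,da = \beta(s)$ since $\pi^*$ integrates to one, and that $Q(s,a) = r_\beta(s,a) + \gamma\,\EE_{\Prob}[V(s')] = \alpha\log\pi^*(a|s) + \beta(s) - \gamma\EE_\Prob[\beta(s')] + \gamma\EE_\Prob[\beta(s')]$, which is consistent; the softmax policy extracted from this $Q$ is exactly $\exp(\alpha^{-1}(Q(s,a)-V(s))) = \pi^*(a|s)$.

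The one-to-one correspondence with $\beta$ then follows: the forward direction produced $\beta(s) = V^{\pi^*}(s)$ from a given reward, and distinct $\beta$'s give distinct rewards (the map $\beta \mapsto r_\beta$ is injective because $r_{\beta_1} = r_{\beta_2}$ forces $\beta_1 - \beta_2 = \gamma\,\EE_\Prob[\beta_1 - \beta_2]$, and by the $\gamma$-contraction / discounting argument the only bounded fixed point of $\phi \mapsto \gamma\EE_\Prob[\phi]$ is $\phi \equiv 0$); similarly $\beta \mapsto Q^{\pi^*}_\beta$ is injective since $\beta(s)$ is recovered as $Q^{\pi^*}_\beta(s,a) - \alpha\log\pi^*(a|s)$. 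The main obstacle I anticipate is the converse's rigor: one must be careful that solving the fixed-point equation genuinely certifies $\alpha$-optimality — this needs the uniqueness of the bounded solution to the soft Bellman optimality equation (so that the candidate $(Q,V)$ really is $(Q^{\pi^*}, V^{\pi^*})$ and not just some fixed point), which relies on $\gamma < 1$ and boundedness of $\beta$ and $\log\pi^*$; handling the integrability of $\log\pi^*$ over a continuous action space (and whether $\pi^*$ has a density bounded away from zero on its support) is the technical point most likely to require an explicit assumption, so I would state the boundedness of $\alpha\log\pi^*$ as a standing regularity condition.
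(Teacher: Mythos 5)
Your proposal is correct, and the forward direction is essentially the paper's argument: both extract $Q^{\pi^*}(s,a)=\alpha\log\pi^*(a|s)+\beta(s)$ from the softmax relation $\pi^*=\exp(\alpha^{-1}(Q^{\pi^*}-V^{\pi^*}))$ (the paper phrases this via the partition-function substitution $X(s,a)=\exp(\alpha^{-1}Q^{\pi^*}(s,a))$, you simply take logarithms and identify $\beta=V^{\pi^*}$), and then read off the reward from the soft Bellman equation. Where you diverge is the converse: the paper fixes $\beta$, defines $r$, and verifies by recursively substituting the soft Bellman (policy-evaluation) equation that the $\beta$-terms telescope, so that the series defining $Q^{\pi^*}$ collapses to $\alpha\log\pi^*(a|s)+\beta(s)$; you instead exhibit $(Q,V)=(\alpha\log\pi^*+\beta,\ \beta)$ as a solution of the soft Bellman \emph{optimality} equation and appeal to the $\gamma$-contraction/uniqueness of its bounded fixed point to certify that this is the optimal soft $Q$-function and that its softmax policy is $\pi^*$. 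Both routes are standard and sound under the boundedness caveat you flag (bounded $\beta$ and $\alpha\log\pi^*$, $\gamma<1$); the paper's telescoping avoids invoking the optimality operator but leaves implicit the final step that a policy which is softmax-greedy with respect to its own soft $Q$-function is $\alpha$-optimal, which your fixed-point route makes explicit. You also add something the paper's proof does not spell out: an injectivity argument ($r_{\beta_1}=r_{\beta_2}$ forces $\beta_1-\beta_2=\gamma\,\EE_{\Prob}[\beta_1-\beta_2]$, hence $\beta_1=\beta_2$ for bounded functions, and $\beta$ is recovered from $Q^{\pi^*}_\beta-\alpha\log\pi^*$), which is what actually justifies the lemma's claim of a one-to-one correspondence with $\beta$ rather than mere surjectivity of the parametrization.
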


% Lemma 4.2는 동일한 $\alpha$-optimal policy를 가지는 equivalent한 soft optimal $Q$-function들을 state-dependent한 함수로 표현될 수 있음을 의미한다. 
% 우리가 유도한 결과는 동일한 optimal policy에 대해서 동일한 reward 혹은 $Q$-function equivalence class를 유도하기에, \citep{rafailov2024direct}의 Lemma보다 더 strong하다. 

Lemma \ref{lemma: 121 correspendence} demonstrates that the $(\alpha, \pi^*)$-equivalence class of soft optimal $Q$-functions can be uniquely expressed as the sum of a log-probability term, $\alpha \log \pi^*(a|s)$, and a state-dependent function, $\beta(s)$. 
This result improves upon the prior lemma of \citet{rafailov2024direct}, which established only a \textit{surjection} from reward functions to optimal policies. 
By contrast, we ensure a \textit{bijection}, rigorously defining the equivalence class of reward functions.
% By setting a free parameter $\beta(s) = 0$ for all $s \in \mathcal{S}$, our formulation recovers the result of \citet{rafailov2024direct}. 
Furthermore, Lemma \ref{lemma: 121 correspendence} refines the concept of \textit{policy invariance} introduced by \citet{ng1999policy, gleave2020quantifying} by specifying that the action-dependent term must be $\alpha \log \pi^*(a|s)$ to guarantee $\pi^*$ is the $\alpha$-optimal policy.
% This result facilitates the theoretical analysis of the relationship between soft $Q$-functions and policies without relying on structural assumptions, in contrast to prior studies that assume deterministic MDPs \cite{zeng2024token, rafailov2024r}.

% This highlights the advantages of our bijective formulation in addressing the challenges of preference-based learning.
% beyond the concept of reward equivalence introduced in \cite{ng1999policy} and \cite{gleave2020quantifying}, we present an explicit reward function form, guaranteeing $\pi^*$ as the optimal policy.
% 또다른 주목할 부분은 \cite{ng1999policy}이나 \cite{gleave2020quantifying}에서 제안한 reward equivalence를 derive 했을 뿐 아니라, optimal policy가 $\pi^*$가 되도록하는 explicit한 형태의 reward function을 anchored term으로 제안했다는 점이다.

% only the functions from the same reward or $Q$-function equivalence class can induce the same $\alpha$-optimal policy. 

% \begin{lemma}
% \label{lemma: unique fixed point}
%     Let $\mathcal{Q_{\pi^*}}$ be a set of $Q$-functions where $\pi^*(\cdot | s)$ is $\alpha$-optimal.
%     Then, for any $\alpha$-optimal $Q$-function $Q^{\pi^*}_* \in \mathcal{Q}_{\pi^*}$, there always exists a state-dependent function $\beta(s): \mathcal{S} \rightarrow \mathbb{R}$ such that 
%     $$ Q^{\pi^*}_* (s,a) = \alpha \log \pi^*(a|s) + \beta(s), \ \forall s \in \mathcal{S}, a \in \mathcal{A}. $$
% \end{lemma}

\begin{lemma}[Unique Fixed Point of Soft Bellman $\pi$-operator]
\label{lemma: Bellman pi equation}
    Let $\pi^*$ be $\alpha$-optimal. For a given policy $\pi$ and $Q$-function $Q^{\pi}_A \in \mathcal{Q}^\pi$ for any $(s,a) \in \mathcal{S} \times \mathcal{A}$, define the Bellman $\pi$-operator $\mathcal{T}^{\pi}_* : \mathcal{Q}^{\pi} \rightarrow \mathcal{Q}^{\pi} $ where
    \begin{align*}
        \mathcal{T}_*^{\pi} Q_A^{\pi}(s,a) \coloneqq Q^{\pi^*}_*(s,a) - \gamma \EE_{\Prob}\Big[ \alpha\Big( \mathcal{H}^{\pi^*}(\cdot| s') - \mathcal{H}^{\pi}(\cdot|s')\Big)
        \\ + \EE_{\pi^*}[Q_*^{\pi^*}(s',a')] - \EE_{\pi}[Q_A^{\pi}(s',a')] \Big].
    \end{align*}
    Then, $\mathcal{T}^{\pi}_*$ has a unique fixed point $Q^{\pi}_*$.
\end{lemma}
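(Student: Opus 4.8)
The plan is to show that $\mathcal{T}^\pi_*$, once its definition is unwound, is nothing but the ordinary soft policy-evaluation Bellman operator for the ground-truth reward $r_*$ and the policy $\pi$, and then invoke the Banach fixed-point theorem.

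First I would eliminate the $Q$-independent part of $\mathcal{T}^\pi_*$ using the soft Bellman equation for $\pi^*$ under $r_*$. Since $\pi^*$ is $\alpha$-optimal for $r_*$, we have $r_*(s,a) = Q^{\pi^*}_*(s,a) - \gamma \EE_{s'\sim\Prob}\big[\EE_{a'\sim\pi^*}[Q^{\pi^*}_*(s',a')] + \alpha\mathcal{H}^{\pi^*}(\cdot|s')\big]$, where I used $V^{\pi^*}_*(s') = \EE_{\pi^*}[Q^{\pi^*}_*(s',a')] + \alpha\mathcal{H}^{\pi^*}(\cdot|s')$. Substituting this into the definition of $\mathcal{T}^\pi_*$ and regrouping the entropy and expectation terms, the $\pi^*$-terms collapse into $r_*$ and one is left with
\[
 \mathcal{T}^\pi_* Q^\pi_A(s,a) = r_*(s,a) + \gamma\,\EE_{s'\sim\Prob}\Big[\EE_{a'\sim\pi}[Q^\pi_A(s',a')] + \alpha\mathcal{H}^{\pi}(\cdot|s')\Big],
\]
which is exactly one application of the soft Bellman $\pi$-backup with reward $r_*$.

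With this reformulation the remaining claims are routine. \emph{(i) Contraction:} for bounded $Q_1,Q_2$ the reward and entropy terms cancel, so $\mathcal{T}^\pi_*Q_1(s,a)-\mathcal{T}^\pi_*Q_2(s,a)=\gamma\,\EE_{s'\sim\Prob}\big[\EE_{a'\sim\pi}[Q_1(s',a')-Q_2(s',a')]\big]$, giving $\|\mathcal{T}^\pi_*Q_1-\mathcal{T}^\pi_*Q_2\|_\infty\le\gamma\|Q_1-Q_2\|_\infty$; since $\gamma<1$, $\mathcal{T}^\pi_*$ is a $\gamma$-contraction on the complete metric space of bounded functions on $\mathcal{S}\times\mathcal{A}$ under the sup-norm, hence has a unique fixed point by Banach's theorem. \emph{(ii) Identification:} by definition $Q^\pi_*$ is the soft $Q$-function of $\pi$ under $r_*$, so it satisfies $Q^\pi_*(s,a)=r_*(s,a)+\gamma\EE_{\Prob}[\EE_\pi[Q^\pi_*]+\alpha\mathcal{H}^\pi]$, i.e.\ $\mathcal{T}^\pi_*Q^\pi_*=Q^\pi_*$; moreover $Q^\pi_*\in\mathcal{Q}^\pi_{\alpha,\pi^*}$ because $r_*\in\mathcal{R}_{\alpha,\pi^*}$ (Definition \ref{def: R and Q}). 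For the stated signature $\mathcal{T}^\pi_*:\mathcal{Q}^\pi\to\mathcal{Q}^\pi$ I would note that $r\mapsto Q^\pi_r$ is a bijection onto the bounded functions (its inverse sends $Q$ to $Q(s,a)-\gamma\EE_{\Prob}[\EE_\pi[Q]+\alpha\mathcal{H}^\pi]$), so $\mathcal{Q}^\pi$ is itself complete and $\mathcal{T}^\pi_*$ maps into it.

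I do not expect a genuine obstacle: the only care required is the bookkeeping in the substitution step and, in continuous action spaces, ensuring the differential entropy $\mathcal{H}^\pi$ is finite so that everything stays in a bounded-function space (restricting to policies of uniformly bounded entropy, or invoking whatever regularity the paper already assumes). Once $\mathcal{T}^\pi_*$ is recognized as a standard soft Bellman backup, the conclusion is the textbook contraction-mapping argument.
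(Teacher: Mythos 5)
Your proposal is correct and follows essentially the same route as the paper: the paper likewise proves that $\mathcal{T}^\pi_*$ is a $\gamma$-contraction on the complete space $\mathcal{Q}^\pi$ (Banach fixed-point theorem) and then identifies $Q^\pi_*$ as the fixed point by substituting the soft Bellman equations for $Q^{\pi^*}_*$ and $Q^\pi_*$ under $r_*$, which is exactly your collapse of the $\pi^*$-terms into $r_*$. Your only difference is presentational — rewriting the operator up front as the standard soft policy-evaluation backup with reward $r_*$, plus slightly more care about completeness and bounded entropy — which the paper leaves implicit.
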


Lemma \ref{lemma: Bellman pi equation} describes an operator that links the soft $Q$-function of a given policy $\pi$ to the optimal soft $Q$-function $Q^{\pi^*}_*$, identifying $Q^{\pi}_*$ as its unique fixed point. 
Notably, this relationship is established without requiring explicit knowledge of the reward function $r_*$. 
From the novel design of the soft Bellman $\pi$-operator, we now derive the following important theorem.

\begin{theorem}[Policy Deviation Theorem]
\label{theorem: Difference}
If a policy $\pi^*$ is $\alpha$-optimal, then for any policy $\pi$,
    $$  Q^{\pi^*}_{*}(s,a) - Q^{\pi}_{*}(s,a) = \alpha \bar{D}_{KL}( \pi || \pi^*;s,a) $$
    where the \textbf{sequential forward KL divergence} is defined as
    $$\bar{D}_{KL}(\pi || \pi' ; s,a) 
    \coloneqq \EE_{\tau \sim \Prob^{\pi}_{s,a}}
    \left[\sum_{l>0} \gamma^l {D_{KL}(\pi(\cdot|s_l)||\pi'(\cdot |s_l)) }\right].$$
    Here, $\Prob^{\pi}_{s, a}$ is the distribution of trajectories $\tau = (s_0, a_0, \cdots, s_l, a_l , \cdots)$ generated by policy $\pi$ and the transition $\Prob$, starting at $(s_0,a_0)= (s, a)$.
\end{theorem}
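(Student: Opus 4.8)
The plan is to show that $Q^{\pi}_*$, the soft $Q$-function of policy $\pi$ under the ground-truth reward $r_*$, equals $Q^{\pi^*}_*(s,a) - \alpha \bar{D}_{KL}(\pi||\pi^*;s,a)$ by verifying that the right-hand side is the unique fixed point of the soft Bellman $\pi$-operator $\mathcal{T}^{\pi}_*$ from Lemma \ref{lemma: Bellman pi equation}. Since that lemma already guarantees existence and uniqueness of the fixed point, it suffices to plug $Q^{\pi}_A(s,a) := Q^{\pi^*}_*(s,a) - \alpha \bar{D}_{KL}(\pi||\pi^*;s,a)$ into the operator and check $\mathcal{T}^{\pi}_* Q^{\pi}_A = Q^{\pi}_A$.

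First I would record the one-step recursion satisfied by the sequential forward KL divergence: directly from its definition,
\begin{align*}
\bar{D}_{KL}(\pi||\pi^*;s,a) = \gamma \EE_{s'\sim\Prob}\Big[ D_{KL}(\pi(\cdot|s')||\pi^*(\cdot|s')) + \EE_{a'\sim\pi}\big[\bar{D}_{KL}(\pi||\pi^*;s',a')\big]\Big],
\end{align*}
since conditioning on the first transition $s'$ and first action $a'\sim\pi(\cdot|s')$ peels off the $l=1$ term as a pure KL at $s'$ and shifts the remaining discounted sum into $\bar{D}_{KL}$ started at $(s',a')$. Next I would expand the KL term as $D_{KL}(\pi(\cdot|s')||\pi^*(\cdot|s')) = -\mathcal{H}^{\pi}(\cdot|s') - \EE_{a'\sim\pi}[\log\pi^*(a'|s')]$, and use the identity $\alpha\log\pi^*(a'|s') = Q^{\pi^*}_*(s',a') - V^{\pi^*}_*(s')$ together with $V^{\pi^*}_*(s') = \EE_{a'\sim\pi^*}[Q^{\pi^*}_*(s',a')] + \alpha\mathcal{H}^{\pi^*}(\cdot|s')$ (the soft value relation noted in the excerpt). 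Substituting these turns $\alpha D_{KL}(\pi(\cdot|s')||\pi^*(\cdot|s'))$ into exactly $\alpha(\mathcal{H}^{\pi^*}(\cdot|s') - \mathcal{H}^{\pi}(\cdot|s')) + \EE_{\pi^*}[Q^{\pi^*}_*(s',a')] - \EE_{\pi}[Q^{\pi^*}_*(s',a')]$, which is visibly the structure appearing inside $\mathcal{T}^{\pi}_*$ — with the last term being $\EE_{\pi}[Q^{\pi^*}_*]$ rather than $\EE_{\pi}[Q^{\pi}_A]$.

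The remaining step is to account for that discrepancy: $\EE_{a'\sim\pi}[Q^{\pi}_A(s',a')] = \EE_{\pi}[Q^{\pi^*}_*(s',a')] - \alpha\EE_{a'\sim\pi}[\bar{D}_{KL}(\pi||\pi^*;s',a')]$, and the extra $-\alpha\EE_{\pi}[\bar{D}_{KL}]$ term is precisely what the one-step recursion above contributes beyond the single KL term. Assembling everything, $\mathcal{T}^{\pi}_* Q^{\pi}_A(s,a) = Q^{\pi^*}_*(s,a) - \gamma\EE_{\Prob}[\alpha D_{KL}(\pi(\cdot|s')||\pi^*(\cdot|s')) + \alpha\EE_{\pi}[\bar{D}_{KL}(\pi||\pi^*;s',a')]] = Q^{\pi^*}_*(s,a) - \alpha\bar{D}_{KL}(\pi||\pi^*;s,a) = Q^{\pi}_A(s,a)$, so $Q^{\pi}_A$ is the fixed point and by Lemma \ref{lemma: Bellman pi equation} it equals $Q^{\pi}_*$, which rearranges to the claimed identity. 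I expect the main obstacle to be purely bookkeeping: carefully matching the entropy terms $\mathcal{H}^{\pi^*}$ versus $\mathcal{H}^{\pi}$ and making sure the $\EE_{\pi}$ versus $\EE_{\pi^*}$ expectations of $Q^{\pi^*}_*$ line up with the operator's definition, since a single sign error or a mismatched policy in an expectation would break the telescoping. A cleaner alternative, if the fixed-point algebra gets unwieldy, is a direct trajectory-unrolling argument — writing $Q^{\pi}_*(s,a) = r_*(s,a) + \EE_{\tau\sim\Prob^{\pi}_{s,a}}[\sum_{t>0}\gamma^t(r_*(s_t,a_t) + \alpha\mathcal{H}^{\pi}(\cdot|s_t))]$, substituting the closed form $r_*(s,a) = \alpha\log\pi^*(a|s) + \beta(s) - \gamma\EE_{\Prob}[\beta(s')]$ from Lemma \ref{lemma: 121 correspendence}, telescoping the $\beta$ terms, and comparing against the analogous expansion of $Q^{\pi^*}_*$ — but the fixed-point route is shorter given the machinery already in place.
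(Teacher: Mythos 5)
Your proposal is correct and follows essentially the same route as the paper: the paper's own proof defines $\tilde{Q}^{\pi}_*(s,a) = Q^{\pi^*}_*(s,a) - \alpha\bar{D}_{KL}(\pi\|\pi^*;s,a)$, verifies it is a fixed point of the soft Bellman $\pi$-operator $\mathcal{T}^{\pi}_*$, and concludes by the uniqueness guaranteed in Lemma \ref{lemma: Bellman pi equation}, exactly as you do. The only cosmetic difference is bookkeeping — the paper substitutes $Q^{\pi^*}_* = \alpha\log\pi^* + \beta$ from Lemma \ref{lemma: 121 correspendence} inside the operator, while you reach the same cancellation via the MaxEnt identities $\alpha\log\pi^* = Q^{\pi^*}_* - V^{\pi^*}_*$ and the soft value relation together with the one-step recursion of $\bar{D}_{KL}$.
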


Theorem 3.4 establishes that the difference between the soft $Q$-function of any policy $\pi$ and the optimal soft $Q$-function is constant and can be expressed as the sequential forward KL divergence. Intuitively, $\bar{D}_{KL}(\pi || \pi^*; s, a)$ represents the discounted sum of the forward KL divergence between $\pi$ and $\pi^*$ over the states visited during a rollout starting from $(s, a)$. This property is particularly valuable, as it quantifies the performance gap using only $\pi$ and $\pi^*$.
% Related results were proposed by \citet{shaikh2024show} and \citet{zeng2024token}, however, their proof were restricted to contextual bandits or token-level MDPs with deterministic transitions, respectively.

While related results were proposed by \citet{shaikh2024show} and \citet{zeng2024token}, their proofs were restricted to contextual bandits and token-level MDPs with deterministic transitions, respectively.
Moreover, their formulation depends on a KL-regularized objective that explicitly incorporates a reference policy. In contrast, Theorem 3.4, formulated within the MaxEnt framework, does not require a reference policy to be well-defined, making it more broadly applicable.

\begin{corollary}
    For a given $(\alpha, \pi^*)$ and a policy $\pi$, $\text{Reg}^{\pi}_{\pi^*}(\cdot, \cdot)$ is uniquely determined regardless of $\beta(s)$.
\label{corollary}
\end{corollary}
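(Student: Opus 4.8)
The plan is to combine Lemma~\ref{lemma: 121 correspendence} and Theorem~\ref{theorem: Difference} to write $\text{Reg}^{\pi}_{\pi^*}$ in closed form and observe that every occurrence of $\beta$ cancels. First I would fix an arbitrary representative $r_* \in \mathcal{R}_{\alpha,\pi^*}$ together with its state-dependent function $\beta$, and let $Q^{\pi^*}_*$, $Q^{\pi}_*$, $V^{\pi^*}_*$ denote the soft quantities it induces. By Lemma~\ref{lemma: 121 correspendence}, $Q^{\pi^*}_*(s,a) = \alpha\log\pi^*(a|s) + \beta(s)$.

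Next I would evaluate $V^{\pi^*}_*$ through the MaxEnt identity $V^{\pi^*}(s) = \EE_{a\sim\pi^*}[Q^{\pi^*}_*(s,a) - \alpha\log\pi^*(a|s)]$: substituting the expression above, the $\alpha\log\pi^*$ terms cancel inside the expectation and $\beta(s)$ is constant in $a$, so $V^{\pi^*}_*(s) = \beta(s)$. (The same value can be read off from the closed form $V^{\pi^*}(s) = \alpha\log\int_a \exp(\alpha^{-1}Q^{\pi^*}_*(s,a))\,da$, which serves as a consistency check.)

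Then I would apply Theorem~\ref{theorem: Difference}, namely $Q^{\pi^*}_*(s,a) - Q^{\pi}_*(s,a) = \alpha\bar{D}_{KL}(\pi||\pi^*;s,a)$, to get $Q^{\pi}_*(s,a) = \alpha\log\pi^*(a|s) + \beta(s) - \alpha\bar{D}_{KL}(\pi||\pi^*;s,a)$. Plugging both into $\text{Reg}^{\pi}_{\pi^*}(s,a) = V^{\pi^*}_*(s) - Q^{\pi}_*(s,a)$, the $\beta(s)$ terms cancel and I obtain
\[
\text{Reg}^{\pi}_{\pi^*}(s,a) = \alpha\big(\bar{D}_{KL}(\pi||\pi^*;s,a) - \log\pi^*(a|s)\big),
\]
which recovers Equation~(\ref{Eq: regret}) and depends only on $\alpha$, $\pi^*$, $\pi$ and $\Prob$ (the last through $\bar{D}_{KL}$), not on $\beta$. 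Since $r_*$ was an arbitrary element of $\mathcal{R}_{\alpha,\pi^*}$, this proves $\text{Reg}^{\pi}_{\pi^*}$ is invariant across the whole equivalence class.

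There is essentially no hard step once Lemma~\ref{lemma: 121 correspendence} and Theorem~\ref{theorem: Difference} are available; the proof is a one-line cancellation. The only points requiring care are (i) using the MaxEnt definition $V^{\pi^*}(s) = \EE_{a\sim\pi^*}[Q^{\pi^*}(s,a) - \alpha\log\pi^*(a|s)]$ rather than $\EE_{\pi^*}[Q^{\pi^*}]$, and (ii) noting that the $\beta$ appearing in $Q^{\pi^*}_*$ and in $Q^{\pi}_*$ is literally the same function, since both are induced by the same $r_*$ and Theorem~\ref{theorem: Difference} shifts that single $\beta$ identically into the $\pi$-value; so the expected obstacle is purely bookkeeping rather than conceptual.
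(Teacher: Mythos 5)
Your proposal is correct and follows essentially the same route as the paper: the paper's Appendix derivation likewise writes $V^{\pi^*}_*(s)=\EE_{\pi^*}[Q^{\pi^*}_*(s,a)-\alpha\log\pi^*(a|s)]=\beta(s)$ via Lemma \ref{lemma: 121 correspendence}, substitutes $Q^{\pi}_*=Q^{\pi^*}_*-\alpha\bar{D}_{KL}$ from Theorem \ref{theorem: Difference}, and cancels the two $\beta(s)$ terms to obtain $-\text{Reg}^{\pi}_{\pi^*}(s,a)=\alpha\big(\log\pi^*(a|s)-\bar{D}_{KL}(\pi||\pi^*;s,a)\big)$. Your additional remarks (using the MaxEnt form of $V^{\pi^*}$ and keeping the same $\beta$ for both $Q$-functions) are exactly the bookkeeping the paper implicitly relies on.
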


Since regret is invariant to transformations of $\beta(s)$, it does not require additional variance reduction techniques \citep{schulman2015high} to ensure stable learning.
For a detailed explanation, refer to Appendix \ref{subsec:regret_invariant}.

% Interestingly, according to Theorem \ref{theorem: Difference}, optimizing the MaxEnt objective with negative regret as the reward is equivalent to minimizing the sequential forward KL divergence between the learned policy and the behavior policy for each preferred state-action pair in the dataset: 
% \begin{align}
%     &\arg\max_{\pi_{\psi}}\Big( \EE_{\zeta^+\sim \mathcal{D}} [ -\text{Reg}^{\pi^+}_{\pi_{\psi}}(s^+,a^+) - \alpha \log \pi_{\psi}(a^+|s^+)]\Big) \nonumber
%     \\& \quad \equiv \arg\min_{\pi_{\psi}}\Big( \EE_{\zeta^+ \sim \mathcal{D}}[\bar{D}_{KL}(\pi^+ || \pi_{\psi} ; s^+ ,a^+)]\Big)
%     \label{eq: equivalence}
% \end{align}
\begin{corollary}
\label{theorem: equivalence}
Maximizing the MaxEnt objective with negative regret as the reward is equivalent to minimizing the sequential forward KL divergence between the learned policy and the behavior policy for each preferred state-action pair in the dataset, i.e.,
\vspace{-5pt}
\begin{align}
    &\arg\max_{\pi_{\psi}}\Big( \EE_{\zeta^+\sim \mathcal{D}} [ -\text{Reg}^{\pi^+}_{\pi_{\psi}}(s^+,a^+) - \alpha \log \pi_{\psi}(a^+|s^+)]\Big) \nonumber
    \\& \quad \equiv \arg\min_{\pi_{\psi}}\Big( \EE_{\zeta^+ \sim \mathcal{D}}[\bar{D}_{KL}(\pi^+ || \pi_{\psi} ; s^+ ,a^+)]\Big).
\end{align}
\end{corollary}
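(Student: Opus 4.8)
The plan is to collapse the left-hand objective into a negative constant multiple of the expected sequential forward KL divergence by substituting into the regret identity, after which the equality of the two $\arg$-problems is immediate because $\alpha>0$.

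The core step is to read Equation~(\ref{Eq: regret}) — which is a consequence of the Policy Deviation Theorem (Theorem~\ref{theorem: Difference}) together with the MaxEnt optimality identity $\alpha\log\pi^*(a|s)=Q^{\pi^*}_*(s,a)-V^{\pi^*}(s)$ and the definition of regret in Equation~(\ref{Eq: difference}) — with the optimal-policy slot occupied by the learned policy $\pi_\psi$ and the behavior-policy slot occupied by $\pi^+$. This gives, for every $(s,a)$,
\[
-\,\text{Reg}^{\pi^+}_{\pi_\psi}(s,a)\;=\;\alpha\log\pi_\psi(a|s)\;-\;\alpha\,\bar{D}_{KL}\big(\pi^+ || \pi_\psi ; s,a\big).
\]
Substituting this into the summand of the left-hand objective, the likelihood term $+\alpha\log\pi_\psi(a^+|s^+)$ cancels exactly against the per-step entropy bonus $-\alpha\log\pi_\psi(a^+|s^+)$ of the MaxEnt objective, leaving $-\alpha\,\bar{D}_{KL}(\pi^+ || \pi_\psi ; s^+,a^+)$; taking $\EE_{\zeta^+\sim\mathcal{D}}$ and using linearity,
\[
\EE_{\zeta^+\sim\mathcal{D}}\big[\,-\text{Reg}^{\pi^+}_{\pi_\psi}(s^+,a^+)-\alpha\log\pi_\psi(a^+|s^+)\,\big]\;=\;-\,\alpha\,\EE_{\zeta^+\sim\mathcal{D}}\big[\,\bar{D}_{KL}(\pi^+ || \pi_\psi ; s^+,a^+)\,\big].
\]
Since $\alpha>0$ is fixed and independent of $\pi_\psi$, maximizing the left-hand side over $\pi_\psi$ is the same optimization problem as minimizing $\EE_{\zeta^+\sim\mathcal{D}}[\bar{D}_{KL}(\pi^+ || \pi_\psi ; s^+,a^+)]$ over $\pi_\psi$, which is the asserted equivalence.

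The one step that deserves care — and I would spell it out — is the legitimacy of placing the learned policy $\pi_\psi$ in the role of the $\alpha$-optimal policy in Theorem~\ref{theorem: Difference}. For this I would fix a representative reward in $\mathcal{R}_{\alpha,\pi_\psi}$ (for instance the one with $\beta\equiv 0$ provided by Lemma~\ref{lemma: 121 correspendence}, so that $Q^{\pi_\psi}_*$, $V^{\pi_\psi}$ and $Q^{\pi^+}_*$ are all defined with respect to one common reward) and note that $\pi_\psi$ must have full support so that $\log\pi_\psi$ and the forward KL terms are finite. Corollary~\ref{corollary} then guarantees that $\text{Reg}^{\pi^+}_{\pi_\psi}$, and hence the whole identity, is independent of the choice of $\beta$, so the conclusion is unambiguous; everything else is routine bookkeeping. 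I would expect this substitution-validity point, rather than any calculation, to be the only genuine obstacle.
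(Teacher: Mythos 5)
Your proposal is correct and follows essentially the same route as the paper's own proof: expand the regret via the decomposition of Equation~(\ref{Eq: regret}) (i.e., Theorem~\ref{theorem: Difference} with $\pi_\psi$ in the optimal-policy slot and $\pi^+$ as the behavior policy), cancel the $\alpha\log\pi_\psi$ term against the entropy term, and conclude since $\alpha>0$. Your additional care about justifying the substitution of $\pi_\psi$ for the optimal policy (via Lemma~\ref{lemma: 121 correspendence} and Corollary~\ref{corollary}) is a welcome tightening of a step the paper takes implicitly, but it is not a different argument.
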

\vspace{-5pt}
% Theorem \ref{theorem: equivalence} implies that regret-based RLHF fundamentally operates by aggregating behavior policies from preferred segments within the dataset, aligning the learned policy toward preferred actions. 
% Notably, if all preferred segments are assumed to be generated by the optimal policy, the formulation reduces to the standard CPL objective, highlighting its connection to prior methods.
% For a more detailed analysis, refer to Appendix \ref{subsec: reformulation}.
Theorem \ref{theorem: equivalence} implies that regret-based RLHF operates by aggregating behavior policies from preferred segments, aligning the learned policy toward preferred actions.
Notably, if all preferred segments are assumed to be generated by the optimal policy, the formulation reduces to the standard CPL objective, highlighting its connection to prior methods.
For a more detailed analysis, see Appendix \ref{subsec: reformulation}.

\vspace{-5pt}
\subsection{Practical Algorithm and Implementation Details}
In this section, we present PPL, a practical algorithm that leverages the policy label to solve the likelihood mismatch.
% We consider a policy $\pi_\psi$ to estimate the optimal policy $\pi^*$.
Our setting follows the classical DPO, but with the difference that we manage preference queries by labeling the behavior policy for each trajectory in the dataset. 
Due to page limitations, see Appendix \ref{subsec: pseudocode} for the pseudocode.

\vspace{-5pt}
\paragraph{Pseudo-labels.}
In general RL settings, the behavior policy that generated a trajectory is typically known or accessible, making policy labeling relatively inexpensive.
However, in offline datasets, the behavior policy is often unknown.
To address this, we assign \textit{pseudo-labels} as an alternative, assuming each segment was generated by a deterministic policy that executed the observed actions.
% Subsequently, we sample minibatches from the collected preference queries and optimize the policy according to Equation \ref{eq: loss}.

\vspace{-5pt}
\paragraph{Contrastive KL Regularization.}

As previously discussed, the regret is decomposed into two components. In particular, the sequential KL divergence plays a pivotal role in aligning the learned policy with the preferred policy while diverging from the less preferred policy:

    % \begin{align*}
    % & - \sum_{t \geq 0} \text{Reg}^{+}_{\pi_{\psi}}(s_t^{+} , a_t^{+}) - \text{Reg}^{-}_{\pi_{\psi}}(s_t^{-} , a_t^{-})
    % \\& = \alpha \sum_{t \geq 0} \Big( 
    % \log \frac{\pi_{\psi}(a_t^+ | s_t^+)} {{\pi_{\psi}(a_t^- | s_t^-) }} 
    % + \mathcal{H}^{\pi_\psi}(\cdot| s_t^+) - \mathcal{H}^{\pi_\psi}(\cdot| s_t^-)
    % \\ & \quad \underbrace{- \bar{D}_{KL}( \pi^+ || \pi_{\psi} ;s_t^+ , a_t^+  ) 
    % + \bar{D}_{KL}( \pi^- || \pi_{\psi}; s_t^- , a_t^-  ) \Big]}_{\text{contrastive KL regularization } \mathcal{R}(\pi_\psi; \ \pi^+ , \pi^-) }
    % \Big).
    % \end{align*}
\vspace{-10pt}
    \begin{align*}
    & - \sum_{t \geq 0} \text{Reg}^{\pi^+}_{\pi_{\psi}}(s_t^{+} , a_t^{+}) - \text{Reg}^{\pi^-}_{\pi_{\psi}}(s_t^{-} , a_t^{-})
    \\& = \alpha \sum_{t \geq 0} \Big( 
    \log \frac{\pi_{\psi}(a_t^+ | s_t^+)} {{\pi_{\psi}(a_t^- | s_t^-) }} 
    \\ & \quad \underbrace{- \bar{D}_{KL}( \pi^+ || \pi_{\psi} ;s_t^+ , a_t^+  ) 
    + \bar{D}_{KL}( \pi^- || \pi_{\psi}; s_t^- , a_t^-  ) \Big]}_{\text{contrastive KL regularization } \mathcal{R}(\pi_\psi; \ \pi^+ , \pi^-) }
    \Big).
    \end{align*}
\vspace{-10pt}

We call this term as \textit{contrastive KL regularization}, which requires performing rollouts for each $(s_t, a_t)$ with respect to $\pi^+$ or $\pi^-$. 
This regularization term ensures that the learned policy $\pi_\psi$ aligns more closely with the preferred policy $\pi^+$ while pushing away from the less preferred policy $\pi^-$.

% Unlike the reverse KL divergence used in classical DPO, contrastive KL regularization leverages the forward KL divergence. 
% While reverse KL divergence tends to exhibit mode-seeking behavior, forward KL divergence adopts a mass-covering behavior, enabling broader exploration. 
% This property has gained attention in fine-tuning methods aimed at enhancing generation diversity (see Section \ref{sec: related works} for details). 
% Thus, our analysis suggests that employing forward KL divergence provides a more theoretically grounded framework, balancing policy learning while promoting generation diversity in the optimization process.

\begin{table*}[t]
\centering
% \huge
\caption{Success rates of all methods across six tasks on the MetaWorld benchmark on different datasets. Each score is reported with the maximum average performance across four seeds over 200 episode evaluation window.}
\resizebox{0.83\textwidth}{!}{ % 
\begin{tabular}{clcccccc}
\toprule
& & Bin Picking & Button Press & Door Open & Drawer Open & Plate Slide & Sweep Into \\
\midrule
\multirow{4}{*}{\shortstack[1]{Homogeneous\\Dense}} 
& SFT        & 39.7 $\pm$ 19.2  & 71.5 $\pm$ 3.3  & \textbf{48.0 $\pm$ 15.6}  & 56.2 $\pm$ 1.8  & 64.8 $\pm$ 0.8  & 70.0 $\pm$ 6.5 \\
& P-IQL      & 62.0 $\pm$ 4.4  & 72.3 $\pm$ 1.0  & 47.7 $\pm$ 5.1  & 58.0 $\pm$ 5.7  & \textbf{70.5 $\pm$ 6.1}  & 65.8 $\pm$ 1.3 \\
& CPL        & 22.7 $\pm$ 5.5  & 64.3 $\pm$ 1.4  & 29.0 $\pm$ 4.3  & 54.0 $\pm$ 4.3  & 65.5 $\pm$ 3.1  & 69.8 $\pm$ 3.3 \\
& PPL        & \textbf{83.5 $\pm$ 4.4}  & \textbf{79.8 $\pm$ 4.8}  & 39.3 $\pm$ 2.0  & \textbf{69.2 $\pm$ 5.5}  & 64.7 $\pm$ 2.0  & \textbf{72.8 $\pm$ 4.8} \\
\midrule
\multirow{4}{*}{\shortstack[1]{Homogenous\\Sparse}} 
& SFT        & 33.5 $\pm$ 5.4  & 67.4 $\pm$ 1.5  & 31.3 $\pm$ 2.1  & 54.9 $\pm$ 2.7  & 67.1 $\pm$ 3.7  & 78.3 $\pm$ 2.5 \\
& P-IQL      & 72.4 $\pm$ 6.6  & 74.5 $\pm$ 0.0  & \textbf{58.5 $\pm$ 1.4}  & 51.4 $\pm$ 4.6  & \textbf{76.3 $\pm$ 1.6}  & \textbf{79.0 $\pm$ 2.6} \\
& CPL        & 26.5 $\pm$ 1.0  & 63.7 $\pm$ 1.3  & 28.5 $\pm$ 5.8  & 50.1 $\pm$ 4.5  & 65.1 $\pm$ 2.8  & 72.9 $\pm$ 6.1 \\
& PPL        & \textbf{87.2 $\pm$ 3.5}  & \textbf{87.3 $\pm$ 2.8}  & 49.3 $\pm$ 6.5  & \textbf{68.5 $\pm$ 5.3}  & 64.0 $\pm$ 6.4  & 73.9 $\pm$ 3.5 \\
\midrule
\multirow{4}{*}{\shortstack[1]{Heterogeneous\\Dense}} 
& SFT        & 18.5 $\pm$ 23.8  & 63.7 $\pm$ 12.2  & 26.0 $\pm$ 12.5  & 32.0 $\pm$ 5.7  & 62.8 $\pm$ 1.6  & 53.0 $\pm$ 9.1 \\
& P-IQL      & 51.2 $\pm$ 5.3  & 62.5 $\pm$ 4.9  & \textbf{32.0 $\pm$ 3.5}  & 41.8 $\pm$ 3.8  & 67.0 $\pm$ 3.0  & \textbf{59.3 $\pm$ 3.7} \\
& CPL        & 1.2 $\pm$ 0.8  & 49.7 $\pm$ 3.0  & 17.3 $\pm$ 2.5  & 26.0 $\pm$ 2.2  & 59.2 $\pm$ 7.7  & 51.2 $\pm$ 3.0 \\
& PPL        & \textbf{59.7 $\pm$ 18.6}  & \textbf{73.8 $\pm$ 3.3}  & 25.8 $\pm$ 2.0  & \textbf{58.5 $\pm$ 3.8}  & \textbf{69.8 $\pm$ 2.3}  & 57.3 $\pm$ 8.6 \\
\midrule
\multirow{4}{*}{\shortstack[1]{Heterogeneous\\Sparse}} 
& SFT        & 12.2 $\pm$ 1.0  & 63.7 $\pm$ 4.7  & 17.8 $\pm$ 0.8  & 38.7 $\pm$ 3.0  & 70.7 $\pm$ 3.8  & 60.7 $\pm$ 2.5 \\
& P-IQL      & 48.0 $\pm$ 5.6  & 71.0 $\pm$ 6.6  & \textbf{44.1 $\pm$ 3.2}  & 47.5 $\pm$ 3.0  & \textbf{72.0 $\pm$ 4.0}  & \textbf{64.3 $\pm$ 1.0} \\
& CPL        & 18.0 $\pm$ 6.1  & 50.8 $\pm$ 0.8  & 18.5 $\pm$ 3.0  & 32.1 $\pm$ 1.6  & 67.3 $\pm$ 5.5  & 55.5 $\pm$ 3.3 \\
& PPL        & \textbf{83.8 $\pm$ 3.8}  & \textbf{83.5 $\pm$ 1.8}  & 34.3 $\pm$ 7.6  & \textbf{60.8 $\pm$ 7.3}  & 71.2 $\pm$ 1.9  & 63.3 $\pm$ 4.2 \\
\midrule
\bottomrule
\end{tabular}
}
\label{table: success_rate}
\vspace{-10pt}
\end{table*}

% \paragraph{Implementation Details.}
In practice, implementing contrastive KL regularization can result in a  computational overhead, as it requires multiple rollouts with each state-action pair as the initial point at every timestep until the terminal is reached.
This approach can also increase memory usage as it requires additional timesteps outside of the sampled segment.
To address these technical challenges, we replace the discounted sum with an $L$-horizon undiscounted sum.
We normalize the contrastive KL regularization to balance their scale, and the process is further simplified by reusing segments $\zeta^+, \zeta^-$ as a single rollout of policy $\pi^+, \pi^-$, respectively. 
% This modification reduces both computational and memory demands: 

    \vspace{-10pt}
    \begin{align*}
    &{\mathcal{R}}(\pi_\psi;\ \pi^+ ,\pi^-)
    \\&\approx 
    \frac{1}{L}\sum_{l =1}^L 
    \Big[ 
    - \log \frac{\pi^+(a^+_{t+l} | s^+_{t+l})}{\pi_{\psi}(a^+_{t+l}|s^+_{t+l})}
     + \log \frac{\pi^-( a^-_{t+l}| s_{t+l}^-)}{\pi_{\psi}( a^-_{t+l}| s_{t+l}^-)} \Big].
    \end{align*}
    % \vspace{-10pt}

    % \begin{align*}
    % & {\mathcal{R}}(\pi_\psi;\ \pi^+ ,\pi^-)
    % \approx 
    % \frac{1}{L}\sum_{l =1}^L 
    % \Big[ 
    % -D_{KL}( \pi^+( \cdot| s_{t+l}^+) || \pi_{\psi}( \cdot| s_{t+l}^+))
    % \\ & \qquad + D_{KL}( \pi^-( \cdot| s_{t+l}^-) || \pi_{\psi}( \cdot| s_{t+l}^-)) \Big].
    % \end{align*}

% In summary, PPL adopts a contrastive learning approach to optimize the policy $\pi_\psi$ by performing rollouts of the executed policies $\pi^+$ and $\pi^-$ and optimizing the forward KL divergence across these \TR{two} components.
Here, $L$ corresponds to the step of look-ahead during rollouts. 
When $L=0$, the framework fully reduces to CPL, which does not account rollout for sequential planning.  
Another interesting observation is that if we assume the segments in the offline dataset were generated by the reference policy (i.e., $\pi^+, \pi^- = \pi_{\text{ref}}$), the framework recovers the original DPO formulation, i.e., \textit{forward KL-constrained RLHF implicitly minimizes regret.}

\section{Experiments}

% In this section, we present a series of experiments to demonstrate how our proposed framework differs from existing PbRL methods. Specifically,

In our experiments, we aim to answer the following questions:
(1) Can PPL effectively learn in offline settings composed of heterogeneous data generated by diverse policies?
% (2) Does contrastive KL regularization improve performance on long-horizon tasks?
(2) Does incorporating policy labels improve learning performance?
(3) Can PPL be effectively applied to online RLHF algorithm?
A full report for each question is provided in the Appendix \ref{subsec:E1}, \ref{subsec:E2} and \ref{subsec:E3}.

\vspace{-5pt}
\subsection{Experimental Setup}
% \paragraph{Experimental Setup.}

% We conduct our experiments on six robotic manipulation tasks from the MetaWorld benchmark \citep{yu2020meta}. 

For a fair comparison, we first evaluate the performance of PPL on six robotic manipulation tasks in MetaWorld \citep{yu2020meta}, using the same rollout data provided by \citet{hejna2023contrastive}.
Results from the reproducibility check are included in Appendix \ref{subsec: reproduce}. 
%Next, to evaluate performance across offline datasets generated by diverse policies, we follow CPL’s preference dataset generation procedure and utilize its raw rollout data without any truncation.
To evaluate performance on offline datasets generated from diverse policies, we aimed to follow CPL’s preference dataset generation procedure. However, there are two key differences in our implementation of the critic. First, we utilize raw rollout data without any trajectory truncation. Second, whereas CPL applies a specific technique to reduce TD-error by re-training the critic with all rollout data added to the replay buffer, we generated preference labels without such retraining. As a result, our labels may be noisier than those in CPL. Nevertheless, to ensure a fair comparison, all algorithms were trained using the same set of labels. For further details, please see Appendix E.4.
%Specifically, we construct an offline dataset by rolling out 20,000 episodes, each lasting 250 steps, using their suboptimal soft actor-critic (SAC) \cite{haarnoja2018soft} checkpoints, which achieved an approximate 50\% success rate.

\vspace{-10pt}
\paragraph{Baselines.}
We consider CPL as our primary baseline, where the key distinction between PPL and CPL lies in whether the label of the behavior policy is utilized.
For additional baselines, we include supervised fine-tuning (\textbf{SFT}) and Preference-based Implicit $Q$-Learning (\textbf{P-IQL}). Specifically, SFT first trains a policy via behavior cloning on all preferred segments in the preference dataset. P-IQL \citep{hejna2024inverse} is a reward-based RLHF algorithm that first learns a reward function from preference data and then derives an optimal policy using the Implicit Q-Learning (IQL) algorithm \citep{kostrikov2021offline}. 
Notably, P-IQL is expected to achieve higher performance, as it not only learns a policy but also simultaneously optimizes a reward function, $Q$-function, and value function.

\vspace{-5pt}
\paragraph{Implementation Details.}
To generate preference queries without human supervision, we pretrain an SAC model as an oracle that achieves a 100\% success rate. 
Using this pretrained model as a critic, we uniformly sampled segments of length 64 and assigned labels based on estimated regret.
To evaluate performance in heterogeneous datasets, we further construct an additional offline dataset by rolling out suboptimal policies with 20\% and 50\% success rates and combining them.
For preference datasets, we conduct experiments under two settings: \texttt{Dense}, where comparisons are made between all segment pairs, and \texttt{Sparse}, where only one comparison is made per segment.

% \paragraph{Implementation Details.}

% online setting 
% button press, door open, drawer open, plate slide, Sweep into
% total 10000 prefernce data for each environment 
% plot에 dashed line으로 pebble, preference ppo성능 표시하고, ppl 학습 커브 
% ablation - l, ppl-ppl determ

% offline setting
% new dataset 20-50, 50
% cpl dataset
% dense, sparse
% 표 형식으로 제시
% ablation - l, ppl-ppl determ

\vspace{-5pt}
\subsection{Can PPL be effectively trained on both homogeneous/heterogeneous offline dataset?}
\label{subsec:1}

% \begin{wrapfigure}{r}{0.24\textwidth}
%     \centering
%     \vspace{-10pt}
%     \hspace{-15pt}
%     \includegraphics[width= .24 \textwidth]{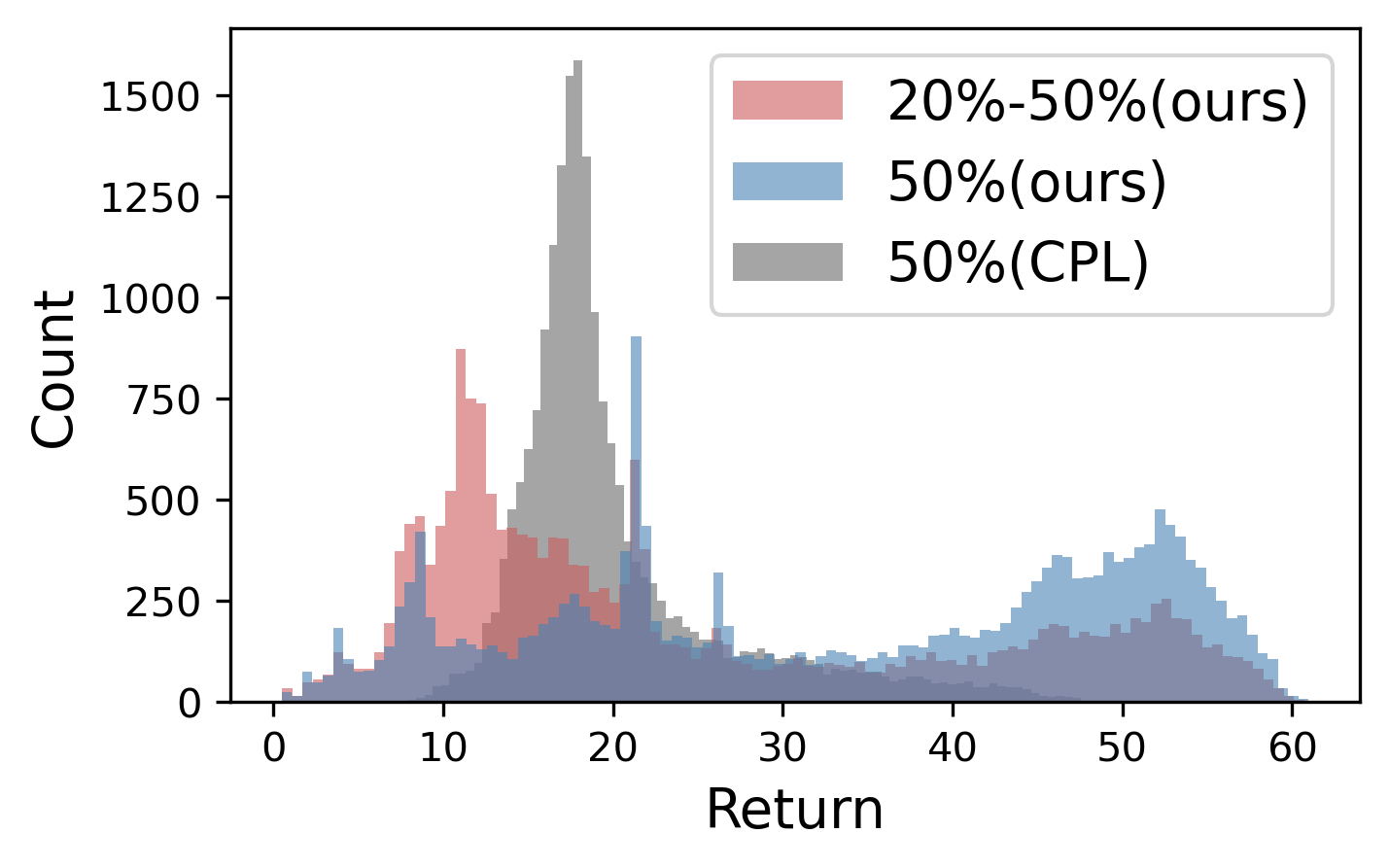}
%     \vspace{-10pt}
%     \caption{Distribution of returns in homogeneous vs heterogeneous offline dataset in \texttt{Button-Press-v2}.}
%     \vspace{-10pt}
%     \label{fig: return_distribution}
% \end{wrapfigure}

\begin{figure}[h!]
    \centering
    % \vspace{-10pt}
    \includegraphics[width=0.65\linewidth]{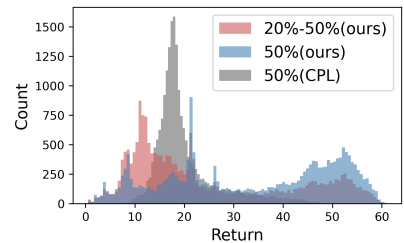}
    \vspace{-10pt}
    \caption{Distribution of returns in homogeneous vs heterogeneous offline dataset in \texttt{Button-Press-v2}.}
    \label{fig: return_distribution}
\vspace{-5pt}
\end{figure}

In the previous works, the evaluation of offline datasets has been conducted under homogeneous conditions.
However, in practice, offline datasets are more commonly generated by a multiple different policies.
Thus, we investigate the following question:

\ul{\textit{How would PPL and the baselines perform if the offline dataset were heterogeneous?}}

To investigate this, we examine the distribution of segment returns for both types of datasets, as shown in Figure \ref{fig: return_distribution}.
Compared to the homogeneous dataset, the heterogeneous dataset includes rollout data from a policy with a 20\% success rate, leading to a higher density of lower-return segments.

In Table \ref{table: success_rate}, we report the impact of diverse behavior policies on performance.
PPL consistently outperforms other methods across various dataset conditions in the MetaWorld benchmark, particularly in challenging scenarios with preference sparsity and policy diversity.
Interestingly, unlike baseline algorithms, PPL achieves higher performance in \texttt{Sparse} settings compared to \texttt{Dense} settings. 
This implies that PPL benefits more from datasets with broader state-action coverage rather than relying on dense pairwise comparisons across all segments.
%Furthermore, PPL exhibits greater robustness in heterogeneous datasets containing noisy or mixed data, outperforming or matching P-IQL, even when P-IQL utilizes 16 times the number of hyperparameters. 
Furthermore, PPL exhibits greater robustness in heterogeneous datasets, outperforming or matching P-IQL despite utilizing only about 6.3\% of its parameters.
This highlights PPL as an efficient algorithm that maintains strong performance while incurring lower computational costs.

% In contrast to the homogeneous dataset, which displays a single peak in its return distribution, the heterogeneous dataset generated from two different policies exhibits multiple distinct peaks.

% Additionally, in Figure \ref{fig: return_distribution}, we analyze the distribution of segment returns and regret based on the starting timestep of each segment. 
% For partial returns, we observe that segments beginning at later timesteps tend to yield higher returns. 
% In contrast, the distribution of regret case remains consistent across varying starting timesteps. 
% Since most recent preference-based algorithms operate on a segment-wise basis, selecting appropriate timesteps within the segments of trajectories is crucial for achieving effective preference-based learning.

%As shown in Figure ?, partial return-based labeling can be ineffective for learning policies for initial states. This is because segments from initial states are more likely to receive negative labels compared to segments from later timesteps, as high rewards in many RL tasks are typically assigned near objective states rather than at the beginning.

% \vspace{-5pt}
\subsection{Does incorporating policy labels improve learning performance?}
\label{subsec:2}

In this experiment, we examine how the presence and accuracy of policy labels affect performance.
Since the offline dataset are fixed and behavior policies are typically unknown, we ablate a pseudo-label setting, assuming each segment was executed deterministically based on the observed actions.
Specifically, we introduce \textbf{PPL-deterministic}, where the behavior policy for each segment is assumed to be fully deterministic (See Lines 4-5 of Algorithm \ref{alg: PPL}). We then compare its performance with PPL.

\begin{figure}[h!]
    \centering
    \includegraphics[width = 0.23\textwidth]{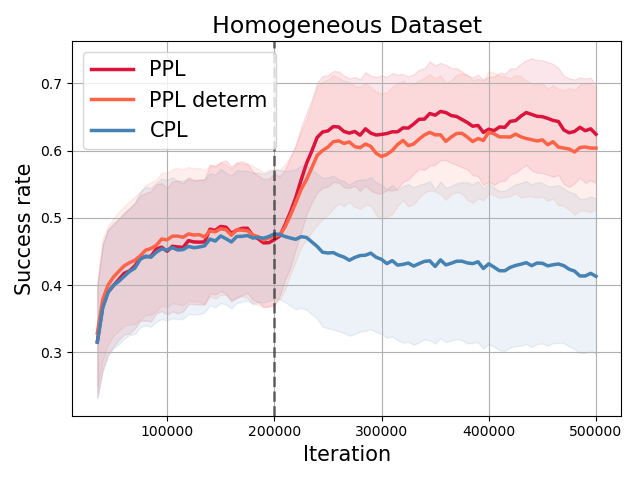}
    \includegraphics[width= 0.23\textwidth]{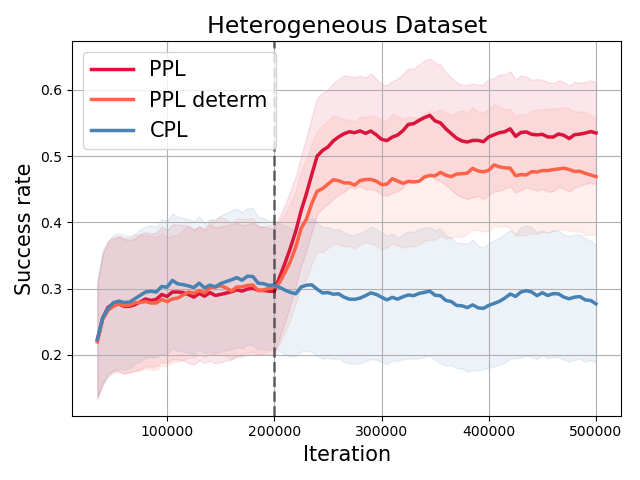}
    % \vspace{-10pt}
    \caption{Ablation on deterministic pseudo-labeling. We compare the average performance of PPL and PPL-deterministic across six environments in MetaWorld. The dashed line indicates the point where BC pretraining stops.}
    \label{fig:pseudo-label}
    % \vspace{-10pt}
\end{figure}

As shown in Figure \ref{fig:pseudo-label}, comparing PPL with CPL reveals that when behavior policy information is not incorporated into learning, distinguishing environmental stochasticity from behavior policy suboptimality becomes more difficult, resulting in a significant performance gap.
As an alternative, using deterministic pseudo-labels for training on offline data without policy labels proves to be a viable approach in homogeneous datasets, causing only a slight performance drop. However, in heterogeneous datasets, their effectiveness decreases, leading to a substantial performance gap.
This result suggests that as the dataset becomes more diverse in behavior policies, incorporating policy labels into learning becomes increasingly important.

\subsection{Can PPL be effectively applied to an online RLHF algorithm?}
\label{subsec:3}

%In the online setting, since rollouts are directly executed, access to policy labels is more straightforward. Leveraging this advantage, we conducted experiments to evaluate whether PPL can effectively learn as an online DPO algorithm.
%The experiments were performed from scratch, without any pre-trained models. Unlike in the offline setting, we did not incorporate an asymmetric regularizer, as there was no need to mitigate out-of-distribution issues. The training of the PPL algorithm online follows an iterative process that involves rolling out policies within the environment, generating preference queries and label data from the resulting rollout data, and updating the policy based on the generated preference data. Specifically, we adopt a query generation strategy that ensures segments sampled from trajectories produced by the most recent policy are included in the preference queries. Detailed implementation aspects, such as sampling intervals and the number of gradient updates, are elaborated in Appendix D.

In the online setting, rollouts are directly executed, providing explicit access to policy labels. Leveraging this advantage, we conducted experiments to evaluate whether PPL can effectively serve as an online DPO algorithm. The experiments were conducted \textit{from scratch}, without any pretraining. Unlike the offline setting, we did not apply the asymmetric regularizer in Eq.~\ref{Eq: lambda}, as out-of-distribution issues were mitigated by the iterative data collection process.
We used PEBBLE \citep{lee2021pebble} as an oracle because it employs a learned policy trained with unsupervised pretraining, which accelerates learning. Further implementation details are provided in Appendix \ref{subsec: online implementation}.

% We used PEBBLE \citep{lee2021pebble} as a baseline and treated the variant with unsupervised pretraining as an oracle to ensure a fair comparison.

% In the online setting, PPL is trained through an iterative process: (1) rolling out policies in the environment, (2) generating preference queries and policy labels from collected rollout data, and (3) updating the policy based on the generated preferences. To enhance sample efficiency, we adopt a query generation strategy that prioritizes segments from trajectories produced by the most recent policy. Further implementation details are provided in Appendix \ref{subsec: online implementation}.

\begin{figure}[h!]
    \centering
    \vspace{-10pt}
    \includegraphics[width= 0.85\linewidth]{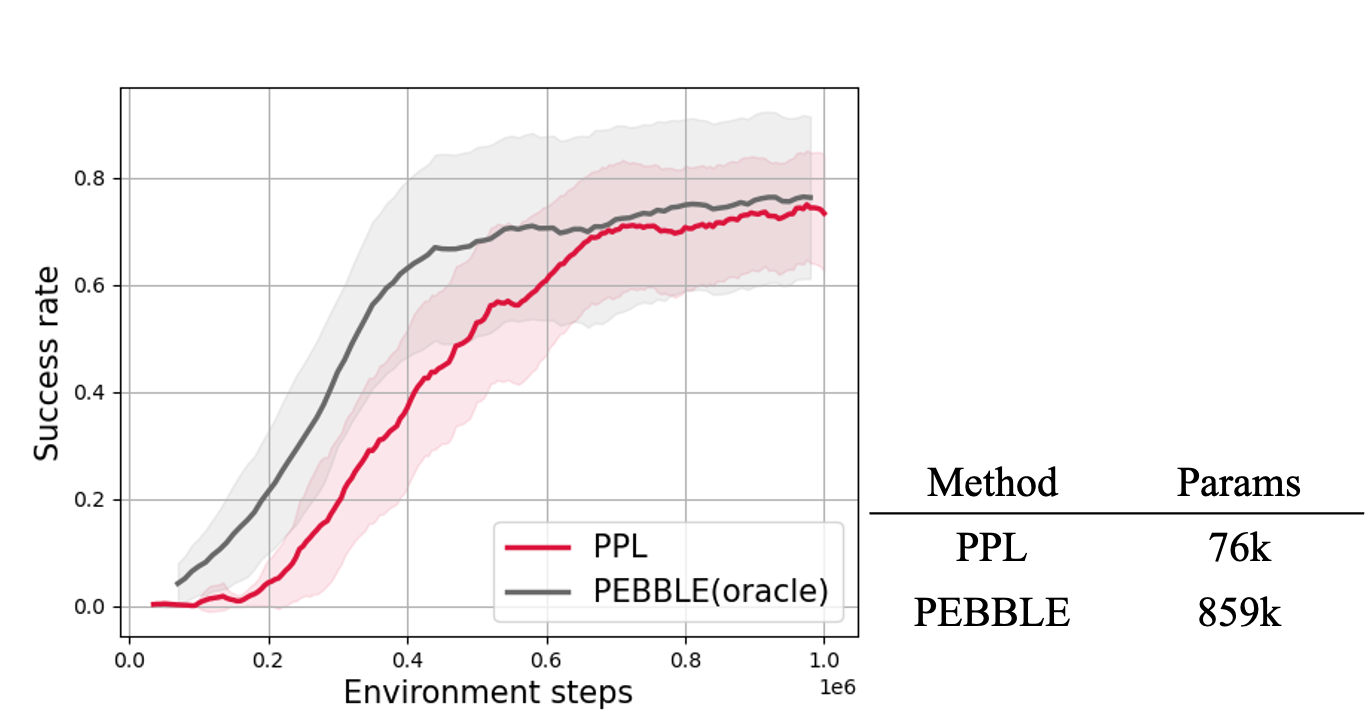}
    \vspace{-10pt}
    \caption{Online learning curves across five MetaWorld tasks, comparing PPL and PEBBLE.}
    \label{fig: online}
% \vspace{-10pt}
\end{figure}

Figure \ref{fig: online} illustrates the average success rates across five MetaWorld tasks.
Notably, despite learning from scratch, the online version of PPL achieves performance comparable to PEBBLE, which leverages unsupervised pretraining.
Furthermore, since PPL does not require learning a reward model or a critic, it uses only 8.8\% of the parameters compared to PEBBLE, yet still achieves comparable performance. 
This demonstrates that PPL can serve as a highly efficient online RLHF algorithm.

\section{Conclusions}

In this work, we introduced PPL, a novel DPO framework that incorporates information from the behavior policy through regret-based modeling. We highlighted the issue of likelihood mismatch and addressed it by proposing contrastive KL regularization. Furthermore, we theoretically established that minimizing regret is fundamentally equivalent to optimizing the forward KL-constrained RLHF problem.
Empirically, PPL demonstrated strong performance across offline datasets containing rollouts from diverse policies, showcasing its robustness to dataset variations. In online setting, policy labels can be obtained more easily than in the offline case, and PPL effectively learned as an online DPO algorithm. However, we observed that online RLHF method is quite sensitive to the sampling of queries from preference data, suggesting that a more refined analysis is needed for future research.

\section*{Acknowledgements}
We would like to thank LG AI Research (Youngsoo Jang, Geonhyeong Kim, Yujin Kim, and Moontae Lee) for their valuable feedback and for providing GPU resources that supported parts of this research. 
We are grateful to Joey Hejna for sharing implementation details regarding CPL.

This work is in part supported by the National Research Foundation of Korea (NRF, RS-2024-00451435(15\%), RS-2024-00413957(15\%), RS-2023-00211357(10\%)), Institute of Information \& communications Technology Planning \& Evaluation (IITP, RS-2021-II212068(10\%), RS-2025-02305453(15\%), RS-2025-02273157(15\%), 2021-0-00180(10\%), Artificial Intelligence Graduate School Program (Seoul National University)(10\%)) grant funded by the Ministry of Science and ICT (MSIT), Institute of New Media and Communications(INMAC), and the BK21 FOUR program of the Education and Research Program for Future ICT Pioneers, Seoul National University in 2025.

\section*{Impact Statement}

This paper presents work whose goal is to advance the field of Machine Learning. There are many potential societal consequences of our work, none which we feel must be specifically highlighted here.

%%%%%%%%%%%%%%%%%%%%%%%%%%%%%%%%%%%%%%%%%%%%%%%%%%%%%%%%%%%%
\nocite{*}
\newpage 
\bibliography{PPL_icml}
\bibliographystyle{icml2025}

%%%%%%%%%%%%%%%%%%%%%%%%%%%%%%%%%%%%%%%%%%%%%%%%%%%%%%%%%%%%%%%%%%%%%%%%%%%%%%%
%%%%%%%%%%%%%%%%%%%%%%%%%%%%%%%%%%%%%%%%%%%%%%%%%%%%%%%%%%%%%%%%%%%%%%%%%%%%%%%
% APPENDIX
%%%%%%%%%%%%%%%%%%%%%%%%%%%%%%%%%%%%%%%%%%%%%%%%%%%%%%%%%%%%%%%%%%%%%%%%%%%%%%%
%%%%%%%%%%%%%%%%%%%%%%%%%%%%%%%%%%%%%%%%%%%%%%%%%%%%%%%%%%%%%%%%%%%%%%%%%%%%%%%
\newpage
\appendix
\onecolumn

\newtheorem*{relemma}{Lemma}
\newtheorem*{retheorem}{Theorem}
\newtheorem*{recorollary}{Corollary}

\section{Main Proof}

\begin{relemma}[\ref{lemma: 121 correspendence}]
    \text{(Structural Condition for $\alpha$-optimality)}
    A reward function and a soft optimal $Q$-function where $\pi^*(\cdot | s)$ is $\alpha$-optimal have a one-to-one correspondence with a state-dependent function $\beta: \mathcal{S} \rightarrow \mathbb{R}$ as follows,
    \begin{align*}
    \mathcal{R}_{\alpha,\pi^*} & = \{ r_*(s,a) = \alpha  \log \pi^*(a|s) + \beta(s) - \gamma \EE_{\Prob}[\beta(s')] , \ \forall s \in \mathcal{S}, a \in \mathcal{A}  |\ \alpha \geq 0, \beta : \mathcal{S} \rightarrow \mathbb{R}  \}
    \\  \mathcal{Q}^{\pi^*}_{\alpha,\pi^*} & = \{ Q^{\pi^*}_* (s,a) = \alpha \log \pi^*(a|s) + \beta(s), \ \forall s \in \mathcal{S}, a \in \mathcal{A}  |\ \alpha \geq 0, \beta : \mathcal{S} \rightarrow \mathbb{R}  \}
    \end{align*}
\end{relemma}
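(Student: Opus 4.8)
The plan is to prove the two set equalities by establishing both inclusions, using only the Ziebart relations recalled in the preliminaries — namely $\pi^*(a|s) = \exp(\alpha^{-1}(Q^{\pi^*}(s,a) - V^{\pi^*}(s)))$ and $V^{\pi^*}(s) = \alpha \log \int_{\mathcal{A}} \exp(\alpha^{-1} Q^{\pi^*}(s,a))\,da$ — together with the soft Bellman equation $Q^{\pi^*}(s,a) = r(s,a) + \gamma \EE_{\Prob}[V^{\pi^*}(s')]$ and the normalization $\int_{\mathcal{A}} \pi^*(a|s)\,da = 1$.

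\emph{Forward inclusion.} First I would take any $r \in \mathcal{R}_{\alpha,\pi^*}$, i.e. a reward for which $\pi^*$ is $\alpha$-optimal, and let $Q^{\pi^*}, V^{\pi^*}$ be the associated soft optimal functions (these are bounded since $r$ is bounded and $\gamma < 1$). Setting $\beta(s) := V^{\pi^*}(s)$, the optimal-policy identity rearranges to $Q^{\pi^*}(s,a) = \alpha \log \pi^*(a|s) + \beta(s)$, which is the claimed form for $\mathcal{Q}^{\pi^*}_{\alpha,\pi^*}$; substituting this into the soft Bellman equation gives $r(s,a) = \alpha \log \pi^*(a|s) + \beta(s) - \gamma \EE_{\Prob}[\beta(s')]$, the claimed form for $\mathcal{R}_{\alpha,\pi^*}$.

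\emph{Reverse inclusion.} Conversely, I would fix any bounded $\beta : \mathcal{S} \to \mathbb{R}$ and $\alpha > 0$, define $r(s,a) := \alpha \log \pi^*(a|s) + \beta(s) - \gamma \EE_{\Prob}[\beta(s')]$, and propose the candidate pair $\widehat{Q}(s,a) := \alpha \log \pi^*(a|s) + \beta(s)$, $\widehat{V}(s) := \beta(s)$. Using $\int_{\mathcal{A}} \pi^*(a|s)\,da = 1$, one checks $\alpha \log \int_{\mathcal{A}} \exp(\alpha^{-1}\widehat{Q}(s,a))\,da = \beta(s) = \widehat{V}(s)$, and directly $r(s,a) + \gamma \EE_{\Prob}[\widehat{V}(s')] = \widehat{Q}(s,a)$, so $(\widehat{Q},\widehat{V})$ solves the soft Bellman optimality system. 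Since that operator is a $\gamma$-contraction on bounded functions, $(\widehat{Q},\widehat{V})$ is its unique fixed point, hence $Q^{\pi^*}_* = \widehat{Q}$ and $V^{\pi^*}_* = \widehat{V}$; the induced optimal policy is then $\exp(\alpha^{-1}(\widehat{Q}(s,a) - \widehat{V}(s))) = \pi^*(a|s)$, so $\pi^*$ is indeed $\alpha$-optimal for $r$, giving $r \in \mathcal{R}_{\alpha,\pi^*}$ and $\widehat{Q} \in \mathcal{Q}^{\pi^*}_{\alpha,\pi^*}$. Bijectivity of $\beta \mapsto r$ (and $\beta \mapsto \widehat{Q}$) follows because the forward direction forces any $\beta$ realizing a given $r$ to equal $V^{\pi^*}$ under $r$, which is uniquely determined.

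\emph{Main obstacle.} I expect the crux to be the reverse inclusion: one must verify that the explicitly constructed $\widehat{Q}$ is genuinely the \emph{soft optimal} $Q$-function rather than merely some soft $Q$-function consistent with $\pi^*$, which is exactly where the normalization $\int_{\mathcal{A}} \pi^*(a|s)\,da = 1$ and the uniqueness (contraction) of the soft Bellman optimality fixed point are used — and this normalization is precisely what pins the action-dependent term to be exactly $\alpha \log \pi^*(a|s)$, upgrading the surjection of \citet{rafailov2024direct} to a bijection. A secondary technical point is controlling boundedness/integrability of $\beta$ and the action integral over a continuous $\mathcal{A}$, which I would handle by restricting to bounded $\beta$ (consistent with bounded rewards) under the standard regularity assumptions that keep the soft value integral finite.
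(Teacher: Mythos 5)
Your proposal is correct, and its forward inclusion is essentially the paper's: both rearrange the Ziebart relation $\pi^*(a|s)=\exp(\alpha^{-1}(Q^{\pi^*}(s,a)-V^{\pi^*}(s)))$ to force $Q^{\pi^*}(s,a)=\alpha\log\pi^*(a|s)+\beta(s)$ (the paper phrases this via the partition-function proportionality $X(s,a)=d(s)\pi^*(a|s)$, while you simply take $\beta=V^{\pi^*}$), and then read off the reward form from the soft Bellman equation. Where you genuinely differ is the reverse inclusion: the paper defines $r$ from the candidate $Q$ and verifies $Q^{\pi^*}=\alpha\log\pi^*+\beta$ by recursively unrolling the soft return, relying on the cancellation of $\EE_{\pi^*}[\alpha\log\pi^*]$ against the entropy term so that the $\beta$-terms telescope; you instead check that $(\widehat{Q},\widehat{V})=(\alpha\log\pi^*+\beta,\ \beta)$ solves the soft Bellman optimality system (using $\int_{\mathcal{A}}\pi^*(a|s)\,da=1$ for the log-partition identity) and invoke the $\gamma$-contraction uniqueness of the optimality operator. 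Your route has the advantage of making the $\alpha$-optimality of $\pi^*$ under the constructed $r$ fully explicit, whereas the paper's telescoping computation establishes only the policy-evaluation identity and leaves implicit the step that a policy which is soft-greedy with respect to its own soft $Q$-function is optimal; your closing injectivity argument (any $\beta$ representing a given $r$ must coincide with the optimal value function under $r$) also addresses the bijection claim that the paper's proof does not spell out. The cost is the boundedness of $\beta$ and of $\alpha\log\pi^*$ needed for the contraction argument, which you flag; the paper's telescoping limit needs comparable regularity, so neither approach is more restrictive in substance.
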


\begin{proof}

    ($\pi^*$ is $\alpha$-optimal $\Longleftrightarrow Q^{\pi^*}_* (s,a) = \alpha \log \pi^*(a|s) + \beta(s)$ for some $\beta: \mathcal{S} \rightarrow \mathbb{R}.$)
    
    Remark that the policy $\pi^*$ is $\alpha$-optimal, if and only if there exists the optimal soft $Q$-function satisfies the following relation:
    \begin{align*}
        \pi^*(a|s) = \exp\Big( \frac{1}{\alpha}(Q^{\pi^*}(s,a) - V^{\pi^*}(s))
        \Big),
        \quad V^{\pi^*}(s) = \alpha \log \int_{a \in \mathcal{A}} \exp\Big( \frac{1}{\alpha} Q^{\pi^*}(s,a) da
        \Big).
    \end{align*}
    Since $V^{\pi^*}$ is merely a partition function, letting $X(s,a) = \exp\Big( \frac{1}{\alpha} Q^{\pi^*}(s,a) \Big)$, we can derive
    \begin{align*}
        \pi^*(a|s) = \frac{X(s,a)}{\int_{a \in \mathcal{A}}X(s,a) da } 
        & \Longleftrightarrow X(s,a) = d(s) \pi^*(a|s) \text{ for some }d:\mathcal{S} \rightarrow \mathbb{R} 
        \\ & \Longleftrightarrow Q^{\pi^*}(s,a) = \alpha \log \pi^*(a|s) + \beta(s) \text{ for some } \beta: \mathcal{S} \rightarrow \mathbb{R},
    \end{align*}
    where $\beta$ is defined as $\beta(s)=\log d(s)$.

    ($\pi^*$ is $\alpha$-optimal $Q^{\pi^*}_* (s,a) = \alpha \log \pi^*(a|s) + \beta(s)$ for some $\beta: \mathcal{S} \rightarrow \mathbb{R}.$)
    Using the soft Bellman equation, consider a reward function for any state-dependent function $\beta : \mathcal{S} \rightarrow \mathbb{R}$ and substitute the expression of $Q^{\pi^*}(s,a)$. Then, we have:
    \begin{align*}
        r(s,a) & \coloneqq Q^{\pi^*}(s,a) - \gamma \EE_{\Prob}[V^{\pi^*}(s')]
        \\ & = \alpha \Big( \log \pi^*(a|s) +  \beta(s) - \gamma \EE_{\Prob}[ \beta(s')] \Big)
    \end{align*}
     where $\pi^*$ and $\Prob$ are given.
     By the definition of optimal soft $Q$-function, we recursively substitute the soft Bellman equation and sum over timesteps:
     \begin{align*}
         Q^{\pi^*}(s,a) & = 
        r(s,a) + \EE_{\tau \sim \Prob^{\pi^*}}\Big[ \sum_{t>0}\gamma^t ( r(s_t,a_t) + \alpha \mathcal{H}^{\pi^*}(\cdot|s_t) ) \Big| s_0 =s, a_0 =a \Big]
        \\ & = \alpha \log \pi^*(a|s)  + \beta(s) -  \gamma \EE_{\Prob}[ \beta(s_1)] 
        \\ & \quad + \EE_{\tau \sim \Prob^{\pi^*}}\Big[ \sum_{t>0}\gamma^t 
        (  \beta(s_t) -  \gamma \EE_{\Prob}[ \beta(s_{t+1})] ) \Big| s_0 =s, a_0 =a \Big]
        \\ & = \alpha \log \pi^*(a|s)  + \beta(s) - \gamma^2 \EE_{\tau \sim \Prob^{\pi^*}}[ \beta(s_2)]
        \\ & \quad + \EE_{\tau \sim \Prob^{\pi^*}}\Big[ \sum_{t>1}\gamma^t ( \beta(s_t) -  \gamma \EE_{\Prob}[ \beta(s_{t+1})] ) \Big| s_0 =s, a_0 =a \Big]
        \\ & \qquad \qquad \vdots
        \\ & = \alpha \log \pi^*(a|s) +  \beta(s). 
     \end{align*}
     % Hence,
     % \begin{align*}
     %     \log d(s) 
     %     + \EE_{\tau \sim \Prob^{\pi^*}} \Big[\sum_{t>0} \gamma^t \log d(s_t) \Big| s_0 =s, a_0=a
     %     \Big] 
     %     =0
     % \end{align*}
\end{proof}

\begin{relemma}[\ref{lemma: Bellman pi equation}]
    (Unique Fixed Point of Soft Bellman $\pi$-operator)
    Let $\pi^*$ is $\alpha$-optimal. For a given policy $\pi$ and $Q$-function $Q^{\pi}_A \in \mathcal{Q}^\pi$ for any $(s,a) \in \mathcal{S} \times \mathcal{A}$, define the Bellman $\pi$-operator $\mathcal{T}^{\pi}_* : \mathcal{Q}^{\pi} \rightarrow \mathcal{Q}^{\pi} $ where
    \begin{align*}
        \mathcal{T}_*^{\pi} Q_A^{\pi}(s,a) \coloneqq Q^{\pi^*}_*(s,a) - \gamma \EE_{\Prob}\Big[ \alpha\Big( \mathcal{H}^{\pi^*}(\cdot| s') - \mathcal{H}^{\pi}(\cdot|s')\Big)
        \\ + \EE_{\pi^*}[Q_*^{\pi^*}(s',a')] - \EE_{\pi}[Q_A^{\pi}(s',a')] \Big].
    \end{align*}
    Then, $\mathcal{T}^{\pi}_*$ has a unique fixed point $Q^{\pi}_*$.
\end{relemma}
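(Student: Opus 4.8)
The plan is to invoke the Banach fixed-point theorem: I will show that $\mathcal{T}^{\pi}_*$ is a $\gamma$-contraction with respect to the supremum norm on the space of bounded functions on $\mathcal{S}\times\mathcal{A}$, and separately that $Q^{\pi}_*$ satisfies $\mathcal{T}^{\pi}_* Q^{\pi}_* = Q^{\pi}_*$; uniqueness of the fixed point is then automatic. Since rewards lie in $[r_{\min},r_{\max}]$ and $\gamma<1$, every soft $Q$-function — hence every member of $\mathcal{Q}^{\pi}$ on which the operator acts — is bounded, so the relevant metric space is complete and the theorem applies.

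For the contraction estimate, I would observe that $\mathcal{T}^{\pi}_* Q_A^{\pi}(s,a)$ depends on its argument $Q_A^{\pi}$ only through the single term $+\gamma\,\EE_{\Prob}\big[\EE_{\pi}[Q_A^{\pi}(s',a')]\big]$ (the sign becomes positive after distributing the outer $-\gamma$ over the bracket); the remaining terms $Q^{\pi^*}_*(s,a)$, the entropy differences, and $\EE_{\pi^*}[Q_*^{\pi^*}(s',a')]$ are all independent of $Q_A^{\pi}$ and are fixed quantities determined by $(\alpha,\pi^*)$ via Lemma~\ref{lemma: 121 correspendence}. Consequently, for any two candidates $Q_1,Q_2$,
\[
\big(\mathcal{T}^{\pi}_* Q_1 - \mathcal{T}^{\pi}_* Q_2\big)(s,a) = \gamma\,\EE_{\Prob}\Big[\EE_{\pi}\big[Q_1(s',a') - Q_2(s',a')\big]\Big],
\]
and taking absolute values followed by the supremum over $(s,a)$ yields $\big\|\mathcal{T}^{\pi}_* Q_1 - \mathcal{T}^{\pi}_* Q_2\big\|_{\infty} \le \gamma\,\|Q_1 - Q_2\|_{\infty}$ with $\gamma<1$.

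To identify the fixed point, I would write the soft Bellman equations for $\pi$ and for $\pi^*$ under the common ground-truth reward $r_*\in\mathcal{R}_{\alpha,\pi^*}$: namely $Q^{\pi}_*(s,a) = r_*(s,a) + \gamma\,\EE_{\Prob}[V^{\pi}_*(s')]$ and $Q^{\pi^*}_*(s,a) = r_*(s,a) + \gamma\,\EE_{\Prob}[V^{\pi^*}_*(s')]$. Subtracting eliminates $r_*$ and gives $Q^{\pi}_*(s,a) = Q^{\pi^*}_*(s,a) - \gamma\,\EE_{\Prob}\big[V^{\pi^*}_*(s') - V^{\pi}_*(s')\big]$. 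Substituting the MaxEnt value decomposition $V^{\mu}_*(s') = \EE_{a'\sim\mu}[Q^{\mu}_*(s',a')] + \alpha\,\mathcal{H}^{\mu}(\cdot|s')$ for $\mu=\pi^*$ and $\mu=\pi$ turns the bracket into exactly $\alpha\big(\mathcal{H}^{\pi^*}(\cdot|s') - \mathcal{H}^{\pi}(\cdot|s')\big) + \EE_{\pi^*}[Q^{\pi^*}_*(s',a')] - \EE_{\pi}[Q^{\pi}_*(s',a')]$, so the right-hand side is precisely $\mathcal{T}^{\pi}_* Q^{\pi}_*(s,a)$. Hence $Q^{\pi}_*$ is a fixed point, and the contraction property forces it to be the only one.

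The main obstacle is not the algebra but the bookkeeping around the function space and the self-map requirement: I need either to verify that $\mathcal{T}^{\pi}_*$ genuinely maps $\mathcal{Q}^{\pi}$ into itself, or — more cleanly — to run the contraction argument on the complete Banach space of all bounded functions on $\mathcal{S}\times\mathcal{A}$ and then note that its unique fixed point lies in $\mathcal{Q}^{\pi}$ because $Q^{\pi}_*$ does. A secondary point worth stating explicitly is that $\mathcal{T}^{\pi}_*$ is well-defined \emph{without} reference to $r_*$, since $Q^{\pi^*}_*$ and $\mathcal{H}^{\pi^*}$ are pinned down by $(\alpha,\pi^*)$ through Lemma~\ref{lemma: 121 correspendence}; this is what makes the resulting characterization of $Q^{\pi}_*$ reward-free and thus useful for the subsequent Policy Deviation Theorem.
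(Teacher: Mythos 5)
Your proposal is correct and follows essentially the same route as the paper: establish that $\mathcal{T}^{\pi}_*$ is a $\gamma$-contraction in the sup norm (since the argument enters only through $\gamma\,\EE_{\Prob}\EE_{\pi}[Q_A^{\pi}]$), invoke Banach's fixed-point theorem, and verify that $Q^{\pi}_*$ is a fixed point by combining the soft Bellman equations for $\pi$ and $\pi^*$ under the common reward $r_*$ so that $r_*$ cancels. Your extra remark on the self-map/function-space bookkeeping is a minor refinement of a point the paper passes over, not a different argument.
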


\begin{proof}
    Consider $Q_A^{\pi}, Q_B^{\pi} \in  \mathcal{Q}^{\pi}$. Then
    \begin{align*}
        \sup_{s,a} \Big| \mathcal{T}_*^{\pi} Q_A^{\pi}(s,a) - \mathcal{T}_*^{\pi} Q_B^{\pi}(s,a) \Big|
        & \leq \sup_{s,a} \Big| \gamma \EE_\Prob \Big[ \EE_{\pi}[Q_A^{\pi}(s',a')] - \EE_{\pi}[Q_B^{\pi}(s',a')] \Big] \Big|
        \\ & = \gamma \sup_{s',a'} \Big| Q_A^{\pi}(s',a') - Q_B^{\pi}(s',a') \Big|
    \end{align*}
    Hence, $\mathcal{T}^{\pi}_*$ is a $\gamma$-contraction for any $Q_A^{\pi}, Q_B^{\pi} \in \mathcal{Q}^{\pi}$.
    Since $\mathcal{Q}^{\pi}$ is a complete metric space, by using Banach fixed point theorem, $\mathcal{T}^{\pi}_*$ has a unique fixed point.
    
    Notice that $Q^{\pi^*}_*$ and $Q^{\pi}_*$ satisfies soft Bellman equation respectively, i.e.,
    \begin{align*}
        &Q^{\pi^*}_*(s,a) 
        = \EE_\Prob \Big[ r_*(s,a) + \gamma \EE_{\pi^*} [Q^{\pi^*}_*(s',a') + \alpha \mathcal{H}^{\pi^*}(\cdot| s')] \Big],
        \\ &Q^{\pi}_*(s,a) 
        = \EE_\Prob \Big[ r_*(s,a) + \gamma \EE_{\pi} [Q^{\pi}_*(s',a') + \alpha \mathcal{H}^{\pi}(\cdot| s')] \Big]
        \quad \forall (s,a) \in \mathcal{S}\times\mathcal{A}.
    \end{align*}

    Then,
    \begin{align*}
        & \mathcal{T}^{\pi}_* Q^{\pi}_*(s,a) 
        \\ & = Q^{\pi^*}_*(s,a) - \gamma \EE_{\Prob}\left[ \alpha\Big( \mathcal{H}^{\pi^*}(\cdot| s') - \mathcal{H}^{\pi}(\cdot|s')\Big) + \EE_{\pi^*}[Q_*^{\pi^*}(s',a')] - \EE_{\pi}[Q_*^{\pi}(s',a')] \right]
        \\ & = Q^{\pi^*}_*(s,a) - \gamma \EE_{\Prob}\left[ \alpha \mathcal{H}^{\pi^*}(\cdot| s')  + \EE_{\pi^*}[Q_*^{\pi^*}(s',a')] - \Big( \alpha \mathcal{H}^{\pi}(\cdot|s') + \EE_{\pi}[Q_*^{\pi}(s',a')] \Big) \right]
        \\ & = \EE_\Prob \Big[ r_*(s,a) + \gamma \EE_{\pi} [Q^{\pi}_*(s',a') + \alpha \mathcal{H}^{\pi}(\cdot| s')] \Big]
        \\ & = Q^{\pi}_*(s,a) \quad \forall (s,a) \in \mathcal{S}\times\mathcal{A}.
    \end{align*}
    
    Hence, $Q^{\pi}_*$ is a unique fixed point of $\mathcal{T}^{\pi}_*$.
\end{proof}

\begin{retheorem}[\ref{theorem: Difference}]
If a policy $\pi^*$ is $\alpha$-optimal, then for any policy $\pi$,
\begin{align*}
    Q^{\pi^*}_{*}(s,a) - Q^{\pi}_{*}(s,a) = \alpha \bar{D}_{KL}( \pi || \pi^*;s,a)
\end{align*}
    where the \textbf{sequential forward KL divergence} is defined as
\begin{align*}
    \bar{D}_{KL}(\pi || \pi' ; s,a) 
    \coloneqq \EE_{\tau \sim \Prob^{\pi}_{s,a}}
    \Big[\sum_{l>0} \gamma^l {D_{KL}(\pi(\cdot|s_l)||\pi'(\cdot |s_l)) \Big]}.
\end{align*}
    Here, $\Prob^{\pi}_{s, a}$ is the distribution of trajectories $\tau = (s_0, a_0, \cdots, s_l, a_l , \cdots)$ generated by policy $\pi$ and the transition $\Prob$, starting at $(s_0,a_0)= (s, a)$.
\end{retheorem}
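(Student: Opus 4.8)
The plan is to exploit the two preceding lemmas directly. By Lemma~\ref{lemma: 121 correspendence}, since $\pi^*$ is $\alpha$-optimal, the ground-truth optimal soft $Q$-function admits the closed form $Q^{\pi^*}_*(s,a) = \alpha \log \pi^*(a|s) + \beta(s)$ for the relevant state-dependent function $\beta$, and hence $\EE_{\pi^*}[Q^{\pi^*}_*(s',a')] + \alpha \mathcal{H}^{\pi^*}(\cdot|s') = \alpha \EE_{\pi^*}[\log \pi^*(a'|s')] + \beta(s') - \alpha \EE_{\pi^*}[\log \pi^*(a'|s')] = \beta(s')$ — wait, more carefully: $V^{\pi^*}(s') = \EE_{\pi^*}[Q^{\pi^*}_* - \alpha\log\pi^*] = \beta(s')$. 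The key move is then to apply Lemma~\ref{lemma: Bellman pi equation}: $Q^{\pi}_*$ is the unique fixed point of $\mathcal{T}^{\pi}_*$, so I would plug the candidate expression $Q^{\pi}_*(s,a) = Q^{\pi^*}_*(s,a) - \alpha \bar{D}_{KL}(\pi\|\pi^*;s,a)$ into $\mathcal{T}^{\pi}_*$ and verify it is a fixed point; uniqueness then forces equality.

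Concretely, the steps are: (i) Define $G(s,a) := Q^{\pi^*}_*(s,a) - \alpha\bar{D}_{KL}(\pi\|\pi^*;s,a)$ and show $G \in \mathcal{Q}^{\pi}$ (it is the $Q$-function of $\pi$ under the same reward class, so this is consistency bookkeeping). (ii) Expand $\bar{D}_{KL}(\pi\|\pi^*;s,a)$ via its recursive/telescoping structure: since the rollout starts at $(s,a)$, the $l=1$ term $\gamma D_{KL}(\pi(\cdot|s_1)\|\pi^*(\cdot|s_1))$ is pulled out and the remaining sum from $l\geq 2$, reindexed, equals $\gamma\,\EE_{s'\sim\Prob,\,a'\sim\pi}[\bar{D}_{KL}(\pi\|\pi^*;s',a')]$. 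This gives the identity
\begin{align*}
\bar{D}_{KL}(\pi\|\pi^*;s,a) = \gamma\,\EE_{\Prob}\big[ D_{KL}(\pi(\cdot|s')\|\pi^*(\cdot|s')) + \EE_{\pi}[\bar{D}_{KL}(\pi\|\pi^*;s',a')] \big].
\end{align*}
(iii) Compute $\mathcal{T}^{\pi}_* G(s,a)$, substituting $G(s',a') = Q^{\pi^*}_*(s',a') - \alpha\bar{D}_{KL}(\pi\|\pi^*;s',a')$ into the $\EE_{\pi}[Q^{\pi}_A(s',a')]$ slot; then $\EE_{\pi^*}[Q^{\pi^*}_*(s',a')] - \EE_{\pi}[G(s',a')] = \EE_{\pi^*}[Q^{\pi^*}_*(s',a')] - \EE_{\pi}[Q^{\pi^*}_*(s',a')] + \alpha\EE_{\pi}[\bar{D}_{KL}(\pi\|\pi^*;s',a')]$.

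The bookkeeping to watch: the $\mathcal{T}^{\pi}_*$ definition also carries the entropy difference $\alpha(\mathcal{H}^{\pi^*}(\cdot|s') - \mathcal{H}^{\pi}(\cdot|s'))$. Combining this with $\EE_{\pi^*}[Q^{\pi^*}_*(s',a')] - \EE_{\pi}[Q^{\pi^*}_*(s',a')]$ and using the closed form $Q^{\pi^*}_*(s',a') = \alpha\log\pi^*(a'|s') + \beta(s')$, the $\beta(s')$ terms cancel between the two expectations, and one is left with $\alpha\EE_{\pi^*}[\log\pi^*(a'|s')] - \alpha\EE_{\pi}[\log\pi^*(a'|s')] + \alpha\mathcal{H}^{\pi^*}(\cdot|s') - \alpha\mathcal{H}^{\pi}(\cdot|s')$. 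Since $\mathcal{H}^{\pi^*}(\cdot|s') = -\EE_{\pi^*}[\log\pi^*(a'|s')]$ and $\mathcal{H}^{\pi}(\cdot|s') = -\EE_{\pi}[\log\pi(a'|s')]$, the first and third terms cancel, leaving $-\alpha\EE_{\pi}[\log\pi^*(a'|s')] + \alpha\EE_{\pi}[\log\pi(a'|s')] = \alpha\,D_{KL}(\pi(\cdot|s')\|\pi^*(\cdot|s'))$. Adding back the $+\alpha\EE_{\pi}[\bar{D}_{KL}(\pi\|\pi^*;s',a')]$ from step (iii), the bracket inside $-\gamma\EE_{\Prob}[\cdot]$ becomes exactly $-(D_{KL}(\pi\|\pi^*;s') + \EE_{\pi}[\bar{D}_{KL}(\pi\|\pi^*;s',a')])$, so $\mathcal{T}^{\pi}_* G(s,a) = Q^{\pi^*}_*(s,a) - \alpha\bar{D}_{KL}(\pi\|\pi^*;s,a) = G(s,a)$ by the recursion in (ii). Uniqueness of the fixed point (Lemma~\ref{lemma: Bellman pi equation}) then gives $Q^{\pi}_* = G$, i.e. $Q^{\pi^*}_* - Q^{\pi}_* = \alpha\bar{D}_{KL}(\pi\|\pi^*;s,a)$.

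\textbf{Main obstacle.} The genuinely delicate part is the sign and index handling in the telescoping identity for $\bar{D}_{KL}$ — keeping straight that the divergence is evaluated along a $\pi$-rollout (not $\pi^*$), that the starting action $a$ contributes nothing (the sum is over $l>0$), and that the entropy terms in $\mathcal{T}^{\pi}_*$ conspire with the $\EE_{\pi^*}[Q^{\pi^*}_*] - \EE_{\pi}[Q^{\pi^*}_*]$ difference to reproduce precisely $\alpha D_{KL}(\pi\|\pi^*;s')$ with the correct forward orientation. Once that algebra is pinned down, the contraction/fixed-point machinery does the rest essentially for free. A secondary point worth a sentence is confirming $G \in \mathcal{Q}^{\pi}$ so that $\mathcal{T}^{\pi}_*$ is even applicable to it; this follows because $G$ differs from $Q^{\pi^*}_* \in \mathcal{Q}^{\pi^*}_{\alpha,\pi^*}$ by a quantity depending only on $\pi, \pi^*, \Prob$, hence lands in the same reward-equivalence-induced class $\mathcal{Q}^{\pi}$.
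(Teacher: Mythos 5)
Your proposal is correct and follows essentially the same route as the paper's own proof: you define the candidate $G = Q^{\pi^*}_* - \alpha\bar{D}_{KL}(\pi\|\pi^*;\cdot,\cdot)$ (the paper's $\tilde{Q}^{\pi}_*$), verify it is a fixed point of $\mathcal{T}^{\pi}_*$ by expanding with the Lemma~\ref{lemma: 121 correspendence} closed form $Q^{\pi^*}_* = \alpha\log\pi^* + \beta$ so that the $\beta$ and entropy terms cancel to yield $\alpha D_{KL}(\pi\|\pi^*;s')$, and then invoke the uniqueness of the fixed point from Lemma~\ref{lemma: Bellman pi equation}. Your telescoping recursion for $\bar{D}_{KL}$ and the sign/entropy bookkeeping are exactly the algebra carried out in the paper's proof, so the argument goes through as written.
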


\begin{proof}
    % \begin{align*}
    %     &Q^{\pi^*}_*(s,a) - Q^{\pi}_*(s,a) 
    %     \\& = \gamma \EE_{s' \sim P}\Big[ \alpha(s') (\mathcal{H}^{\pi^*}(\cdot|s') - \mathcal{H}^{\pi}(\cdot|s'))
    %     + \EE_{a' \sim \pi^*}[Q^{\pi^*}_*(s',a')] - \EE_{a' \sim \pi}[Q^{\pi}_*(s',a')]
    %     \Big]
    %     % \\& = - \gamma \EE_{s' \sim P}\Big[ \alpha(s')\mathcal{H}^{\pi}(\cdot|s') -\beta(s') \Big]
    % \end{align*}
    
    % \begin{align*}
    %     \| \mathcal{T}^{\pi}_* Q^{\pi}_A -\mathcal{T}^{\pi}_* Q^{\pi}_B \| 
    %     = \gamma \| \EE_{s' \sim P, a' \sim \pi}[Q^{\pi}_A - Q^{\pi}_B] \|
    %     \leq \gamma \| Q^{\pi}_A - Q^{\pi}_B \|.
    % \end{align*}
    
    % Let $Q_{k+1}^{\pi} = \mathcal{T}_*^{\pi}Q^{\pi}_k$. Then we can compute
    % \begin{align*}
    %     Q^{\pi}_* = \lim_{N \rightarrow \infty} (\mathcal{T}^{\pi}_*)^N Q^{\pi}_0.
    % \end{align*}
    Let $\tilde{Q}^{\pi}_*(s,a) = Q^{\pi^*}_*(s,a) - \alpha \sum_{t>0} \gamma^t \EE_{\tau \sim \Prob^\pi}\Big[ 
        D_{KL}(\pi(\cdot|s_t) ||\pi^*(\cdot|s_t) ) \Big| s_0 = s, a_0 = a \Big]$ for all $(s,a) \in \mathcal{S}\times \mathcal{A}$. Then 
    \begin{align*}
        & \mathcal{T}^{\pi}_* \tilde{Q}^{\pi}_*(s,a)
        \\ &= Q^{\pi^*}_*(s,a) - \gamma \EE_{\Prob}\left[ \alpha\Big( \mathcal{H}^{\pi^*}(\cdot| s') - \mathcal{H}^{\pi}(\cdot|s')\Big) + \EE_{\pi^*}[Q_*^{\pi^*}(s',a')] - \EE_{\pi}[\tilde{Q}_*^{\pi}(s',a')] \right]
        \\ &= Q^{\pi^*}_*(s,a) - \gamma \EE_{\Prob}\bigg[ \alpha\Big( \mathcal{H}^{\pi^*}(\cdot| s') - \mathcal{H}^{\pi}(\cdot|s')\Big) + \EE_{\pi^*}[\alpha \log \pi^*(s',a') + \beta(s')] 
        \\ & \quad - \EE_{\pi}\Big[ Q^{\pi^*}_*(s',a') - \alpha \sum_{t>0} \gamma^t \EE_{\tau \sim \Prob^\pi}\big[ 
        D_{KL}(\pi(\cdot|s_t) ||\pi^*(\cdot|s_t) )\big] \Big| s_0 = s', a_0 = a'
        \Big]  \bigg]
        \\ & = Q^{\pi^*}_*(s,a) - \gamma \EE_\Prob \bigg[ \beta(s') - \alpha \mathcal{H}^{\pi}(\cdot|s')
        - \EE_{\pi}[Q^{\pi^*}_* (s',a')]
        \\ & \quad + \alpha \EE_{\pi}\Big[  \sum_{t>0} \gamma^t \EE_{\tau \sim \Prob^\pi}\big[ 
        D_{KL}(\pi(\cdot|s_t) ||\pi^*(\cdot|s_t) )\big] \Big| s_0 = s', a_0 = a'
        \Big]\bigg]
        \\ & = Q^{\pi^*}_*(s,a) - \alpha \gamma \EE_\Prob \Big[ D_{KL}(\pi(\cdot |s')|| \pi^*(\cdot| s'))
        \Big] - \alpha \sum_{t>1} \gamma^t \EE_{\tau \sim \Prob^\pi} \Big[ D_{KL}(\pi(\cdot |s_t)|| \pi^*(\cdot| s_t)) 
        \Big| s_0 = s, a_0 = a \Big]
        \\ & = Q^{\pi^*}_*(s,a) - \alpha \sum_{t>0} \gamma^t \EE_{\tau \sim \Prob^\pi}\Big[ 
        D_{KL}(\pi(\cdot|s_t) ||\pi^*(\cdot|s_t) ) \Big| s_0 = s, a_0 = a \Big]
        \\ & = \tilde{Q}^{\pi}_*(s,a)
    \end{align*}
    which implies that $\tilde{Q}^{\pi}_*$ is a unique fixed point of $\mathcal{T}^{\pi}_*$.
    In Lemma \ref{lemma: Bellman pi equation}, we observe that $\mathcal{T}^{\pi}_*$ has a unique fixed point $Q^{\pi}_{*}$.
    Hence, $$Q^{\pi}_*(s,a) = Q^{\pi^*}_*(s,a) - \alpha \sum_{t>0} \gamma^t \EE_{\tau \sim \Prob^{\pi}}\Big[ 
    D_{KL}(\pi(\cdot|s_t) ||\pi^*(\cdot|s_t) ) \Big| s_0 = s, a_0 = a \Big]$$
\end{proof}

\section{Further Theoretical Analysis \& Discussion}
\subsection{Mathematical derivation of PPL framework}
\label{subsec:derivation}
We recall the PPL model and objective:
\begin{align*}
    & P^{(\pi^+ , \pi^-)}_{\pi_\psi}[\zeta^+ \succ \zeta^-] 
    = \sigma\bigg( - \sum_{t \geq 0} \text{Reg}^{\pi^+}_{\pi_{\psi}}(s_t^+ , a_t^+) - \text{Reg}^{\pi^-}_{\pi_{\psi}}(s_t^- , a_t^-)
    \bigg),
\end{align*}
\begin{align*}
    \mathcal{L}_{\text{PPL}}(\pi_{\psi}; \mathcal{D} )
    % = - \EE_{(\sigma, \sigma',y,p) \sim (\mathcal{D}, \mathcal{E}) }\Big[ \log P_{\pi_\psi}^{(\pi, \pi')}[\sigma^+ > \sigma^-] \Big]
    = - \EE_{(\zeta^+, \zeta^-,y,p) \sim \mathcal{D} }\bigg[ \log \sigma \bigg( - \sum_{t \geq 0}  \text{Reg}^{\pi^+}_{\pi_{\psi}}(s_t^+, a_t^+) - \text{Reg}^{\pi^-}_{\pi_{\psi}}(s_t^-, a_t^-)
    \bigg) \bigg]
\end{align*}
where
    \begin{align*}
    -\text{Reg}_{\pi^*}^{\pi}(s_t, a_t) 
        & \coloneqq 
         -( V_*^{\pi^*}(s_t) - Q_*^{\pi}(s_t, a_t) ).
        % \\& =  \alpha \Big( \log \pi^*(a_t^{\sigma} | s_t^{\sigma})
        % + \mathcal{H}^{\pi^*}(\cdot | s_t^{\sigma})
        % - \EE_{\tau \sim \Prob^{\pi}_{(s_t^{\sigma}, a_t^{\sigma})}} \Big[ \sum_{l>0} \gamma^l  D_{\text{KL}}(\pi(\cdot | s_l) || \pi^*(\cdot | s_l) ) \Big]  \Big).     
    \end{align*}

Here, a negative regret at $(s_t,a_t)$ can be decomposed into two components:

\begin{align*}
    -\text{Reg}^{\pi}_{\pi^*}(s_t,a_t)
    = \alpha \Big( \underbrace{\log \pi^*(a_t^{} | s_t^{})}_{\text{increase likelihood}} 
    - \underbrace{ \EE_{\tau \sim \Prob^{\pi}_{s_t^{}, a_t^{}}} \Big[ \sum_{l>0} \gamma^l  D_{\text{KL}}(\pi(\cdot | s_l) || \pi^*(\cdot | s_l) ) \Big]}_{\text{decrease sequential forward KL divergence}}  \Big)
\end{align*}

\vspace{-10pt}
% \begin{proof}
%     By the definition of regret,
%     \begin{align*}
%         -\text{Reg}_{\pi^*}^{\pi}(s_t, a_t) 
%         & \coloneqq -( \EE_{\pi^*}[Q_*^{\pi^*}(s_t, a)] - Q_*^{\pi}(s_t, a_t) ).
%         \\ & = -\Big( \EE_{\pi^*}[\alpha \log \pi^*(\cdot|s_t) +\beta(s_t)]\Big) + Q^{\pi^*}_*(s_t,a_t) - \alpha    \bar{D}_{KL}(\pi||\pi^*;s_t,a_t )
%         \\ & = \mathcal{H}^{\pi^*}(\cdot |s_t) -\beta(s_t) + \alpha \log \pi^* (a_t |s_t ) + \beta(s_t) - \alpha    \bar{D}_{KL}(\pi||\pi^*;s_t,a_t )
%         \\ & = \alpha \Big( \log \pi^*(a_t|s_t) + \mathcal{H}^{\pi^*}(\cdot|s_t) - \bar{D}_{KL}(\pi || \pi^* ; s_t ,a_t) \Big)
%     \end{align*}
% \end{proof}

\begin{proof}
    By the definition of regret,
    \begin{align}
        -\text{Reg}_{\pi^*}^{\pi}(s_t, a_t)  \nonumber
        & \coloneqq -( V_*^{\pi^*}(s_t) - Q_*^{\pi}(s_t, a_t) ).
        \\ & = -\Big( \EE_{\pi^*}[Q^{\pi^*}_* (s_t,a) - \alpha \log \pi^*(\cdot|s_t)]\Big) + Q^{\pi^*}_*(s_t,a_t) - \alpha    \bar{D}_{KL}(\pi||\pi^*;s_t,a_t ) \nonumber
        \\ & =  \cancel{-\beta(s_t)} + \alpha \log \pi^* (a_t |s_t ) + \cancel{\beta(s_t)} - \alpha    \bar{D}_{KL}(\pi||\pi^*;s_t,a_t )
        \nonumber
        \\ & = \alpha \Big( \log \pi^*(a_t|s_t) - \bar{D}_{KL}(\pi || \pi^* ; s_t ,a_t) \Big)
        \label{eq: regret policy invariance}
    \end{align}
\end{proof}

\subsection{Regret is invariant under policy-invariant transformations (Corollary \ref{corollary})}
\label{subsec:regret_invariant}

As noted in Lemma \ref{lemma: 121 correspendence}, any policy-invariant transformation can be expressed as a combination of a state-dependent function $\beta(s)$ and a scaled log-likelihood term $\alpha \log \pi(a|s)$,  where $\alpha$ represents the temperature parameter in the MaxEnt framework. Specifically, for any transformation of the reward function that preserves the optimal policy, we can rewrite the modified reward as:  

$$r(s, a) =  \alpha \log \pi(a|s) + \beta(s).$$

This formulation extends the classical reward shaping result of \citet{ng1999policy} by explicitly incorporating the policy-dependent term $\alpha \log \pi(a|s)$, which accounts for transformations in the likelihood space. This insight allows us to generalize policy-invariant transformations and directly integrate them into preference-based learning objectives.  

Using this representation, we can reformulate the sequential DPO objective with a policy-invariant transformation as follows:  

\begin{align}
    \mathcal{L}_{\text{DPO($\beta$)}}(\pi_{\psi} ; \mathcal{D})
    & = -\EE_{\mathcal{D}}\Big[ \log \sigma \Big( \sum_{t \geq 0} \Big\{ \log\frac{\pi_{\psi}(a^+_t|s^+_t)}{\pi_{\text{ref}}(a^+_t|s^+_t)} + \beta(s_t^+) - \gamma \EE_{s'_t \sim \Prob(\cdot |s_t^+ ,a_t^+)}[\beta(s'_t)] \Big\} \nonumber
    \\& \qquad - \Big\{ \log\frac{\pi_{\psi}(a^-_t|s^-_t)}{\pi_{\text{ref}}(a^-_t|s^-_t)} + \beta(s_t^+) - \gamma \EE_{s'_t \sim \Prob(\cdot |s_t^- ,a_t^-)}[\beta(s'_t)]\Big\}
    \Big)\Big].
    \label{eq: policy invariant DPO}
\end{align}  

The existence of multiple objectives that preserve the optimal policy through reward shaping has been explored in previous work, particularly in the \textit{variance reduction} schemes of policy gradient methods. \citet{schulman2015high} introduced the \textit{generalized advantage estimate} (GAE) as a method to reduce the variance of policy gradient estimates, effectively selecting an appropriate $\beta(s)$ to improve stability and efficiency in learning. Similarly, in Equation \ref{eq: policy invariant DPO}, the standard DPO framework assumes $\beta(s) = 0$, but optimizing $\beta(s)$ to minimize the variance of gradient estimates could lead to more stable training.  

In contrast, as shown in Equation \ref{eq: regret policy invariance}, regret-based formulations naturally eliminate $\beta(s)$  by definition, avoiding the challenges associated with policy-invariant transformations. This property ensures that regret serves as a unique and well-defined objective function, making it inherently robust without requiring explicit variance reduction techniques.

\subsection{Reformulating the MaxEnt objective with negative regret as the reward (Theorem \ref{theorem: equivalence})}
\label{subsec: reformulation}

\begin{recorollary}[\ref{theorem: equivalence}]
Maximizing the MaxEnt objective with negative regret as the reward is equivalent to minimizing the sequential forward KL divergence between the learned policy and the behavior policy for each preferred state-action pair in the dataset, i.e.,
\begin{align}
    &\arg\max_{\pi_{\psi}}\Big( \EE_{\zeta^+\sim \mathcal{D}} [ -\text{Reg}^{\pi^+}_{\pi_{\psi}}(s^+,a^+) - \alpha \log \pi_{\psi}(a^+|s^+)]\Big) \nonumber
    \\& \quad \equiv \arg\min_{\pi_{\psi}}\Big( \EE_{\zeta^+ \sim \mathcal{D}}[\bar{D}_{KL}(\pi^+ || \pi_{\psi} ; s^+ ,a^+)]\Big).
\end{align}
\end{recorollary}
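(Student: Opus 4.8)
The plan is to collapse the left-hand objective into a negative constant multiple of the expected sequential forward KL divergence by invoking the Policy Deviation Theorem (Theorem~\ref{theorem: Difference}), after which the claimed equivalence of $\arg\max$ and $\arg\min$ is immediate because $\alpha$ is a fixed nonnegative scalar.

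First I would justify treating the learned policy $\pi_{\psi}$ as an $\alpha$-optimal policy. Since $\pi_{\psi}$ has full support, Lemma~\ref{lemma: 121 correspendence} shows it is $\alpha$-optimal for every reward in $\mathcal{R}_{\alpha,\pi_{\psi}}$, with $Q^{\pi_{\psi}}_{*}(s,a) = \alpha\log\pi_{\psi}(a|s) + \beta(s)$; moreover, by Corollary~\ref{corollary}, the regret $\text{Reg}^{\pi^+}_{\pi_{\psi}}$ — and hence everything below — is independent of the free state function $\beta(s)$, so the objective is well posed. Combining this closed form with the identity $V^{\pi_{\psi}}_{*}(s) = \EE_{a\sim\pi_{\psi}}[Q^{\pi_{\psi}}_{*}(s,a) - \alpha\log\pi_{\psi}(a|s)] = \beta(s)$ and with Theorem~\ref{theorem: Difference} applied to $\pi^* = \pi_{\psi}$, $\pi = \pi^+$ (which gives $Q^{\pi^+}_{*}(s,a) = Q^{\pi_{\psi}}_{*}(s,a) - \alpha\bar{D}_{KL}(\pi^+ || \pi_{\psi};s,a)$), the $\beta(s)$ terms cancel exactly as in Appendix~\ref{subsec:derivation} and I obtain
\begin{align*}
    -\text{Reg}^{\pi^+}_{\pi_{\psi}}(s,a) = \alpha\Big(\log\pi_{\psi}(a|s) - \bar{D}_{KL}(\pi^+ || \pi_{\psi};s,a)\Big).
\end{align*}

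Next I would substitute this into the left-hand side of the corollary. The term $\alpha\log\pi_{\psi}(a^+|s^+)$ produced by $-\text{Reg}^{\pi^+}_{\pi_{\psi}}(s^+,a^+)$ is cancelled by the explicit entropy bonus $-\alpha\log\pi_{\psi}(a^+|s^+)$, leaving
\begin{align*}
    \EE_{\zeta^+\sim\mathcal{D}}\Big[-\text{Reg}^{\pi^+}_{\pi_{\psi}}(s^+,a^+) - \alpha\log\pi_{\psi}(a^+|s^+)\Big] = -\alpha\,\EE_{\zeta^+\sim\mathcal{D}}\Big[\bar{D}_{KL}(\pi^+ || \pi_{\psi};s^+,a^+)\Big].
\end{align*}
Because $\alpha \ge 0$ is fixed, maximizing the left-hand expression over $\pi_{\psi}$ is identical to minimizing $\EE_{\zeta^+\sim\mathcal{D}}[\bar{D}_{KL}(\pi^+ || \pi_{\psi};s^+,a^+)]$ over $\pi_{\psi}$, which is the asserted equivalence (the case $\alpha = 0$ being degenerate).

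The arithmetic here is routine; the only delicate point is the first step, namely making precise in what sense $\pi_{\psi}$ is ``optimal'' so that $Q^{\pi^+}_{*}$, $V^{\pi_{\psi}}_{*}$, and therefore $\text{Reg}^{\pi^+}_{\pi_{\psi}}$ are meaningful, and confirming their independence of $\beta(s)$ — both handled by Lemma~\ref{lemma: 121 correspendence} and Corollary~\ref{corollary}. I would close by remarking that the left-hand functional is exactly the MaxEnt objective with per-step reward $-\text{Reg}^{\pi^+}_{\pi_{\psi}}$ evaluated on the empirical distribution $\mathcal{D}$, with $-\alpha\log\pi_{\psi}(a^+|s^+)$ standing in for the entropy bonus at the realized action; this is the sense in which the corollary's phrase ``maximizing the MaxEnt objective with negative regret as the reward'' should be read.
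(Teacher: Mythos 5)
Your proposal is correct and follows essentially the same route as the paper's proof: expand $-\text{Reg}^{\pi^+}_{\pi_{\psi}}$ via the decomposition $\alpha(\log\pi_{\psi}(a^+|s^+) - \bar{D}_{KL}(\pi^+\,\|\,\pi_{\psi};s^+,a^+))$ obtained from Lemma~\ref{lemma: 121 correspendence} and Theorem~\ref{theorem: Difference}, cancel the log-likelihood term against the entropy bonus, and flip the $\arg\max$ of $-\alpha\,\EE[\bar{D}_{KL}]$ into the stated $\arg\min$. You merely make explicit some steps the paper delegates to its Appendix derivation (the role of $\beta(s)$, applying the Policy Deviation Theorem with $\pi^*=\pi_{\psi}$, and the degenerate $\alpha=0$ case), which is fine.
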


\begin{proof}
    Consider a dataset $\mathcal{D}$ and a set of sampled preferred segments $\{\zeta^+_i\}_{i=1}^N$ which are generated by behavior policy $\pi^+_i$ respectively.
    To avoid notation ambiguity, we emphasize that the subscript $i$ in this proof denotes the index of each individual samples.
    When defining the reward function as the negative regret, the optimal policy of Maxent objective $\pi^*_{\text{Reg}}$ can be reformulated as:

% \begin{align*}
%     \pi^*_{reg} &\coloneqq \arg\max_{\pi_{\psi}}\Big(\EE_{\zeta^+ \sim \mathcal{D}}[-\text{Reg}^{\pi^+}_{\pi_{\psi}}(s^+,a^+) - \alpha \log \pi_{\psi}(a^+|s^+)] \Big)
%     \\ & = \arg \max_{\pi_{\psi}}\Big( \EE_{\zeta^+ \sim \mathcal{D}}[Q^{\pi^+}_{\pi_{\psi}}(s^+,a^+) -V^{\pi_{\psi}}_{\pi_\psi}(s^+) - \alpha \log \pi_{\psi}(a^+|s^+)]\Big)
%     \\ & = \arg \max_{\pi_{\psi}}\Big(\EE_{\zeta^+\sim \mathcal{D}}[\alpha \log \pi_{\psi}(a^+|s^+) - \alpha \bar{D}_{KL}(\pi || \pi_{\psi} ;s^+,a^+) - \alpha \log \pi_{\psi}(a^+|s^+)]\Big)
%     \\& = \arg \min_{\pi_{\psi}}\Big( \EE_{\zeta^+\sim \mathcal{D}}[  \bar{D}_{KL}(\pi^+ || \pi_{\psi} ;s^+,a^+)] \Big)
%     % \\ & = \arg \min _{\pi_{\psi}}
% \end{align*}

\begin{align*}
    \pi^*_{reg} &\coloneqq \arg\max_{\pi_{\psi}}\Big(\frac{1}{N}\sum_{i=1}^N\Big[-\text{Reg}^{\pi_i^+}_{\pi_{\psi}}(s_i^+,a_i^+) - \alpha \log \pi_{\psi}(a_i^+|s_i^+)\Big] \Big)
    \\ & = \arg \max_{\pi_{\psi}}\Big( \frac{1}{N}\sum_{i=1}^N\Big[Q^{\pi_i^+}_{\pi_{\psi}}(s_i^+,a_i^+) -V^{\pi_{\psi}}_{\pi_\psi}(s_i^+) - \alpha \log \pi_{\psi}(a_i^+|s_i^+)\Big]\Big)
    \\ & = \arg \max_{\pi_{\psi}}\Big(\frac{1}{N}\sum_{i=1}^N\Big[\alpha \log \pi_{\psi}(a_i^+|s_i^+) - \alpha \bar{D}_{KL}(\pi || \pi_{\psi} ;s_i^+,a_i^+) - \alpha \log \pi_{\psi}(a_i^+|s_i^+)\Big]\Big)
    \\& = \arg \min_{\pi_{\psi}}\Big( \frac{1}{N}\sum_{i=1}^N  \bar{D}_{KL}(\pi_i^+ || \pi_{\psi} ;s_i^+,a_i^+) \Big)
    % \\ & = \arg \min _{\pi_{\psi}}
\end{align*}
\end{proof}
 
Notably, the minimum is achieved if and only if $\pi_{\psi}(a_i^+ |s_i^+) = \pi(a_i^+|s_i^+)$  for each $i \in [N]$.  
This formulation demonstrates that maximizing the MaxEnt objective with a regret-based reward is fundamentally equivalent to minimizing the sequential forward KL divergence for each segment.

\paragraph{Discussion.}
The regret-based DPO framework can be reinterpreted as a process that aggregates the behavior policies underlying the given dataset, aligning the learned policy to preferred actions by reducing the sequential forward KL divergence. If, as assumed in CPL, the behavior policies of all preferred segments in dataset $\mathcal{D}$ correspond to the optimal policy $\pi^*$ (or can be constructed as such), then PPL is guaranteed to converge to the optimal policy.

However, in practical RLHF settings, such an assumption rarely holds. Unlike standard reinforcement learning, where an agent maximizes a predefined reward function, RLHF optimizes for policy alignment rather than absolute optimality. In the DPO framework, the reward function is implicitly constructed to make the aligned policy the optimal one within the given preference dataset. As a result, the optimal policy under the learned reward function is already the policy obtained through alignment, making it unnecessary to perform an additional RL algorithm to reach the optimal policy.

To achieve further improvements, it is crucial to expand the dataset by rolling out new policies and incorporating additional preference data. This process enhances dataset coverage while enabling the learned reward function to extrapolate more effectively. Without such iterative expansion, RLHF remains constrained by the limitations of the static dataset, preventing meaningful policy improvements beyond the scope of the initially collected preferences.

\section{Pseudocode}
\label{subsec: pseudocode}

    \begin{algorithm}[ht]
    \resizebox{1.0\textwidth}{!}{%
    \begin{minipage}{1.0\textwidth}
    \caption{Policy-labeled Preference Learning (PPL)}
    \label{alg: PPL}
    \textbf{Input:} number of queries $N$, trajectory dataset $\mathcal{E}$, minibatch size $D$
    \begin{algorithmic}[1]
        \STATE Initialize policy parameters $\psi$

        \STATE \textbf{for} $n = 1, \cdots, N$ \textbf{do} 

        \STATE  \quad Sample $\zeta, \zeta' \sim \mathcal{E}$ 

        \STATE \quad \textbf{if} policy label $\pi({a_t}|s_t), \pi(a_t'|s_t')$ unknown \textbf{then}
        \STATE \qquad $\pi(\cdot|s_t) \leftarrow \delta_{a_t} $ $\pi(\cdot|s'_t) \leftarrow \delta_{a'_t} $

        \STATE \quad \textbf{end if} 
        \STATE \quad Label the behavior policy $p = (\pi, \pi' )$
        \STATE \quad Instruct the preference label $y = (y(0), y(1))$ 
        
        \STATE \quad Store preference $\mathcal{D} \leftarrow \mathcal{D} \cup \{(\zeta, \zeta', y,p)\}$ \COMMENT{Create Policy-labeled Preference Queries}
        \STATE \textbf{end for} 
        
        \STATE \textbf{for} $t =1 $ \textbf{to} $T$ \textbf{do}

        \STATE \quad Sample minibatch $\{(\zeta, \zeta', y, p)_d\}_{d=1}^D \sim \mathcal{D}$
        % \STATE \quad Optimize $\mathcal{L}_{\text{PPL}}(\pi_\psi; \mathcal{D})$ w.r.t $\psi$ 
        \STATE \quad $\psi \leftarrow \arg\min_{\psi} \mathcal{L}_{\text{PPL}}(\pi_\psi; \mathcal{D})$ \COMMENT{Policy Learning}
        \STATE \textbf{end for} 
    \end{algorithmic}
    \end{minipage}
    }
    \end{algorithm}

\newpage

\newpage

% \section{Experimental Details}

\section{Variants of PPL and Baselines}

    \paragraph{BC:}  BC (Behavior Cloning) is the initial stage in RLHF, where the policy is trained to maximize the likelihood of the demonstrated actions given the corresponding states:

    \begin{align*}
        \mathcal{L}_{\text{BC}}(\pi_{\psi} ; \mathcal{D})
        = - \EE_{\zeta \sim \mathcal{D}} \Big[ \sum_{t \geq 0}\log \pi_{\psi}(a_t | s_t) \Big]
    \end{align*}

    \paragraph{SFT:}
    SFT (Supervised Fine Tuning) is trained to maximize the likelihood of the demonstrated actions given the corresponding states in preferred segments:
    
    \begin{align*}
        \mathcal{L}_{\text{SFT}}(\pi_{\psi} ; \mathcal{D})
        = - \EE_{\zeta^+ \sim \mathcal{D}} \Big[ \sum_{t \geq 0}\log \pi_{\psi}(a_t^+ | s_t^+) \Big]
    \end{align*}
    
    \paragraph{CPL:} CPL \cite{hejna2023contrastive} is our primary baseline, where the optimal advantage is defined as the score function:    
    \begin{align*}
    S_{\text{CPL}}(\pi_{\psi}; \zeta^+ ) - S_{\text{CPL}}(\pi_{\psi}; \zeta^- )
    = \sum_{t \geq 0}  \log \frac{\pi_{\psi}(a_t^+ | s_t^+)} {{\pi_{\psi}(a_t^- | s_t^-) }}.
    \end{align*}
    The objective is to minimize the following loss function:
    \begin{align*}
    \mathcal{L}_{\text{CPL}}(\pi_{\psi}; \mathcal{D}) 
    = - \EE_{(\zeta^+ , \zeta^-)\sim\mathcal{D}} \Big[ \log \sigma \big( 
    S_{\text{CPL}}(\pi_{\psi}; \zeta^+ ) - S_{\text{CPL}}(\pi_{\psi}; \zeta^- )
    \big) \Big]
    \end{align*}

    % \paragraph{CPL($\TR{\lambda}$):} 
    A key issue raised in CPL is assigning high weights to OOD actions while still maintaining the same optimal policy. This leads to extrapolation too much into unseen states, ultimately degrading performance. To mitigate this, an asymmetric regularizer is introduced:
    
    % \begin{align*}
    % S_{\text{CPL}(\textcolor{red}{\lambda})}(\pi_{\psi}; \zeta^+ ) - S_{\text{CPL}(\textcolor{red}{\lambda})}(\pi_{\psi}; \zeta^- )
    % = \sum_{t \geq 0} \Big( \log \frac{\pi_{\psi}(a_t^{\zeta^+} | s_t^{\zeta^+})} { \pi_{\psi}(a_t^{\zeta^-} | s_t^{\zeta^-})^{\textcolor{red}{\lambda}} } \Big)
    % \end{align*}

    \begin{align*}
    S_{\text{CPL}(\textcolor{red}{\lambda})}(\pi_{\psi}; \zeta^+ ) - S_{\text{CPL}(\textcolor{red}{\lambda})}(\pi_{\psi}; \zeta^- )
    = S_{\text{CPL}}(\pi_{\psi}; \zeta^+ ) - \TR{\lambda} S_{\text{CPL}}(\pi_{\psi}; \zeta^- )
    = \sum_{t \geq 0} \log \frac{\pi_{\psi}(a_t^+ | s_t^+)} { \pi_{\psi}(a_t^- | s_t^-)^{\textcolor{red}{\lambda}} } 
    \end{align*}

    % \paragraph{CPL-BC:} Another approach to mitigate the issue of taking OOD actions is to use behavior cloning as a regularizer. This method builds on the insights of \citet{liu2024provably}, who showed that BC regularization can prevent reward overoptimization. The objective is defined as:

    % \begin{align*}
    % \mathcal{L}_{\text{CPL-BC}}(\pi_{\psi}; \mathcal{D})
    % & = \mathcal{L}_{\text{CPL}}(\pi_{\psi}; \mathcal{D}) + \eta \mathcal{L}_{\text{SFT}}(\pi_{\psi}; \mathcal{D})
    % \\ &=
    % \mathcal{L}_{\text{CPL}}(\pi_{\psi}; \mathcal{D}) - \eta \EE_{\zeta \sim\mathcal{D}}\Big[\sum_{t \geq 0}\log \pi(a_t|s_t)\Big]
    % % \\&=
    % % S_{\text{CPL}}(\pi_{\psi}; \zeta^+ ) - S_{\text{CPL}}(\pi_{\psi}; \zeta^- )
    % % = \sum_{t \geq 0}  \log \frac{\pi_{\psi}(a_t^+ | s_t^+)} {{\pi_{\psi}(a_t^- | s_t^-) }} 
    % \end{align*}

    \paragraph{PPL:} Based on policy deviation lemma in Theorem \ref{theorem: Difference}, PPL extends CPL by incorporating entropy regularization and KL divergence-based constraints, making preference learning more structured. The score function includes multiple terms: 

    \begin{align*}
    & S_{\text{PPL}}(\pi_{\psi}; \zeta^+, \pi^+ ) - S_{\text{PPL}}(\pi_{\psi}; \zeta^-, \pi^- )
    \\ &  
    = \sum_{t \geq 0} \Bigg[
    \log \frac{\pi_{\psi}(a_t^+ | s_t^+)} {{\pi_{\psi}(a_t^- | s_t^-) }}  
    % \\ & \quad 
    + \frac{1}{L}
    \sum_{l =1}^L \Big( - D_{KL}( \pi^+ (\cdot | s_{t+l}^+) || \pi_{\psi} (\cdot | s_{t+l}^+) ) 
    + D_{KL}( \pi^- (\cdot | s_{t+l}^-) || \pi_{\psi} (\cdot| s_{t+l}^-) )
    \Big) \Bigg],
    \end{align*}
    and the objective function is:
    \begin{align*}
    \mathcal{L}_{\text{PPL}}(\pi_{\psi}; \mathcal{D}) 
    = - \EE_{(\zeta^+ , \zeta^-)\sim\mathcal{D}} \Big[ \log \sigma \big( 
    S_{\text{PPL}}(\pi_{\psi}; \zeta^+ ) - S_{\text{PPL}}(\pi_{\psi}; \zeta^- )
    \big) \Big]
    \end{align*}
    
    % \paragraph{PPL($\TR{\lambda}$):}   

    The score function of PPL with the same asymmetric regularizer as CPL is given by:

    \begin{align*}
    & S_{\text{PPL}(\TR{\lambda})}(\pi_{\psi}; \zeta^+, \pi^+ ) - S_{\text{PPL}(\TR{\lambda})}(\pi_{\psi}; \zeta^-, \pi^- )
    = S_{\text{PPL}}(\pi_{\psi}; \zeta^+, \pi^+ ) - \TR{\lambda} S_{\text{PPL}}(\pi_{\psi}; \zeta^-, \pi^- )
    \\ & = \sum_{t \geq 0} \Bigg[
    \log \frac{\pi_{\psi}(a_t^+ | s_t^+)} {{\pi_{\psi}(a_t^- | s_t^-)^{\TR{\lambda}} }}  
    + \frac{1}{L}
    \sum_{l =1}^L \Big( - D_{KL}( \pi^+ (\cdot | s_{t+l}^+) || \pi_{\psi} (\cdot | s_{t+l}^+) ) 
    + \TR{\lambda} D_{KL}( \pi^- (\cdot | s_{t+l}^-) || \pi_{\psi} (\cdot| s_{t+l}^-) )
    \Big) \Bigg],
    \end{align*}

    % \item PPL-deterministic (previous)
    
    % \begin{align*}
    % S_{\text{PPL}\text{-d}}(\pi_{\psi}; \zeta^+) - S_{\text{PPL}\text{-d}}(\pi_{\psi}; \zeta^-)
    % = \sum_{t \geq 0} \mathcal{H}^{\pi_\psi}(\cdot| s_t^+) - \mathcal{H}^{\pi_\psi}(\cdot| s_t^-) 
    % + \frac{(1-\gamma)^{-1}}{L+1}
    % \Big( \sum_{l = 0}^L  \log \frac{\pi_{\psi}(a_{t+l}^+ | s_{t+l}^+)} {{\pi_{\psi}(a_{t+l}^- | s_{t+l}^-) }} \Big) 
    % \end{align*}

    \paragraph{PPL-deterministic:}
    If policy-label is unknown, we apply deterministic pseudo-labels by assuming that each segment was generated by a deterministic policy that executed the observed action.
    \begin{align*}
    S_{\text{PPL}\text{-d}}(\pi_{\psi}; \zeta^+) - S_{\text{PPL}\text{-d}}(\pi_{\psi}; \zeta^-)
    = \sum_{t \geq 0} \Bigg[
    \log \frac{\pi_{\psi}(a_t^+ | s_t^+)} {{\pi_{\psi}(a_t^- | s_t^-) }} 
    % +\mathcal{H}^{\pi_\psi}(\cdot| s_t^+) - \mathcal{H}^{\pi_\psi}(\cdot| s_t^-) 
    + \frac{1}{L}
    \sum_{l = 1}^L  \log \frac{\pi_{\psi}(a_{t+l}^+ | s_{t+l}^+)} {{\pi_{\psi}(a_{t+l}^- | s_{t+l}^-) }} 
    \Bigg]
    \end{align*}

\newpage 
\section{Implementation Details}

\subsection{Hyperparameter Setting}

\begin{table}[h!]
    \centering
    \caption{Hyperparameter settings for offline implementation.}
    \renewcommand{\arraystretch}{1.2}
    \begin{tabular}{lc}
        \toprule
        \textbf{Hyperparameter} & \textbf{State} \\
        \midrule
        Total Training Steps & 500k  \\
        Pre-training Steps (except P-IQL) & 200k  \\
        Batch Size & 96  \\
        Segment Size & 64  \\
        Fixed log std & -1.5 \\
        Actor Dropout & 0.0 (0.25 for CPL reproduce) \\
        Architecture & [256, 256] MLP  Gaussian \\
        \bottomrule
    \end{tabular}
    \label{tab:hyperparameters}
\end{table}

\begin{table}[h!]
    \centering
    \renewcommand{\arraystretch}{1.2} % 행 간격 조정
    \caption{Hyperparameters for online implementation}
    \label{tab:hyperparameters_b}
   \begin{tabular}{lcccc}
        \toprule
        \textbf{Hyperparameter} & \textbf{State}  \\
        \midrule
        Total Environment Steps & 1m \\
        Segment Size & 32 \\
        Fixed log std & -1.0 \\
        Query Frequency(steps) & 1000 \\
        Policy update Frequency(steps) & 1000\\
        Episode Length & 250 \\
        Learning rates & 3e-4 \\
        Temperature $\alpha$ & 0.1  \\
        Asymmetric regularizer $\lambda$ & 1.0 \\
        BC weights & 0 \\
        $\gamma$ & 1 \\
        Actor Dropout & 0.0 \\
        Architecture & [256, 256] MLP Gaussian  \\
        \bottomrule
    \end{tabular}
\end{table}

\begin{table}[h!]
    \centering
    \renewcommand{\arraystretch}{1.2} % 행 간격 조정
    \caption{Hyperparameters for PPL, CPL, SFT, and P-IQL}
    \label{tab:hyperparameters_b}
   \begin{tabular}{lcccc}
        \toprule
        \textbf{Hyperparameter} & PPL & \textbf{CPL} & \textbf{SFT} & \textbf{P-IQL}  \\
        \midrule
        Learning rates & 1e-4 & 1e-4 & 1e-4 & 1e-4 \\
        Temperature $\alpha$ & 0.1 & 0.1 & 0.1 & 0.1 \\
        Asymmetric regularizer $\lambda$ & 0.5 & 0.5 & - & - \\
        BC weights & 0 & 0 & 0 & 0 \\
        $\gamma$ & 1  & 1 & 1 & 1\\
        \bottomrule
    \end{tabular}
\end{table}

\newpage

\subsection{MetaWorld Benchmark}
Our experiments were conducted on six MetaWorld environments: \texttt{Bin-Picking, Button-Press, Door-Open, Drawer-Open, Plate-Slide}, and \texttt{Sweep-Into}.
Each task requires precise control of a robotic arm to interact with objects in a structured environment. The diverse task set includes object relocation, pushing, pulling, and fine-grained manipulation, making it a suitable testbed for reinforcement learning from preference-based feedback.

\begin{figure}[h!]
    \centering
    \includegraphics[width=0.7\linewidth]
    {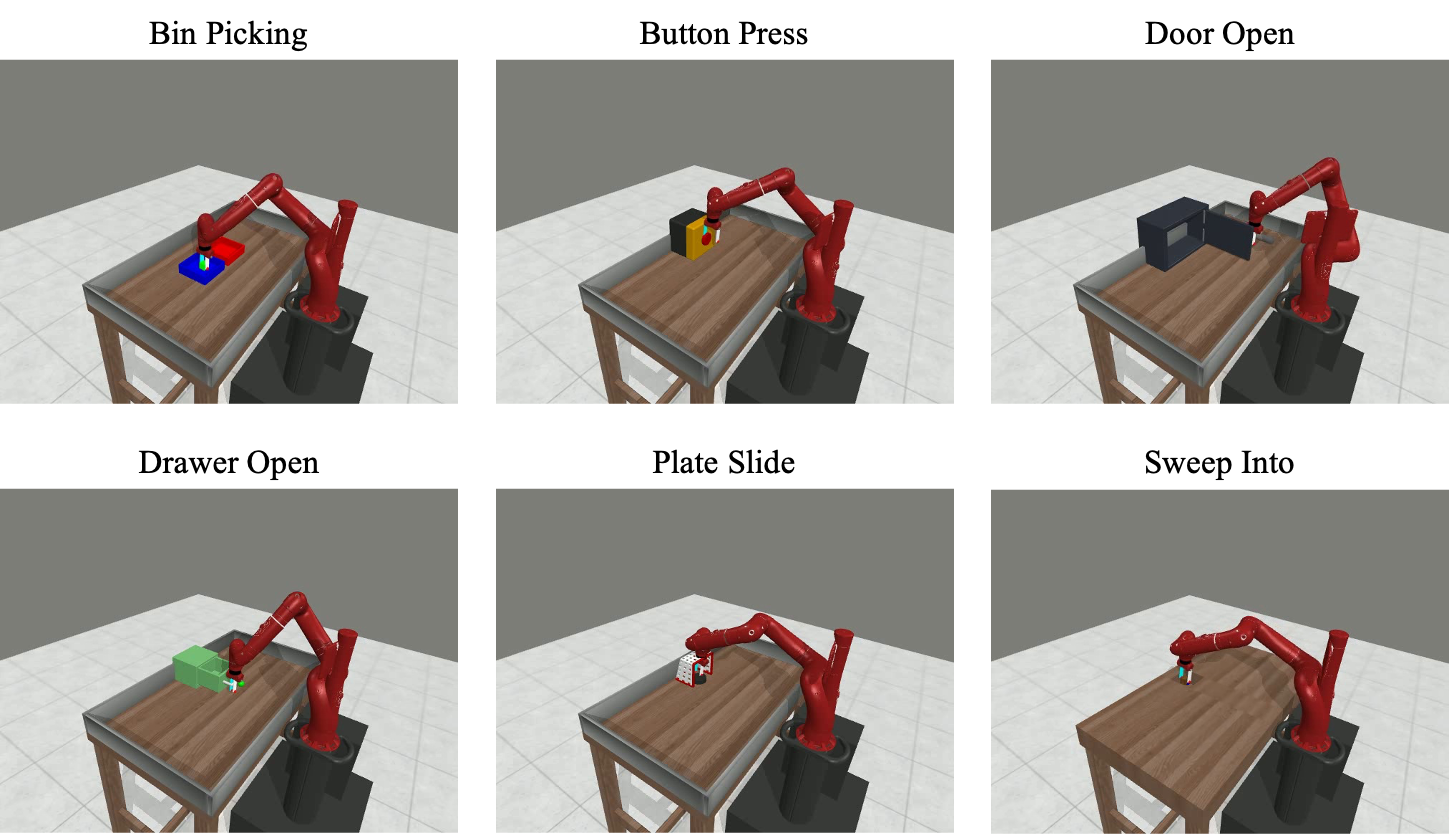}
    \caption{Visualization of the MetaWorld Benchmark Tasks.}
    \label{fig:enter-label}
\end{figure}

Each environment is designed with a handcrafted reward function tailored to its objective. Instead of human annotations, we trained a critic using SAC to assign labels. During our experiments, we observed that return did not always align well with success rates.  
In case of \texttt{Door-Open}, despite achieving the highest return, PPL exhibited a relatively low success rate. This implies that the environment allows reward exploitation due to the imprecise design of the reward function.

% \begin{figure}[h!]
%     \centering
%     \includegraphics[width=0.33\linewidth]{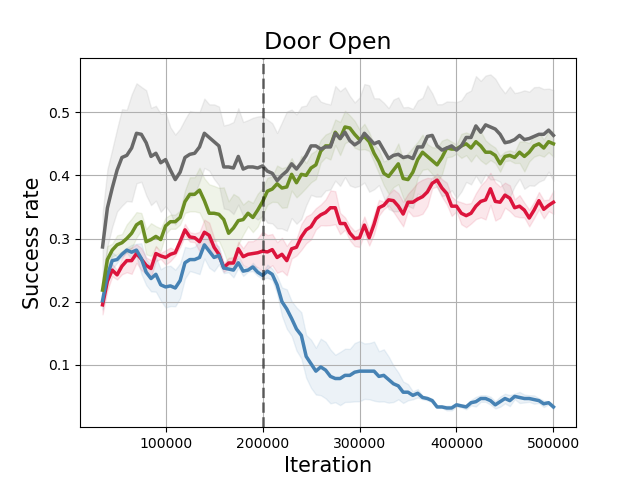}
%     \includegraphics[width=0.33\linewidth]{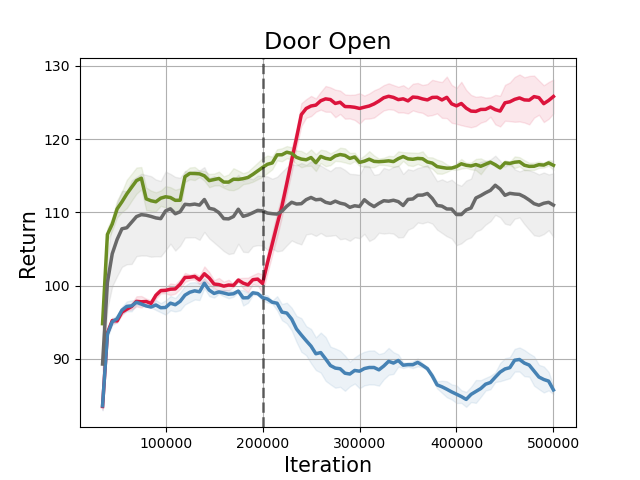}
%     \caption{Reward exploitation in \texttt{Door-Open-v2}}
%     \label{fig:enter-label}
% \end{figure}

\subsection{Reproducibility Check}
\label{subsec: reproduce}
For a fair comparison, we first verified the reproducibility of CPL using the Metaworld \texttt{State Dense} and \texttt{State Sparse} datasets provided by \citet{hejna2023contrastive} and evaluated the performance of PPL on these datasets.  
We used the official CPL implementation (\href{https://github.com/jhejna/cpl}{\texttt{https://github.com/jhejna/cpl}}) without modifications and ensured reproducibility by fixing the random seed ($[\texttt{123,231,312,321}]$). 
The figure below presents the PPL performance alongside the reproduced CPL results. The horizontal dashed line represents the scores reported in CPL, confirming the reproducibility of the algorithm. The vertical dashed line indicates the point where behavior cloning (BC) training stops.

In all environments except \texttt{Plate-Slide-v2}, the reproduced CPL performance closely matches the reported values, with deviations attributed to seed variability. Across the provided datasets, PPL exhibits comparable overall performance to CPL.

\begin{table}[h!]
\centering
\caption{Success rates of all methods on six tasks from the MetaWorld across different datasets from \citet{hejna2023contrastive}. Each score is reported as the highest average performance across four seeds over a 200-episode evaluation window.}
\resizebox{0.9\textwidth}{!}{ % 
\begin{tabular}{llcccccc}
\toprule
& & Bin Picking & Button Press & Door Open & Drawer Open & Plate Slide & Sweep Into \\
\midrule
\multirow{3}{*}{\shortstack[1]{State\\2.5k Dense}} 
& CPL(Reported)      & 80.0 $\pm$ 2.5  & 24.5 $\pm$ 2.1  & 80.0 $\pm$ 6.8  & 83.6 $\pm$ 1.6  & 61.1 $\pm$ 3.0  & 70.4 $\pm$ 3.0 \\
& CPL(Reproduced)       & 76.0 $\pm$ 4.1  & 24.9 $\pm$ 4.7  & 75.5 $\pm$ 6.0  & 87.6 $\pm$ 2.8  & 45.3 $\pm$ 10.4  & 74.5 $\pm$ 3.4 \\
& PPL        & 77.7 $\pm$ 2.6  & 30.2 $\pm$ 7.8  & 76.7 $\pm$ 7.1  & 84.2 $\pm$ 2.4  & 41.7 $\pm$ 3.2  & 79.2 $\pm$ 5.5 \\
\midrule
\multirow{3}{*}{\shortstack[1]{State\\20k Sparse}} 
& CPL(Reported)      & 83.2 $\pm$ 3.5  & 29.8 $\pm$ 1.8  & 77.9 $\pm$ 9.3  & 79.1 $\pm$ 5.0  & 56.4 $\pm$ 3.9  & 81.2 $\pm$ 1.6 \\
& CPL(Reproduced)       & 69.1 $\pm$ 21.4  & 25.5 $\pm$ 5.3  & 74.4 $\pm$ 3.5  & 80.9 $\pm$ 4.5  & 41.1 $\pm$ 4.9  & 80.5 $\pm$ 2.8 \\
& PPL        & 83.0 $\pm$ 3.7  & 25.4 $\pm$ 2.8  & 72.2 $\pm$ 1.7  & 79.0 $\pm$ 4.0  & 42.9 $\pm$ 1.6  & 76.0 $\pm$ 2.0 \\
\bottomrule
\end{tabular}
}
\end{table}

\begin{figure*}[h!]
    \centering
    \includegraphics[width=0.33 \textwidth]{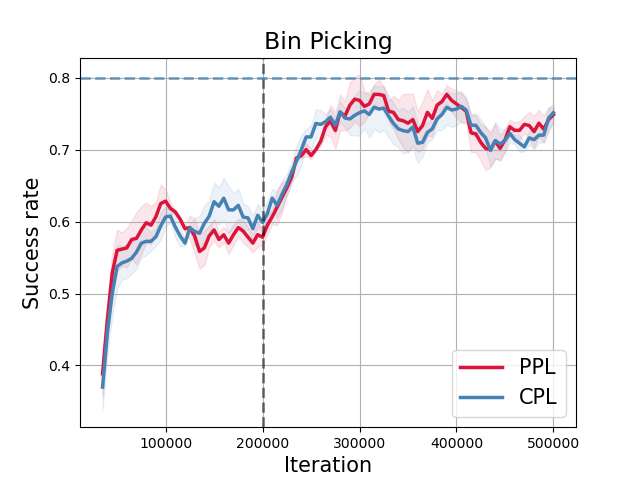}    \hspace{-20pt}  \includegraphics[width=0.33 \textwidth]{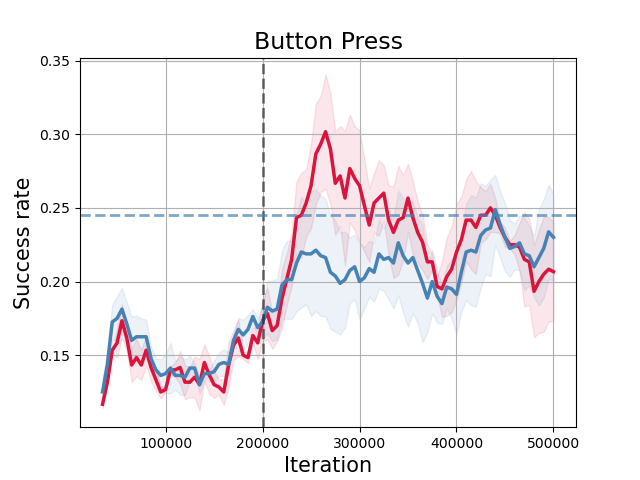}  \hspace{-20pt}
    \includegraphics[width=0.33 \textwidth]{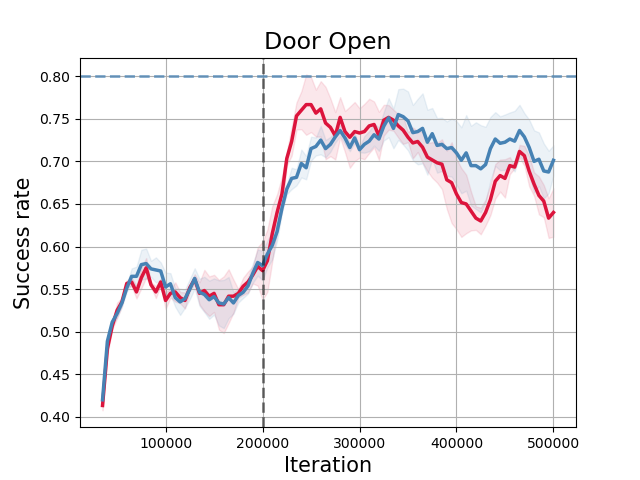}
    \includegraphics[width=0.33 \textwidth]{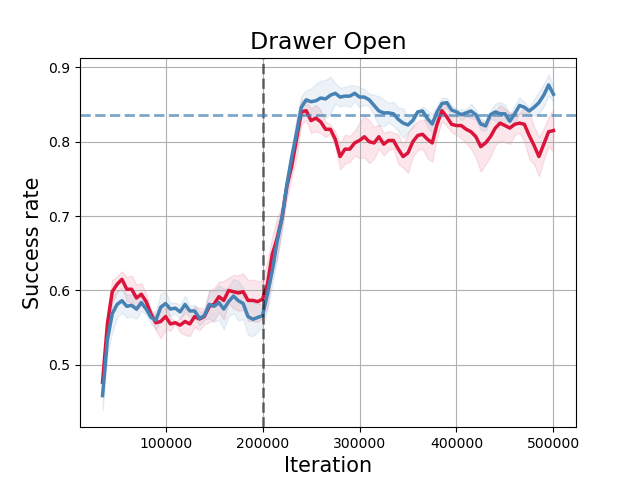}  \hspace{-20pt}
    \includegraphics[width=0.33 \textwidth]{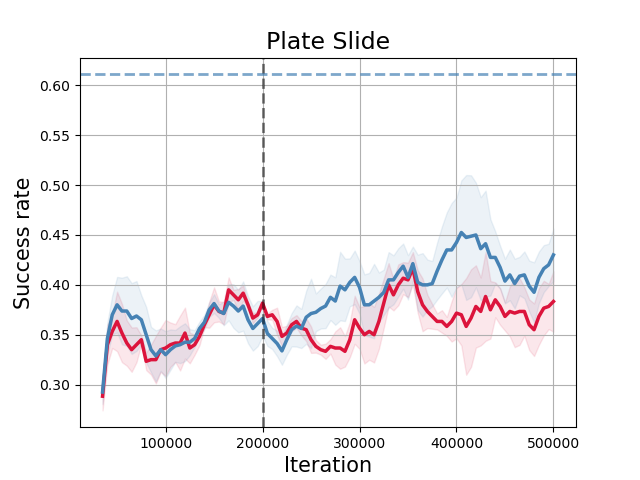}  \hspace{-20pt}
    \includegraphics[width=0.33 \textwidth]{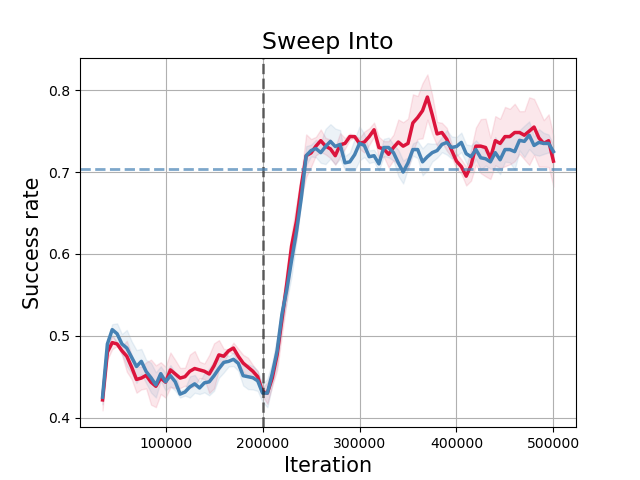}
    \caption{Reproducibility check on \texttt{State Dense} dataset}
\end{figure*}

\newpage 
\begin{figure*}[h!]
    \centering
    \includegraphics[width=0.33 \textwidth]{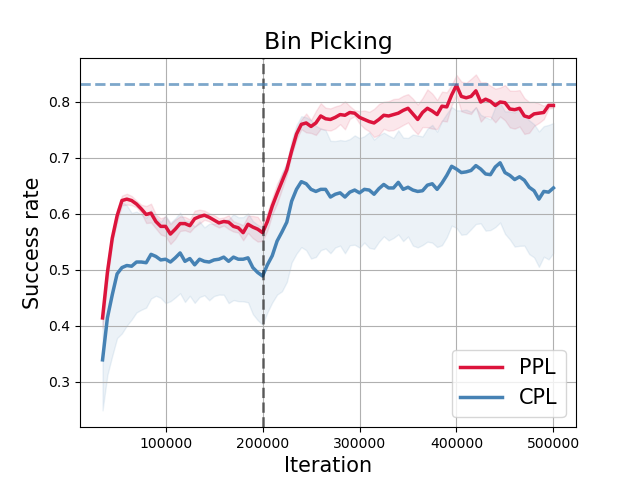}  \hspace{-20pt}
    \includegraphics[width=0.33 \textwidth]{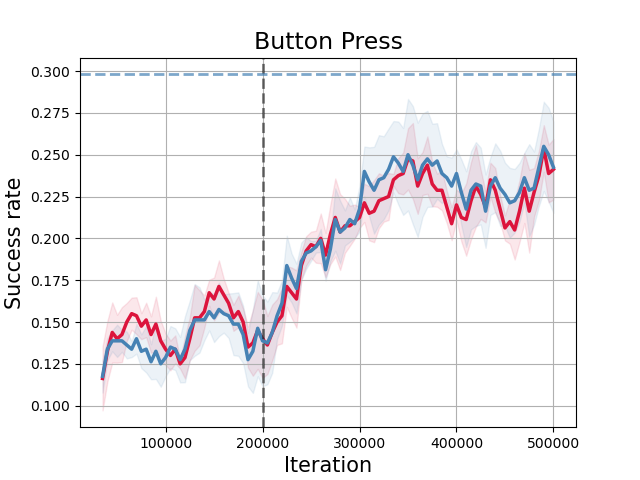}  \hspace{-20pt}
    \includegraphics[width=0.33 \textwidth]{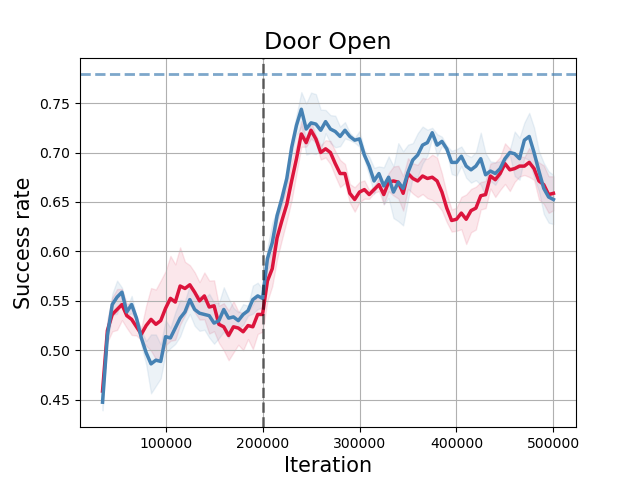}
    \includegraphics[width=0.33 \textwidth]{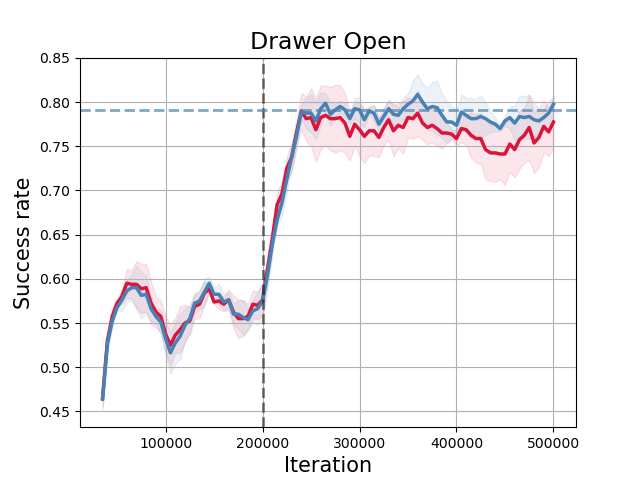}  \hspace{-20pt}
    \includegraphics[width=0.33 \textwidth]{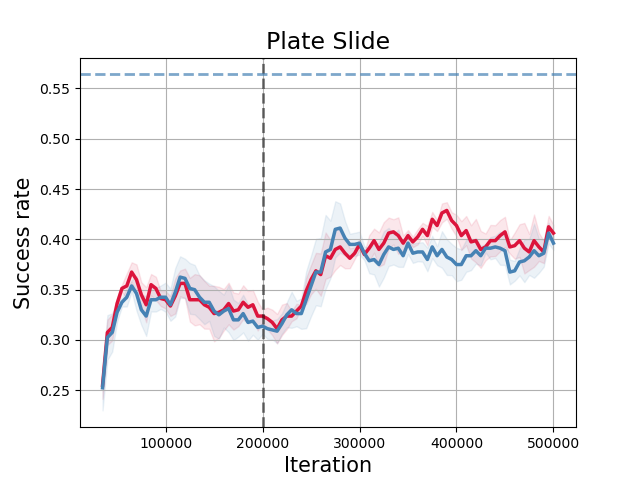}  \hspace{-20pt}
    \includegraphics[width=0.33 \textwidth]{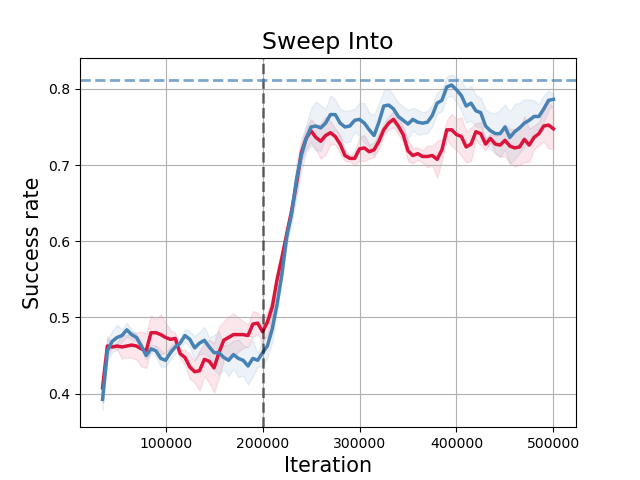}
    \caption{Reproducibility check on \texttt{State Sparse} dataset}
\end{figure*}

\newpage

\subsection{Offline dataset generation and its distribution}
We construct a heterogeneous dataset by incorporating various policies, following the dataset generation method used in \citet{hejna2023contrastive}.
Specifically, we load suboptimal SAC checkpoints with success rates of 20\% and 50\% using the same approach.
During rollouts, we introduce Gaussian noise with a standard deviation of 0.3 and rolling out 20,000 episodes, each lasting 250 steps, using their suboptimal soft actor-critic (SAC) \cite{haarnoja2018soft} checkpoints, which achieved an approximate 50\% success rate.

While following this data generation procedure, we found a step in the reference code where transitions following a success signal were explicitly truncated.
This truncation was intended to prevent segments from being overly dominated by successful transitions.
However, we opted to retain the raw data without truncation.
As a result, the distribution of our 50\% success rate dataset differs from that of \citet{hejna2023contrastive}.
To highlight this difference, we provide a visualization of the data distribution across environments.

\begin{figure*}[h!]
    \centering
    % \vspace{-0.5cm}
    \includegraphics[width = 0.33 \textwidth]{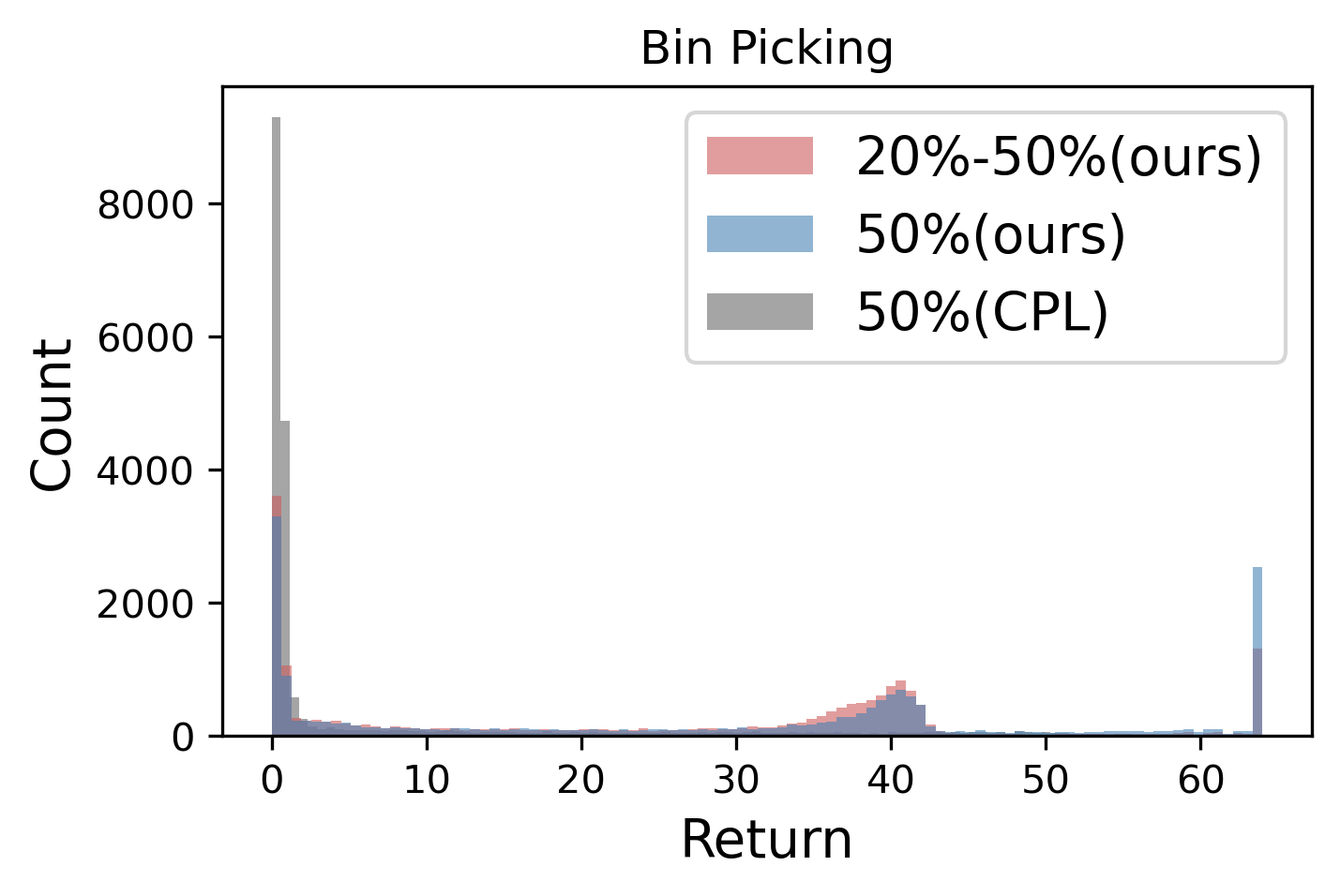}
    \hspace{-5pt}
    \includegraphics[width = 0.33 \textwidth]{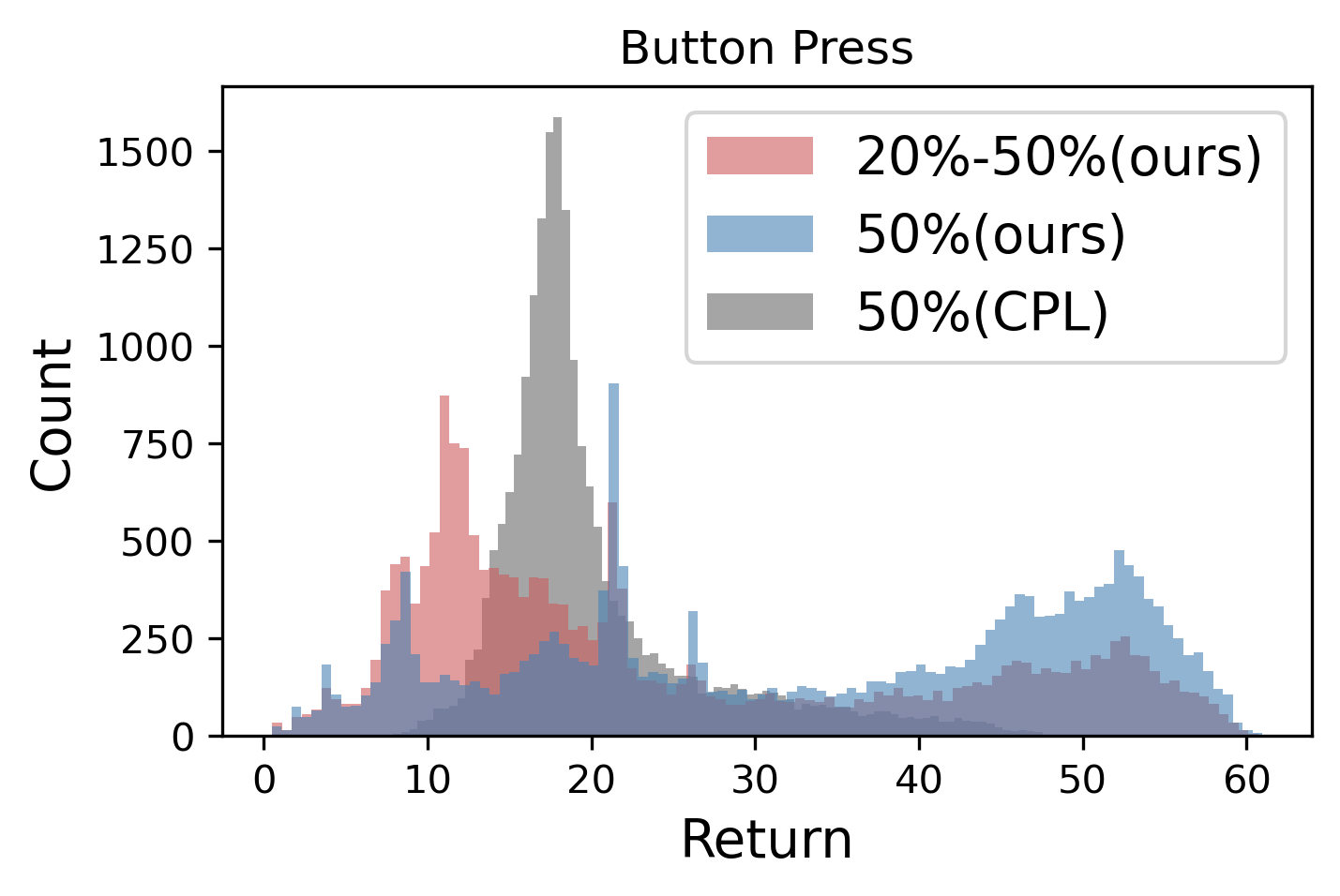} \hspace{-5pt}
    \includegraphics[width = 0.33 \textwidth]{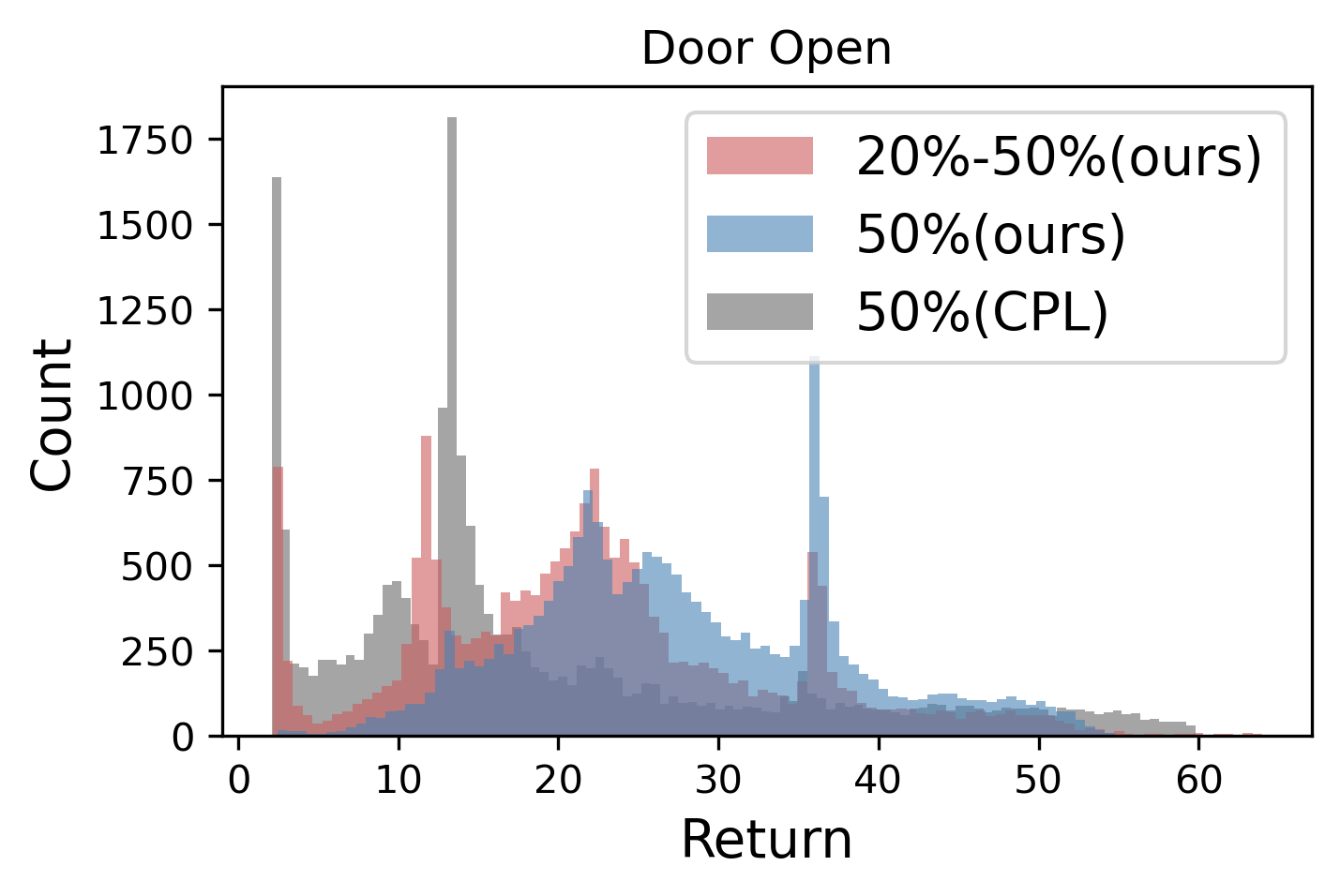} 
    \includegraphics[width = 0.33 \textwidth]{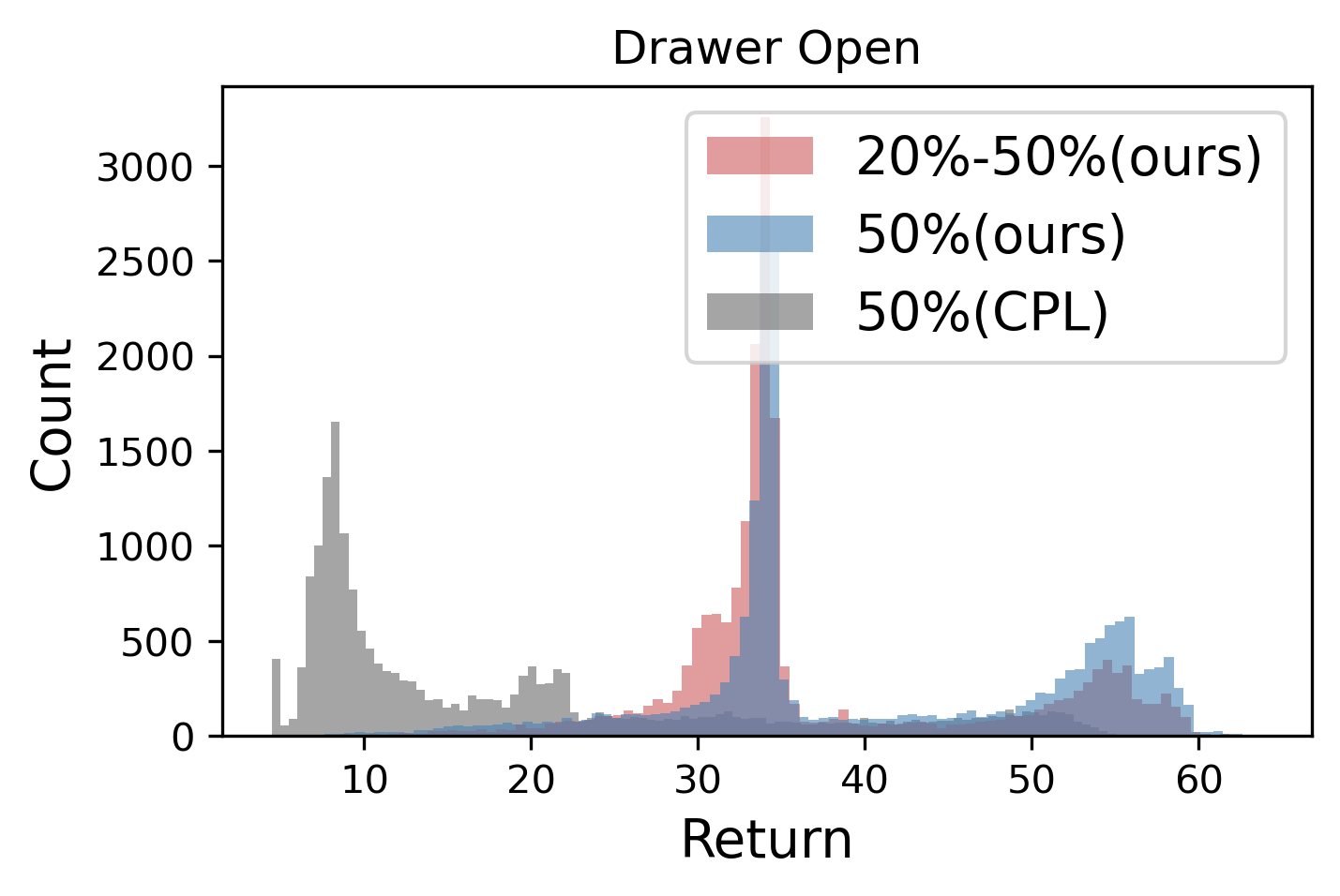} \hspace{-5pt}
    \includegraphics[width = 0.33 \textwidth]{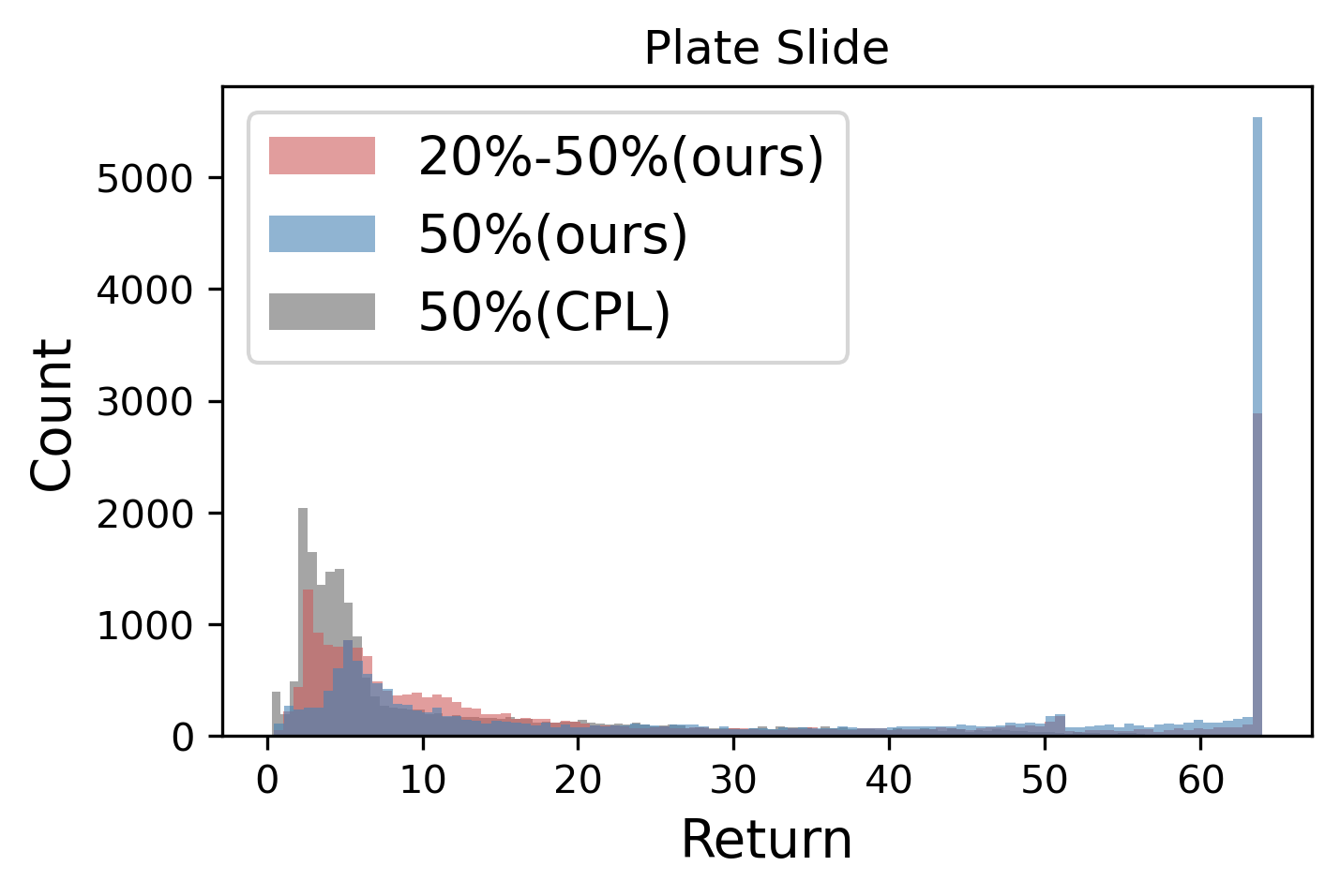} \hspace{-5pt}
    \includegraphics[width = 0.33 \textwidth]{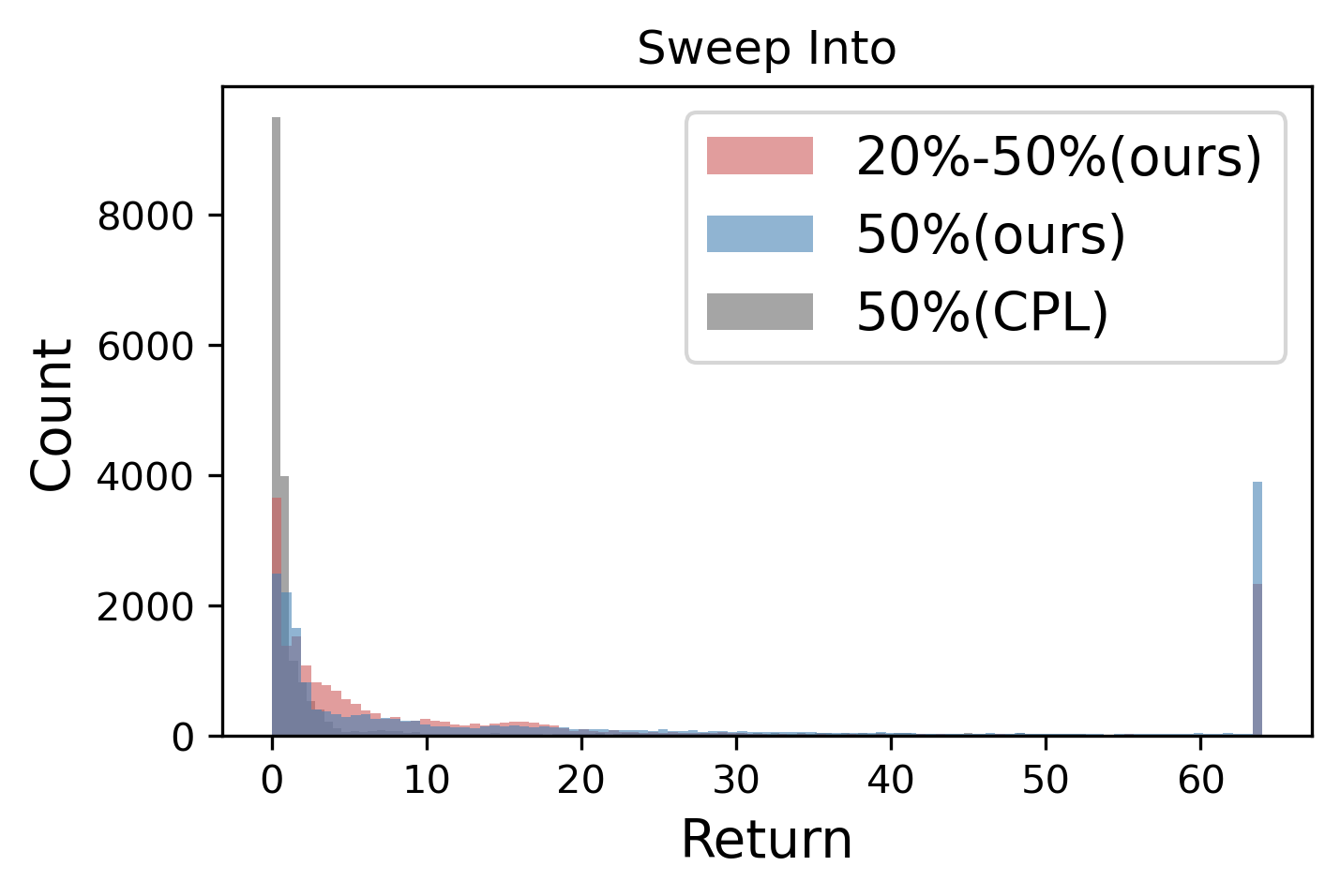}
    \caption{Comparison of return distributions across environments for different dataset configurations. The histograms illustrate the distribution of the partial returns for segments with 20\% and 50\% success rates generated using our method \textbf{(red and blue)} and the 50\% success rate dataset from \citet{hejna2023contrastive} \textbf{(gray)}.}
\end{figure*}

For our experiments, we generated the following four datasets:
$\texttt{Homogeneous Dense}$, $\texttt{Homogeneous Sparse}$, $\texttt{Hetereogeneous Dense}$, and $\texttt{Heterogeneous Sparse}.$
In the additional experimental setting, we kept all aspects—such as the hyperparameters of all algorithms, the SAC critic, and the label generation method—identical to the original setup, modifying only the dataset.
Interestingly, CPL exhibited significant performance variations depending on the dataset, whereas PPL demonstrated robust performance across diverse datasets.
The robustness of PPL's performance can be attributed to its ability to adjust the magnitude of feedback for diverse policies and accurately reflect the likelihood of each segment.

\newpage
\subsection{Online Implementation}
\label{subsec: online implementation}
In the online setting, we use a Gaussian actor with a fixed standard deviation to maintain consistency with the offline setting. The model is trained from scratch without any pretraining. The online learning process consists of three phases. First, rollouts are conducted in the environment for a fixed number of steps to generate trajectory data. Next, preference queries and labels are constructed from segments of these trajectories. Finally, the policy is updated using the generated preference query data.

During the rollout phase, actions are sampled from a stochastic policy without additional exploration strategies. In the query generation phase, two policies are selected for comparison, with one always being the most recent and the other randomly chosen from the last 25 policies. Segments from the most recent policy are first over-sampled at three times the required number, then ranked based on their regret scores relative to the current policy. The top-ranked segments are retained, while segments from the other policy are sampled uniformly at random. Preference labels are assigned according to the method described in Appendix D.2 of \citet{hejna2023contrastive}.

In the policy update phase, stochastic gradient updates are applied over a fixed number of epochs using all preference query data collected up to that point. Unlike reward-based preference learning methods, which predominantly generate preference queries early in training and subsequently optimize policies using a learned reward function and an RL algorithm, the online PPL algorithm continuously collects preference queries throughout the entire training process. This ensures sustained policy improvement over time.

To reproduce the online baseline PEBBLE algorithm, we utilized the official B-Pref implementation (\href{https://github.com/rll-research/BPref}{\texttt{https://github.com/rll-research/BPref}}) and adhered to the hyperparameter settings and random seeds reported in the original paper. 
Our online experiments were performed on five tasks from the MetaWorld benchmark: \texttt{Button Press, Door Open, Drawer Open, Plate Slide, and Sweep Into}. All hyperparameters were kept consistent across tasks, except for the total number of preference queries, which was set to match the values specified for each environment in the PEBBLE paper.

\newpage
\section{Experimental Results on Homogeneous/ Heterogeneous Datasets (Section \ref{subsec:1})}
\label{subsec:E1}
\subsection{\texttt{Homogeneous Dense} Offline Dataset}
% \vspace{-10pt}
\begin{figure*}[h!]
    \centering
    % \vspace{-0.5cm}
    \includegraphics[width = 0.33 \textwidth] {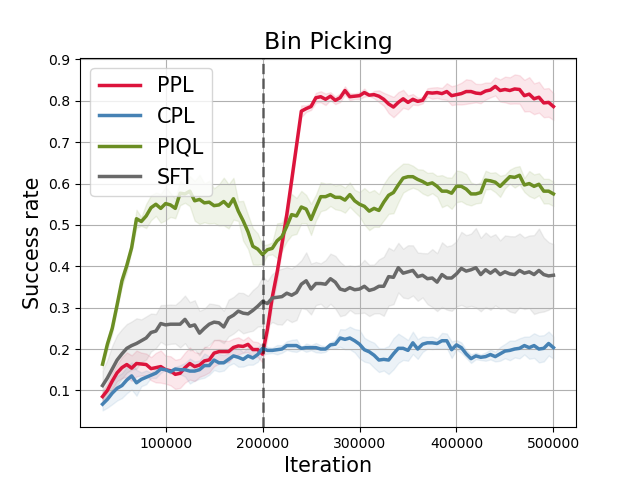} \hspace{-20pt}
    \includegraphics[width = 0.33 \textwidth]{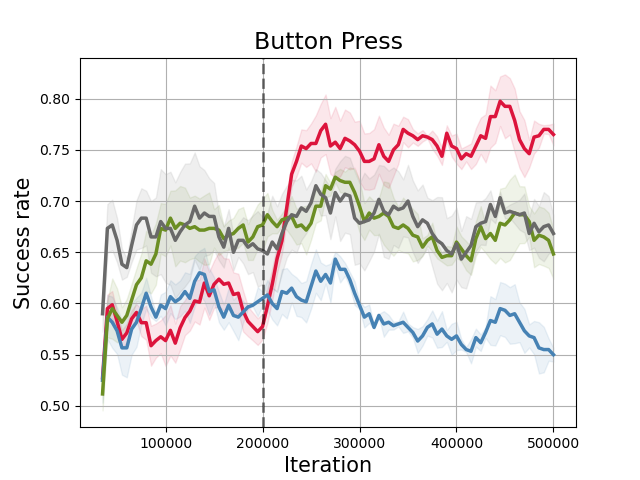} \hspace{-20pt}
    \includegraphics[width = 0.33 \textwidth]{image/plot/door_open_success_offline_50.png} 
    \includegraphics[width = 0.33 \textwidth]{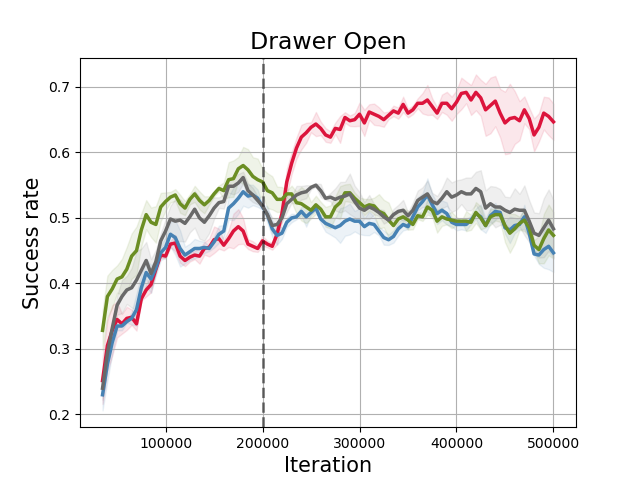} \hspace{-20pt}
    \includegraphics[width = 0.33 \textwidth]{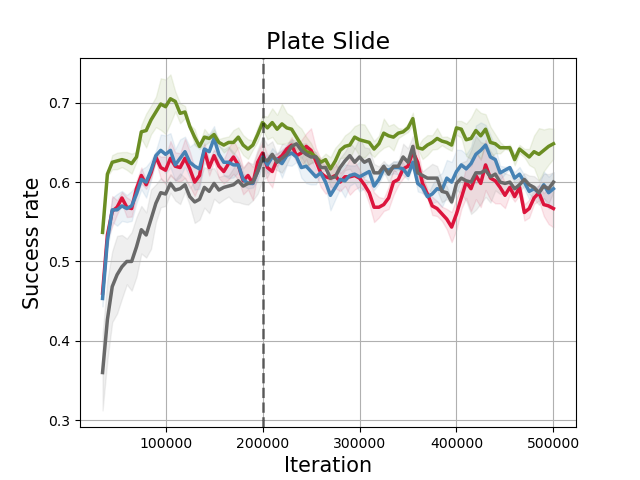} \hspace{-20pt}
    \includegraphics[width = 0.33 \textwidth]{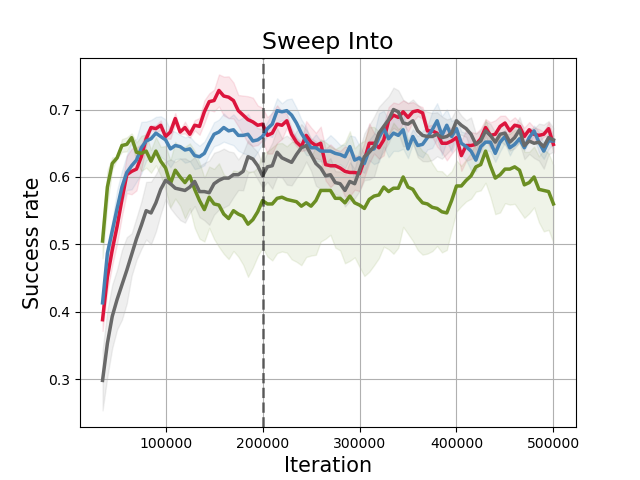}
    
    \noindent\rule{\textwidth}{0.5pt}
    
    \includegraphics[width = 0.33 \textwidth]{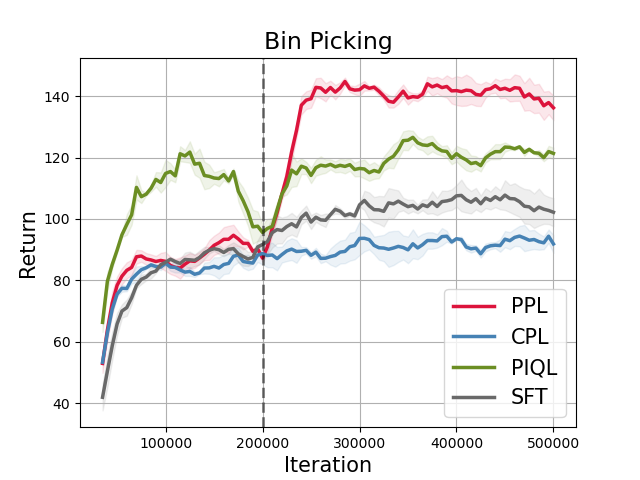} \hspace{-20pt}
    \includegraphics[width = 0.33 \textwidth]{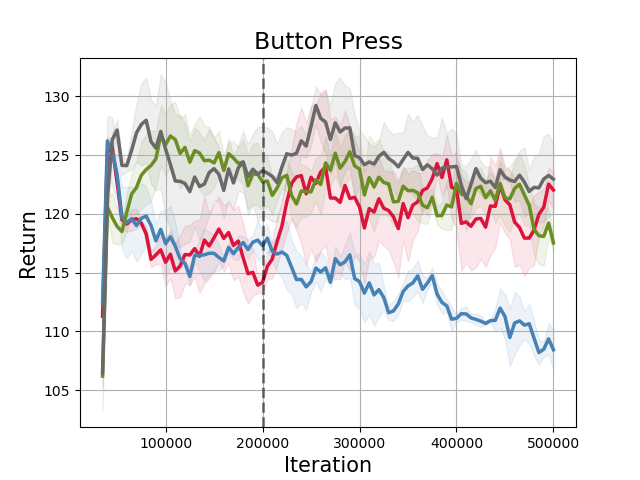} \hspace{-20pt}
    \includegraphics[width = 0.33 \textwidth]{image/plot/door_open_reward_offline_50.png}
    \includegraphics[width = 0.33 \textwidth]{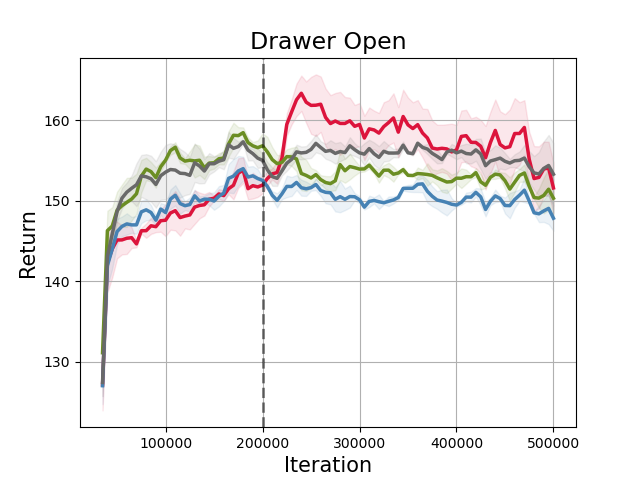} \hspace{-20pt}
    \includegraphics[width = 0.33 \textwidth]{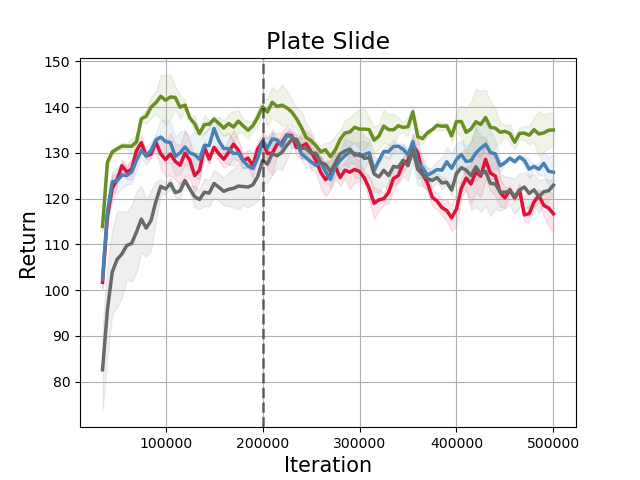} \hspace{-20pt}
    \includegraphics[width = 0.33 \textwidth]{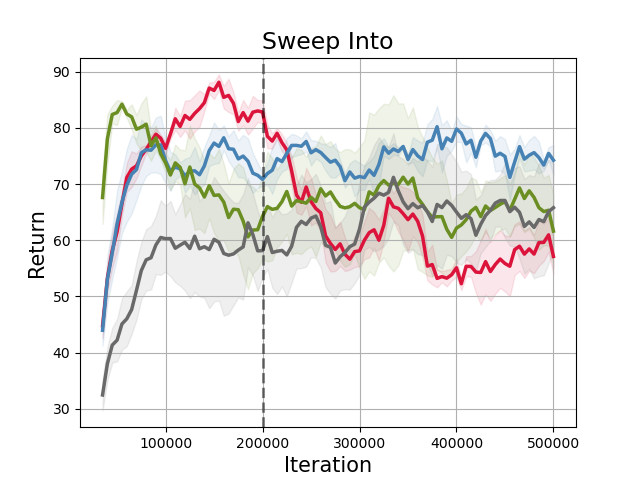}
\caption{Performance comparison of different methods on the \texttt{Homogeneous Dense} dataset across six MetaWorld tasks. The top row shows the success rate over training iterations, while the bottom row presents the corresponding return values.}
\end{figure*}

\newpage

\subsection{\texttt{Homogeneous Sparse} Offline Dataset}
\begin{figure*}[h!]
    \centering
    \includegraphics[width = 0.33 \textwidth]{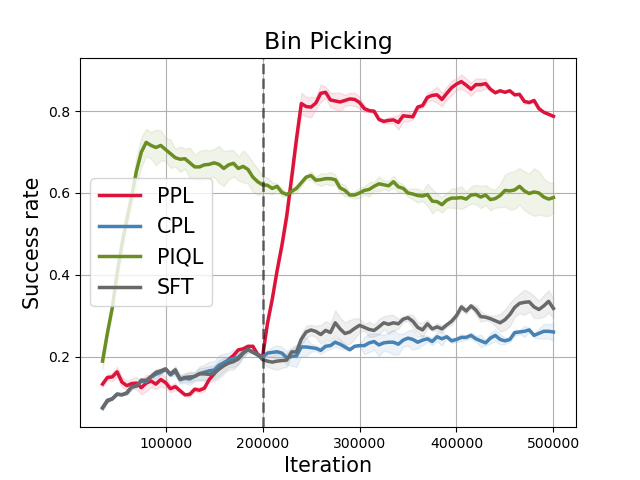} \hspace{-20pt}
    \includegraphics[width = 0.33 \textwidth]{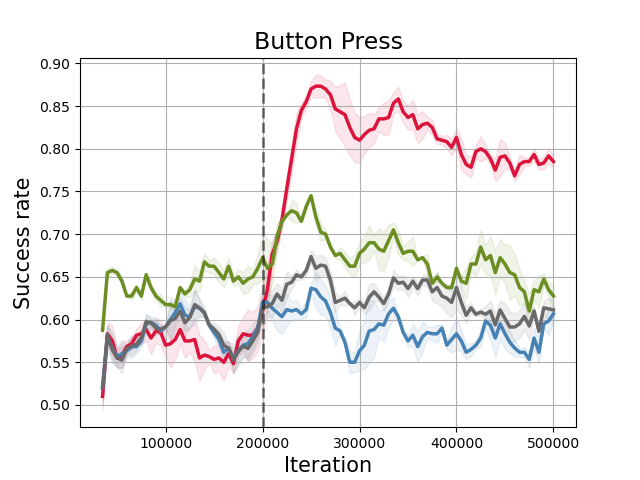} \hspace{-20pt}
    \includegraphics[width = 0.33 \textwidth]{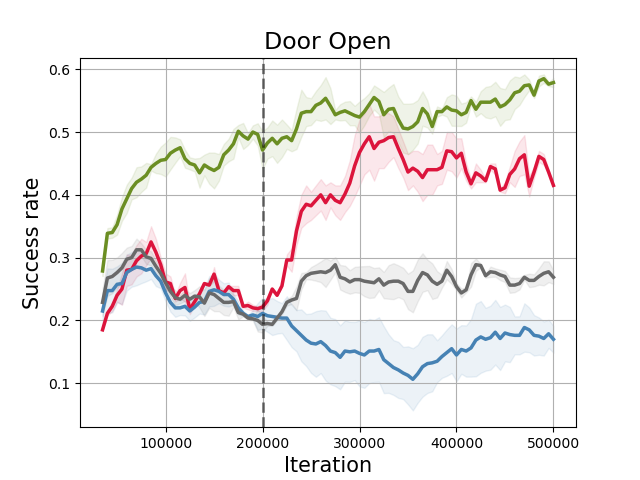}
    \includegraphics[width = 0.33 \textwidth]{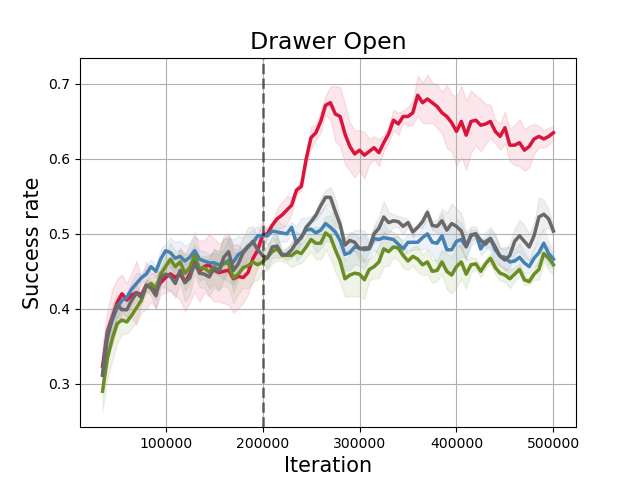} \hspace{-20pt}
    \includegraphics[width = 0.33 \textwidth]{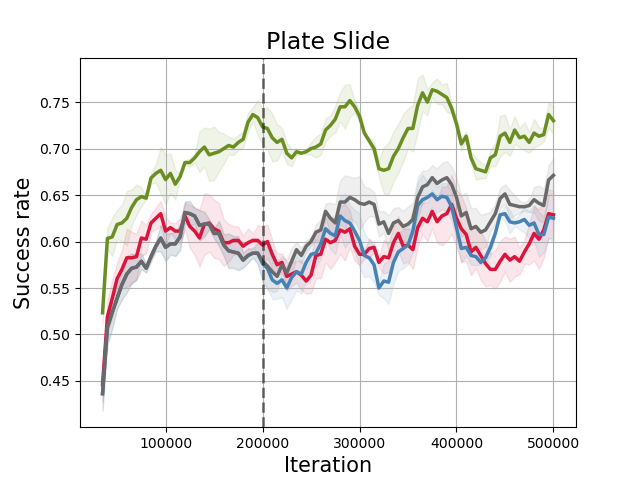} \hspace{-20pt}
    \includegraphics[width = 0.33 \textwidth]{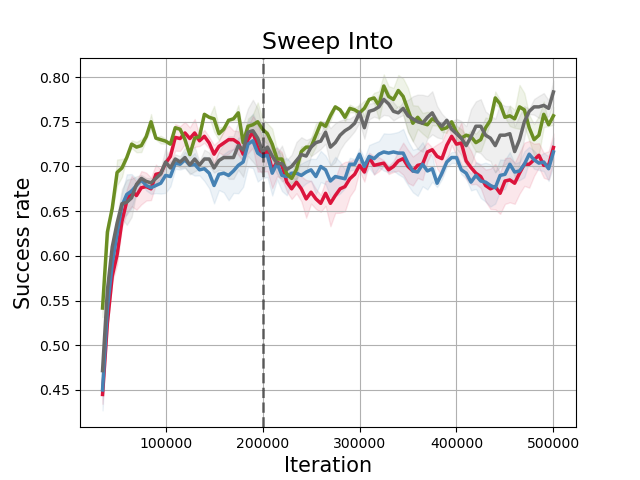}

    \noindent\rule{\textwidth}{0.5pt}

    \includegraphics[width = 0.33 \textwidth]{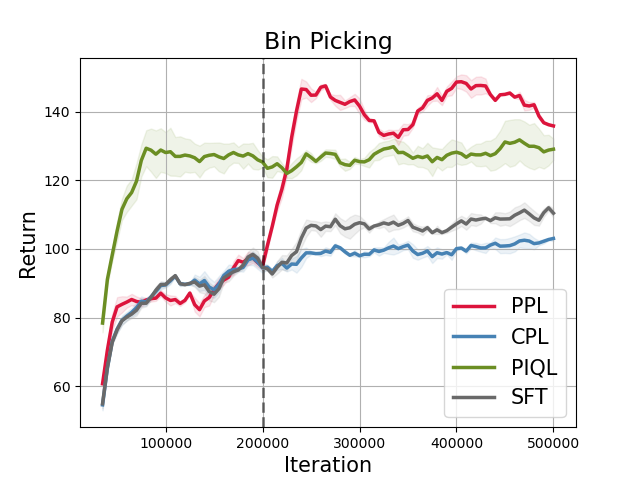} \hspace{-20pt}
    \includegraphics[width = 0.33 \textwidth]{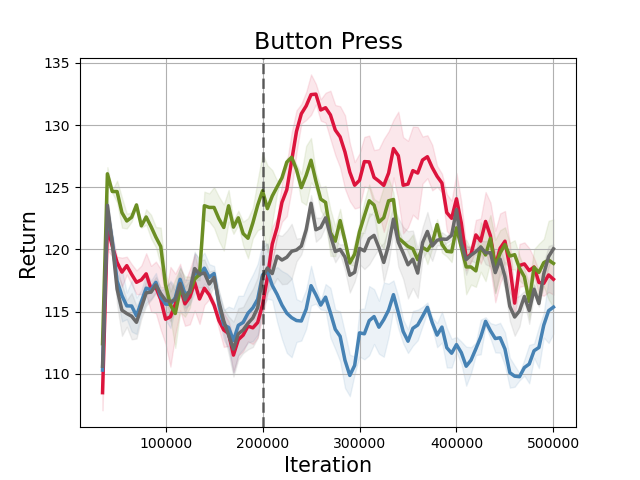} \hspace{-20pt}
    \includegraphics[width = 0.33 \textwidth]{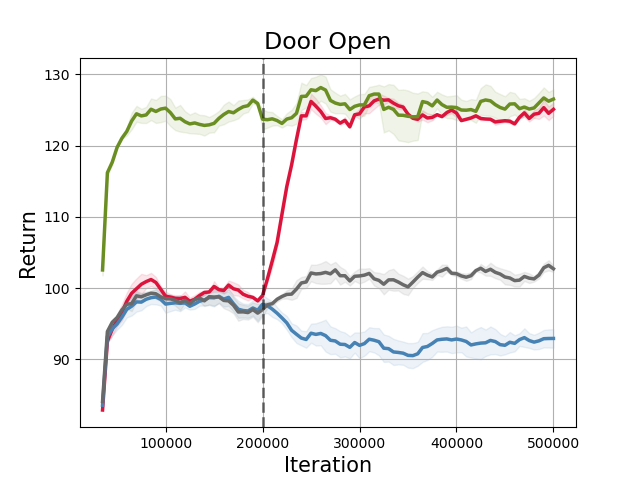}
    \includegraphics[width = 0.33 \textwidth]{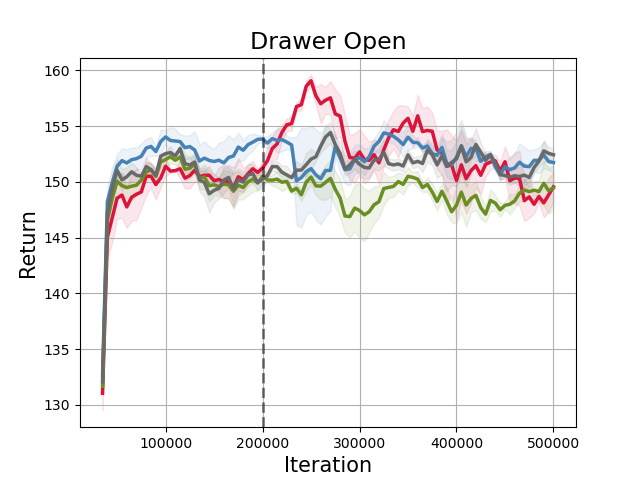} \hspace{-20pt}
    \includegraphics[width = 0.33 \textwidth]{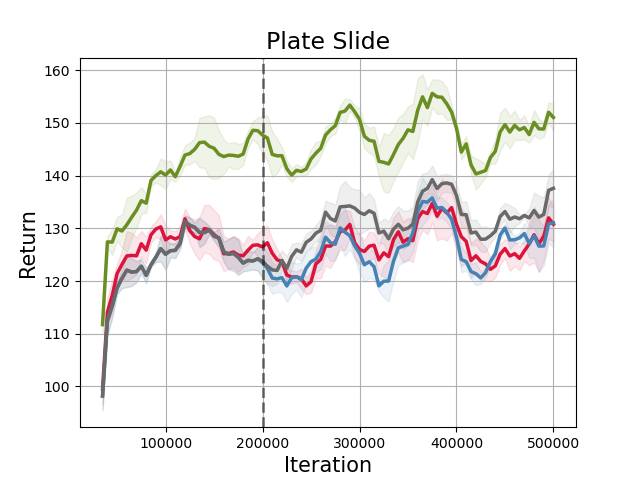} \hspace{-20pt}
    \includegraphics[width = 0.33 \textwidth]{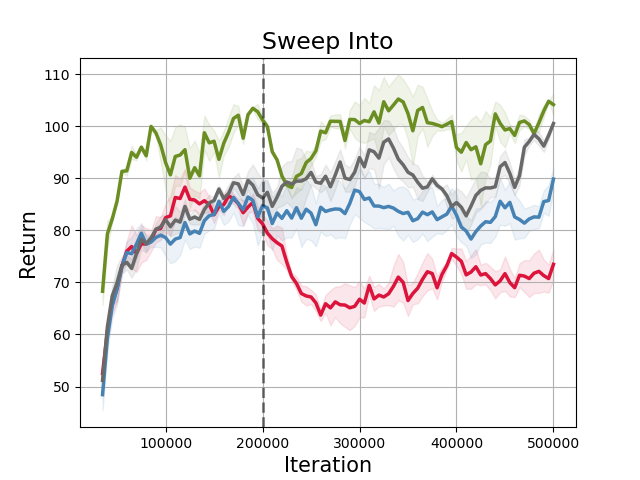}
\caption{Performance comparison of different methods on the \texttt{Homogeneous Sparse} dataset across six MetaWorld tasks.}
\end{figure*}

\newpage
\subsection{\texttt{Heterogeneous Dense} Offline Dataset}

\begin{figure*}[h!]
    \centering
    % \vspace{-0.5cm}
    \includegraphics[width = 0.33 \textwidth] {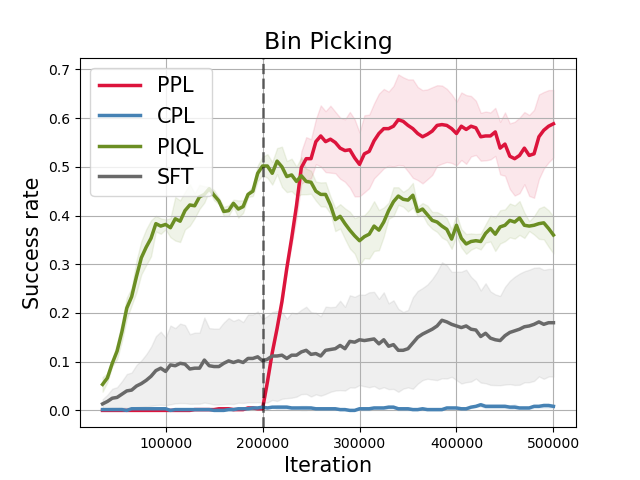} \hspace{-20pt}
    \includegraphics[width = 0.33 \textwidth]{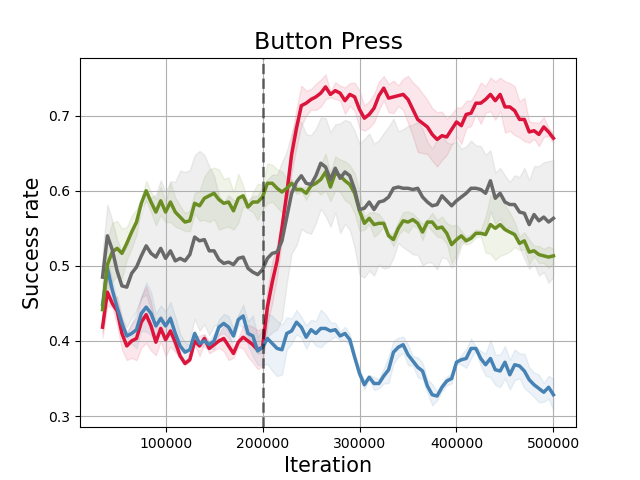} \hspace{-20pt}
    \includegraphics[width = 0.33 \textwidth]{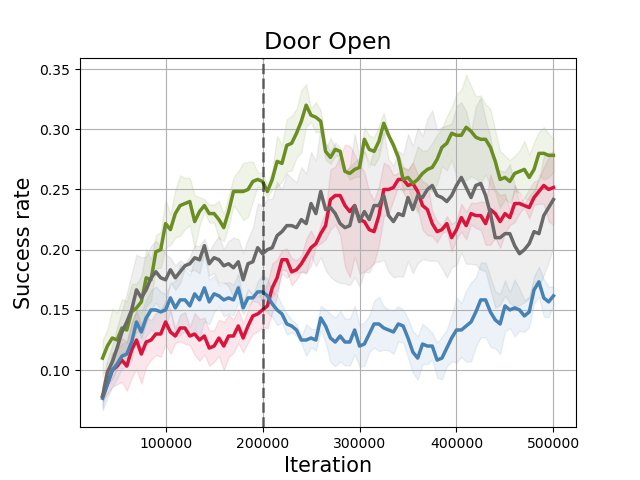}
    \includegraphics[width = 0.33 \textwidth]{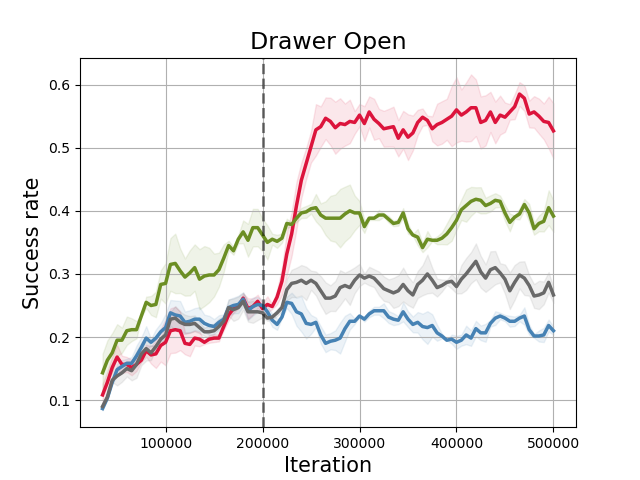} \hspace{-20pt}
    \includegraphics[width = 0.33 \textwidth]{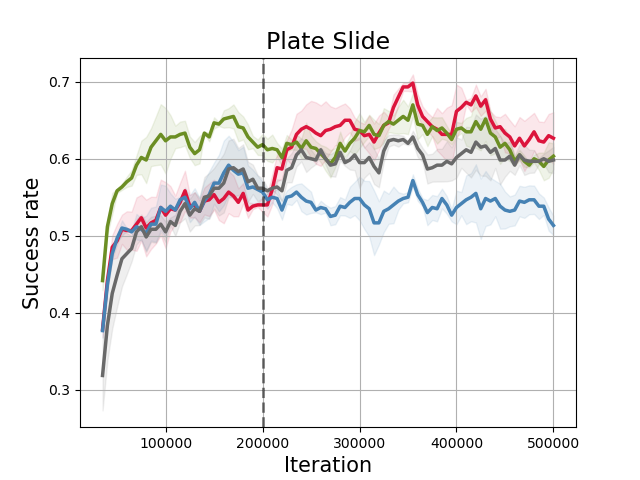} \hspace{-20pt}
    \includegraphics[width = 0.33 \textwidth]{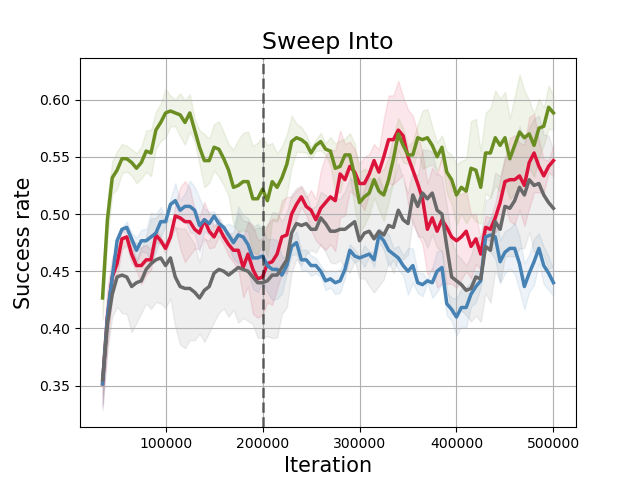}

    \noindent\rule{\textwidth}{0.5pt}

    \includegraphics[width = 0.33 \textwidth]{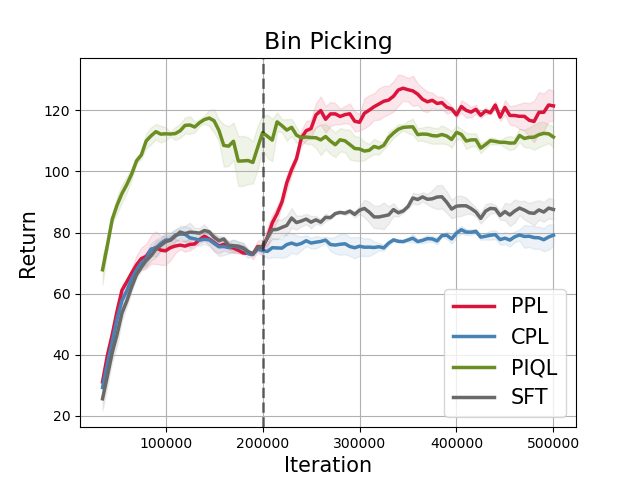} \hspace{-20pt}
    \includegraphics[width = 0.33 \textwidth]{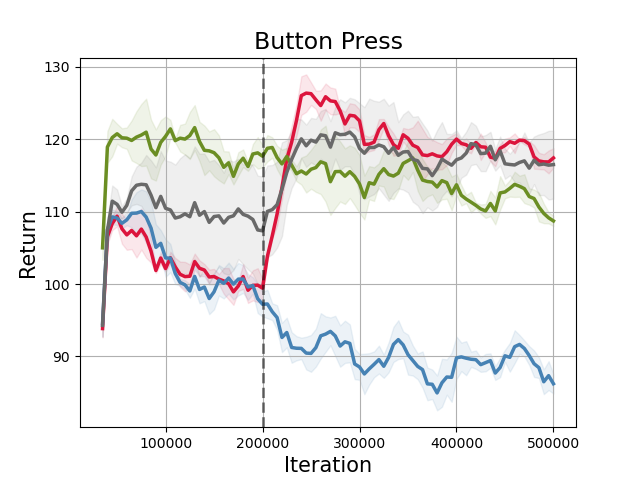} \hspace{-20pt}
    \includegraphics[width = 0.33 \textwidth]{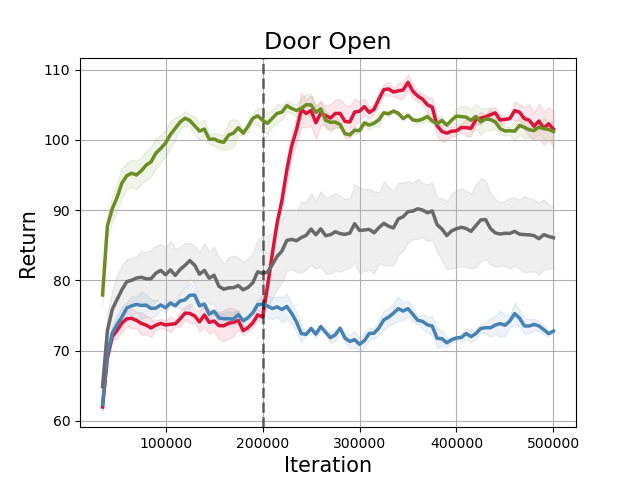}
    \includegraphics[width = 0.33 \textwidth]{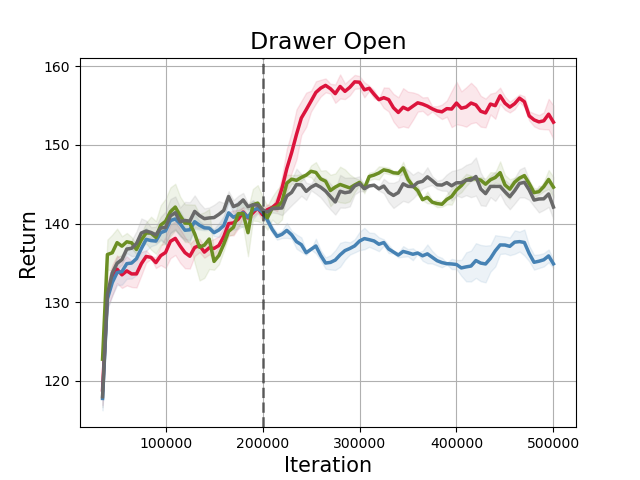} \hspace{-20pt}
    \includegraphics[width = 0.33 \textwidth]{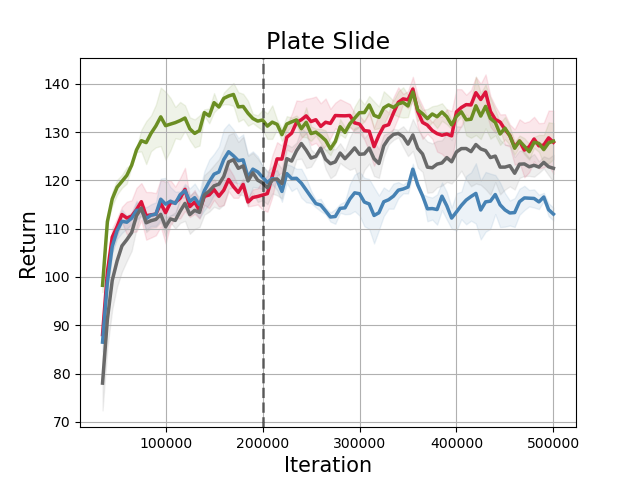} \hspace{-20pt}
    \includegraphics[width = 0.33 \textwidth]{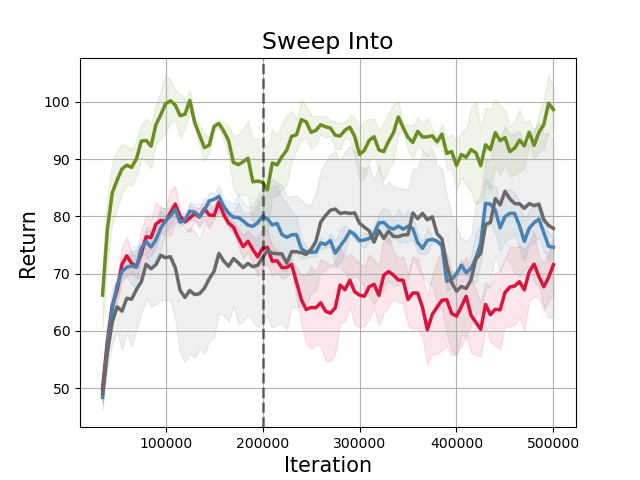}
\caption{Performance comparison of different methods on the \texttt{Heterogeneous Dense} dataset across six MetaWorld tasks.}
\end{figure*}

\newpage
\subsection{\texttt{Heterogeneous Sparse} Offline Dataset}

\begin{figure*}[h!]
    \centering
    % \vspace{-0.5cm}
    \includegraphics[width = 0.33 \textwidth] {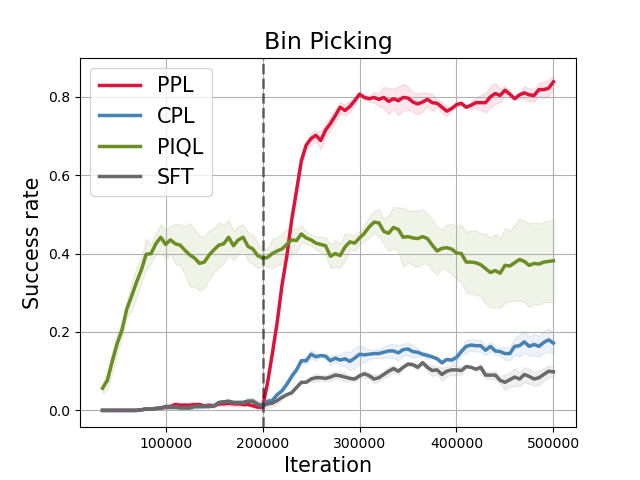} \hspace{-20pt}
    \includegraphics[width = 0.33 \textwidth]{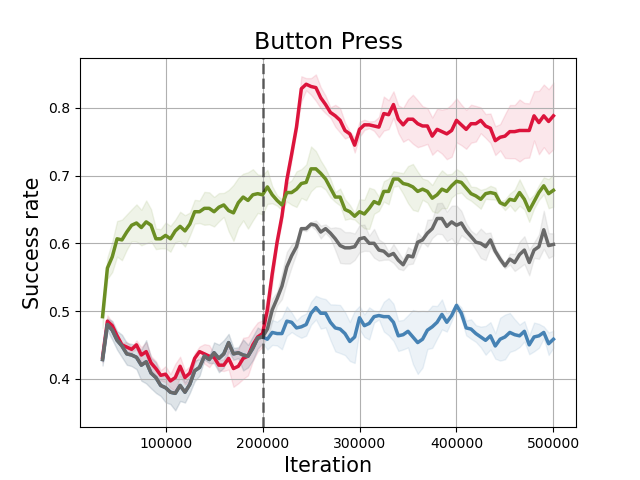} \hspace{-20pt}
    \includegraphics[width = 0.33 \textwidth]{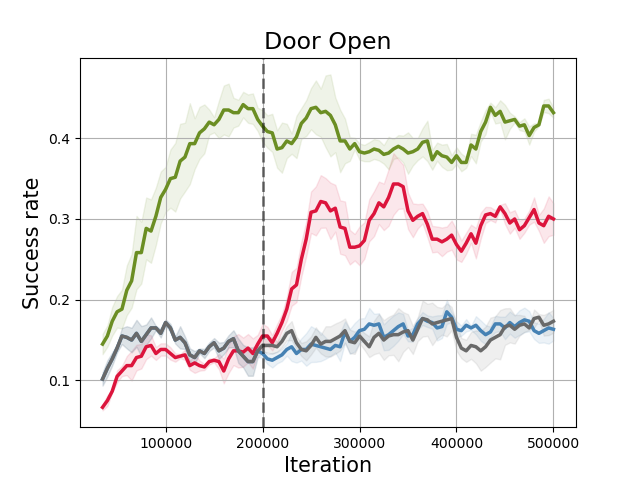}
    \includegraphics[width = 0.33 \textwidth]{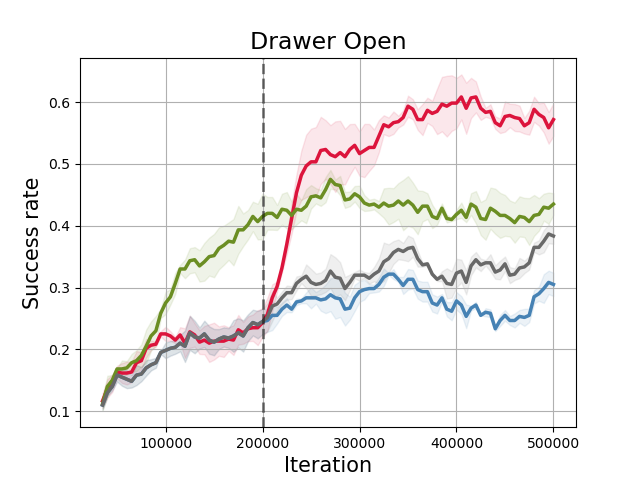} \hspace{-20pt}
    \includegraphics[width = 0.33 \textwidth]{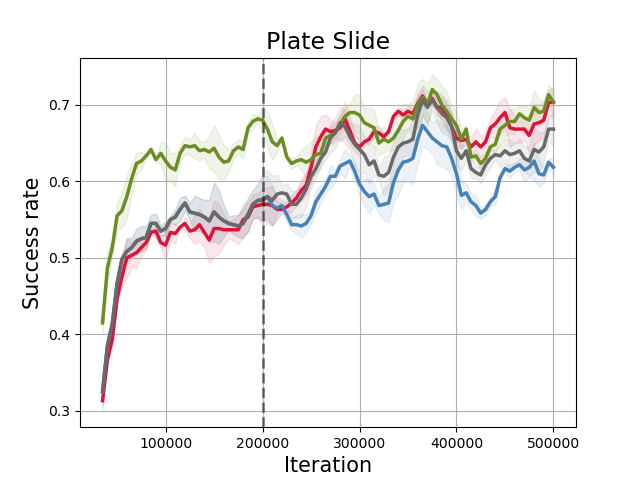} \hspace{-20pt}
    \includegraphics[width = 0.33 \textwidth]{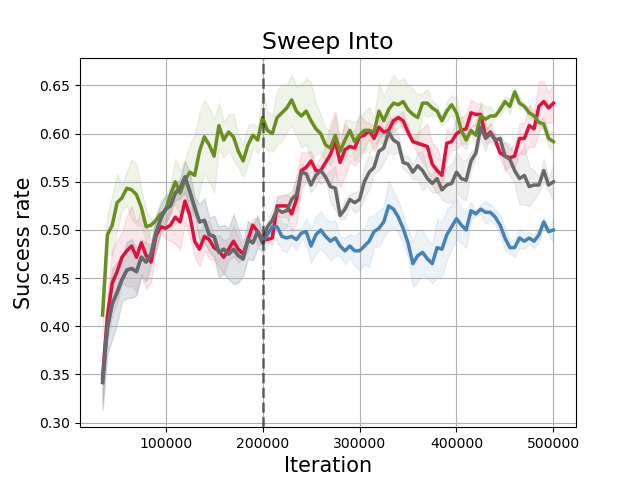}

    \noindent\rule{\textwidth}{0.5pt}

    \includegraphics[width = 0.33 \textwidth]{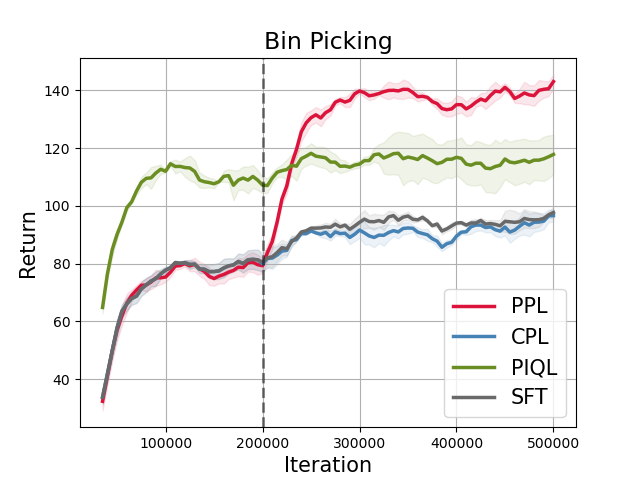} \hspace{-20pt}
    \includegraphics[width = 0.33 \textwidth]{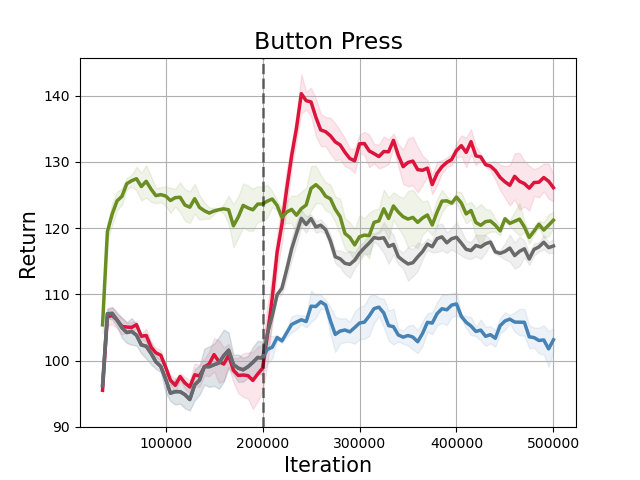} \hspace{-20pt}
    \includegraphics[width = 0.33 \textwidth]{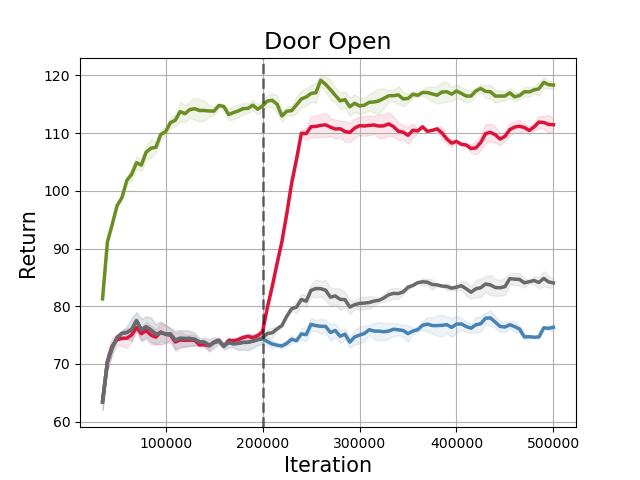}
    \includegraphics[width = 0.33 \textwidth]{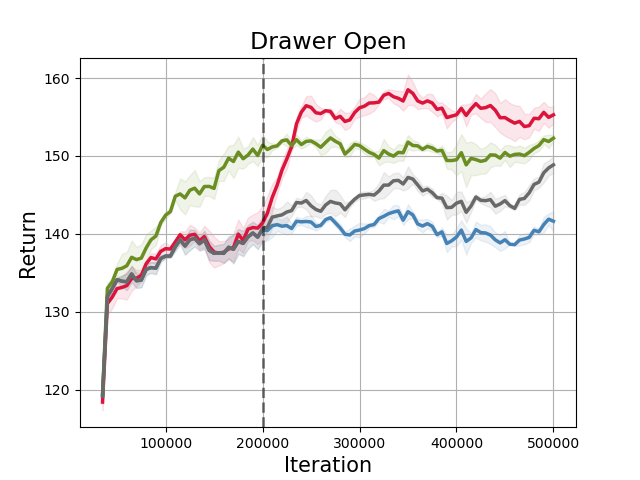} \hspace{-20pt}
    \includegraphics[width = 0.33 \textwidth]{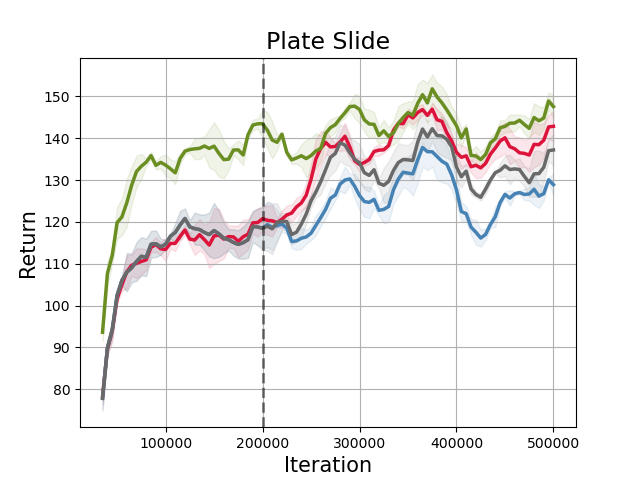} \hspace{-20pt}
    \includegraphics[width = 0.33 \textwidth]{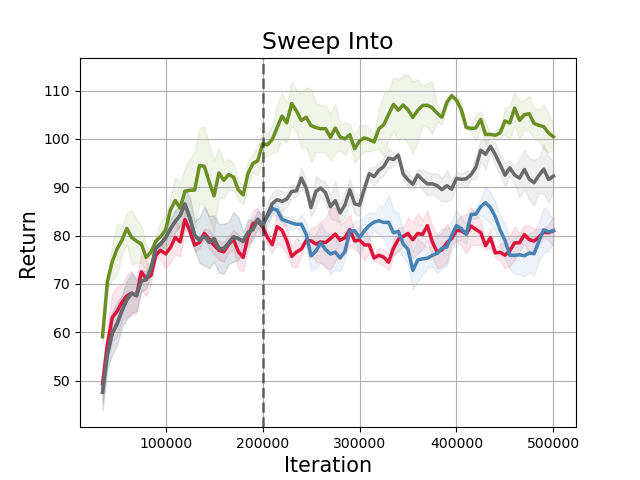}
\caption{Performance comparison of different methods on the \texttt{Heterogeneous Sparse} dataset across six MetaWorld tasks.}
\end{figure*}

\newpage
\section{Comparison with Deterministic Pseudo-labels (Section \ref{subsec:2})}
\label{subsec:E2}

\subsection{\texttt{Homogeneous Dense} Offline Dataset}
\begin{figure*}[h!]
    \centering
    % \vspace{-0.5cm}
    \includegraphics[width = 0.33 \textwidth] {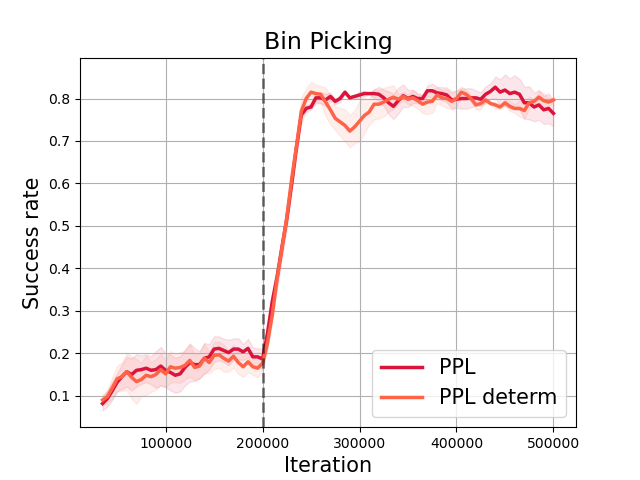} \hspace{-20pt}
    \includegraphics[width = 0.33 \textwidth]{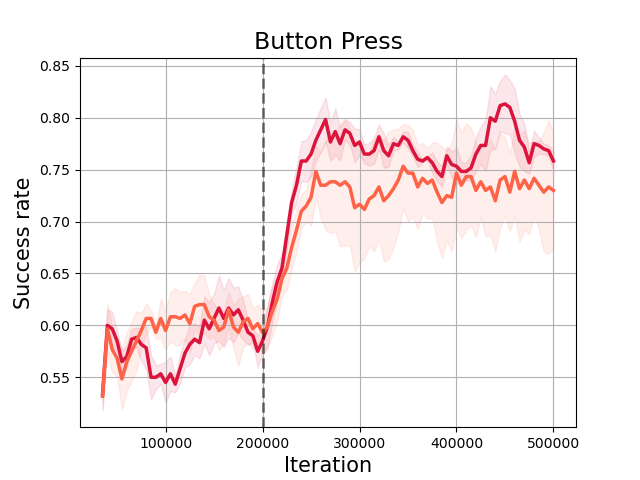} \hspace{-20pt}
    \includegraphics[width = 0.33 \textwidth]{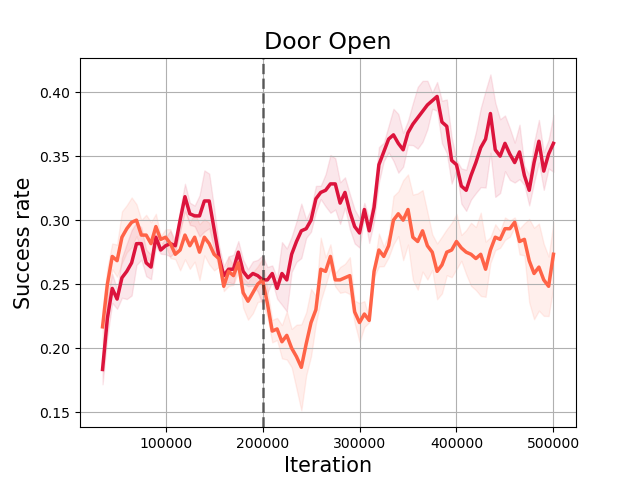} 
    \includegraphics[width = 0.33 \textwidth]{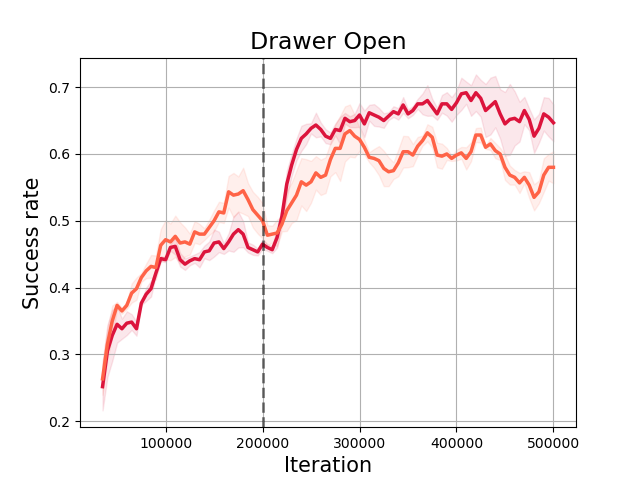} \hspace{-20pt}
    \includegraphics[width = 0.33 \textwidth]{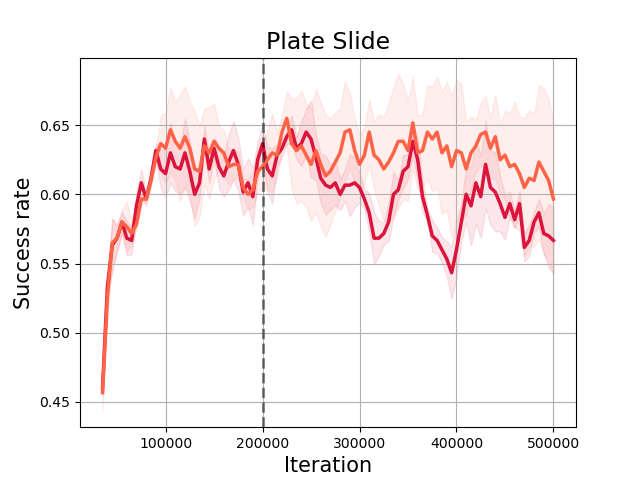} \hspace{-20pt}
    \includegraphics[width = 0.33 \textwidth]{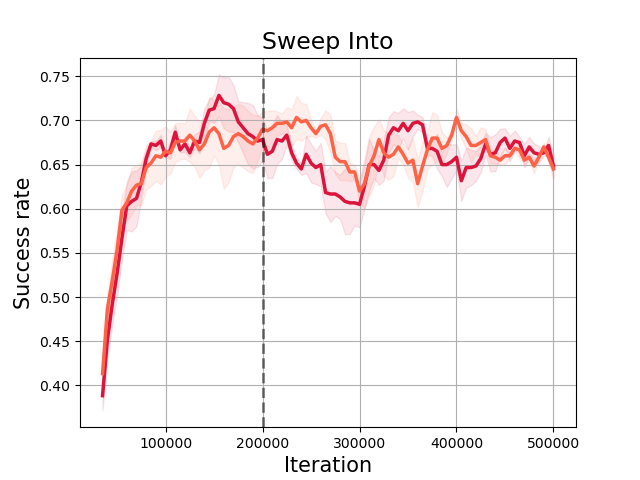}
\caption{Comparison of PPL and PPL-deterministic on the \texttt{Homogeneous Dense} Offline Dataset.}
    % \noindent\rule{\textwidth}{0.5pt}

    % \includegraphics[width = 0.33 \textwidth] {image/plot/bin_picking_reward_offline_50_determ.png} \hspace{-20pt}
    % \includegraphics[width = 0.33 \textwidth]{image/plot/button_press_reward_offline_50_determ.png} \hspace{-20pt}
    % \includegraphics[width = 0.33 \textwidth]{image/plot/door_open_reward_offline_50_determ.png} 
    % \includegraphics[width = 0.33 \textwidth]{image/plot/drawer_open_reward_offline_50_determ.png} \hspace{-20pt}
    % \includegraphics[width = 0.33 \textwidth]{image/plot/plate_slide_reward_offline_50_determ.png} \hspace{-20pt}
    % \includegraphics[width = 0.33 \textwidth]{image/plot/sweep_into_reward_offline_50_determ.png}
\end{figure*}

% \newpage
\subsection{\texttt{Heterogeneous Dense} Offline Dataset}
\begin{figure*}[h!]
    \centering
    % \vspace{-0.5cm}
    \includegraphics[width = 0.33 \textwidth] {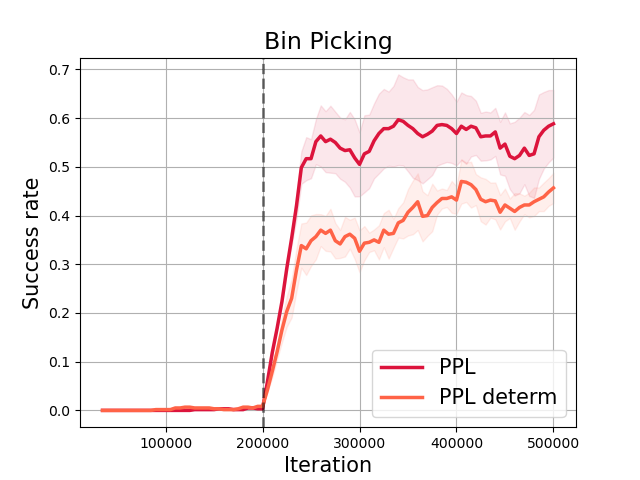} \hspace{-20pt}
    \includegraphics[width = 0.33 \textwidth]{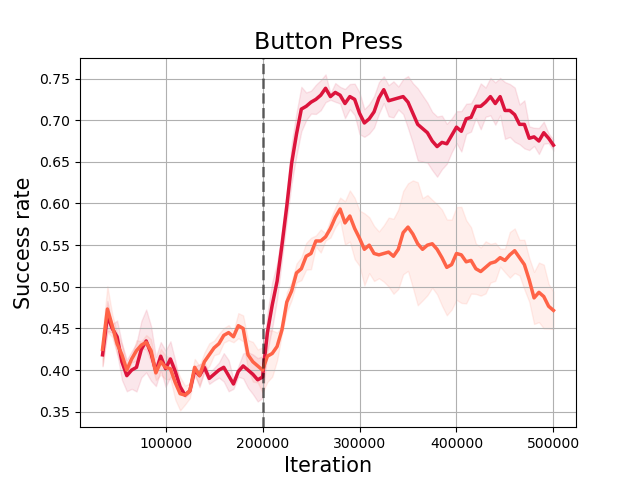} \hspace{-20pt}
    \includegraphics[width = 0.33 \textwidth]{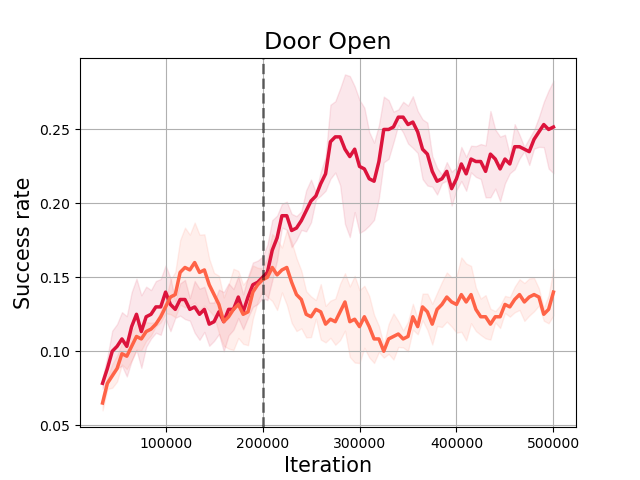} 
    \includegraphics[width = 0.33 \textwidth]{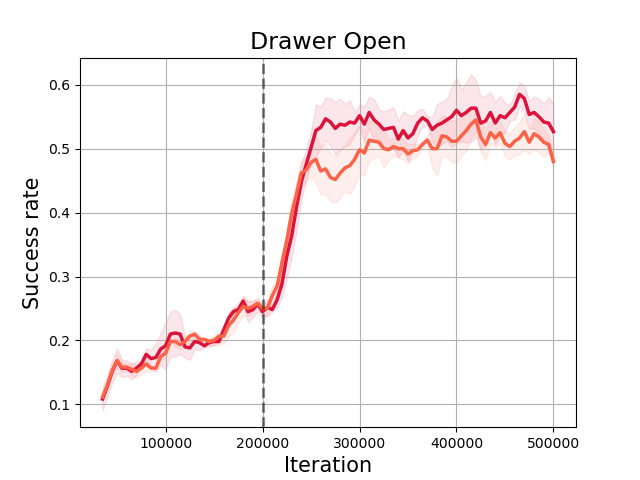} \hspace{-20pt}
    \includegraphics[width = 0.33 \textwidth]{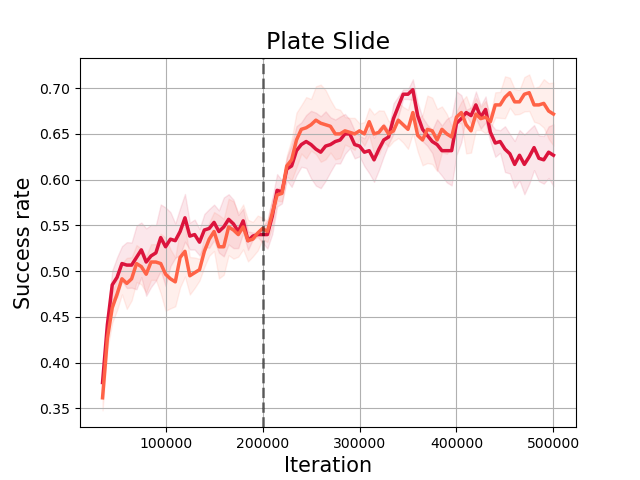} \hspace{-20pt}
    \includegraphics[width = 0.33 \textwidth]{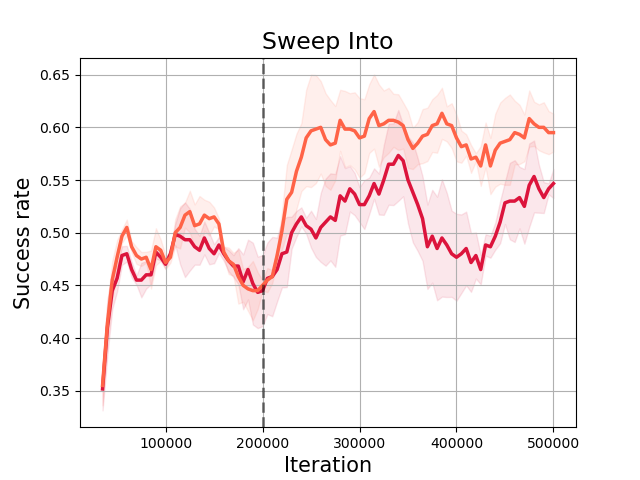}
\caption{Comparison of PPL and PPL-deterministic on the \texttt{Heterogeneous Dense} Offline Dataset.}
 
    % \noindent\rule{\textwidth}{0.5pt}
    
    % \includegraphics[width = 0.33 \textwidth] {image/plot/bin_picking_reward_offline_20_50_determ.png} \hspace{-20pt}
    % \includegraphics[width = 0.33 \textwidth]{image/plot/button_press_reward_offline_20_50_determ.png} \hspace{-20pt}
    % \includegraphics[width = 0.33 \textwidth]{image/plot/door_open_reward_offline_20_50_determ.png} 
    % \includegraphics[width = 0.33 \textwidth]{image/plot/drawer_open_reward_offline_20_50_determ.png} \hspace{-20pt}
    % \includegraphics[width = 0.33 \textwidth]{image/plot/plate_slide_reward_offline_20_50_determ.png} \hspace{-20pt}
    % \includegraphics[width = 0.33 \textwidth]{image/plot/sweep_into_reward_offline_20_50_determ.png}
\end{figure*}

\newpage
%\section{Online Learning Curves (Section \ref{subsec:3})}
\section{Experimental Results on Online Implementation (Section \ref{subsec:3})}
\label{subsec:E3}

\subsection{Online Learning Curves}

We evaluated the performance of PPL in an online setting across five MetaWorld tasks. The number of preference queries (\#Pref) varied for each environment based on the quantities used in PEBBLE, and these differences are illustrated in each plot.

%We evaluate the performance of PPL in an online RLHF setting across five MetaWorld tasks.  
%Each plot compares the success rate over iterations with different numbers of preference queries (\#Pref).  
%Overall, increasing the number of preference queries leads to improved performance, demonstrating the benefit of richer preference feedback in online learning.

\begin{figure*}[h!]
    \centering
    \vspace{-10pt}
    \includegraphics[width = 0.3\textwidth]{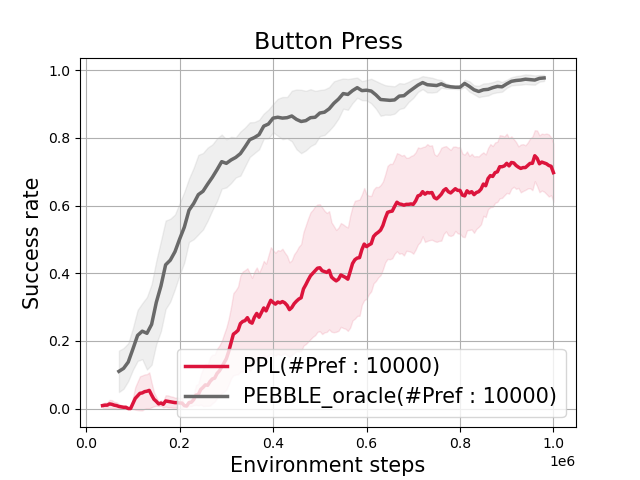} \hspace{-10pt}
    \includegraphics[width = 0.3\textwidth]{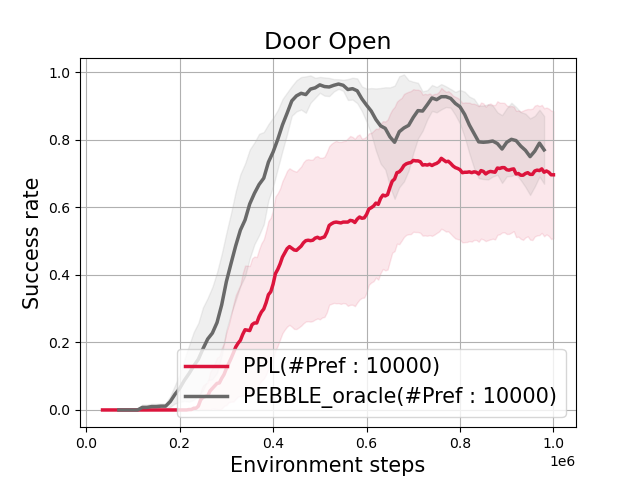} \hspace{-10pt}
    \includegraphics[width = 0.3\textwidth]{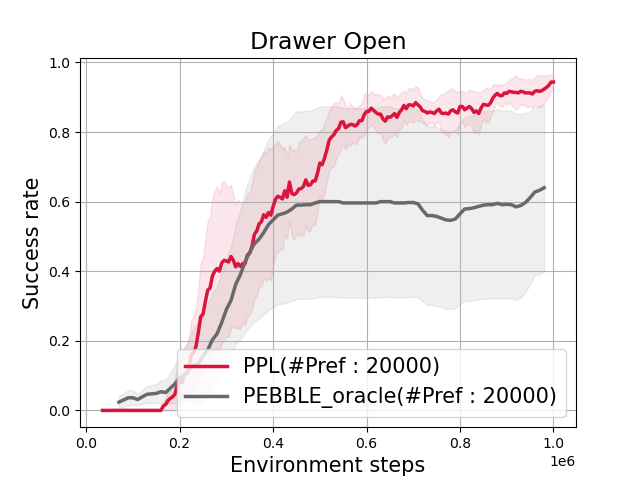}
    \includegraphics[width = 0.3\textwidth]{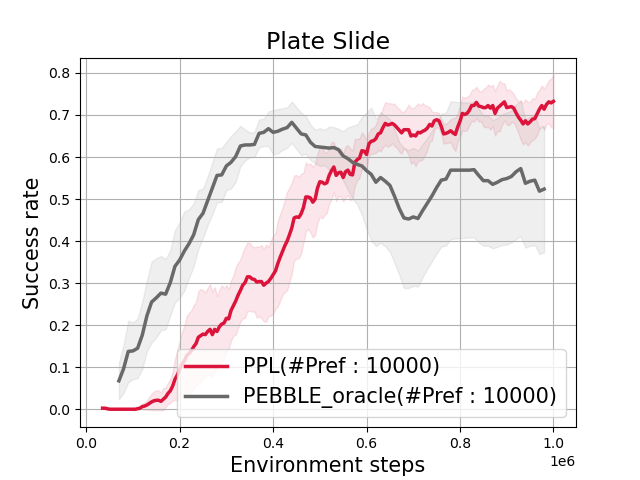} \hspace{-10pt}
    \includegraphics[width = 0.3\textwidth]{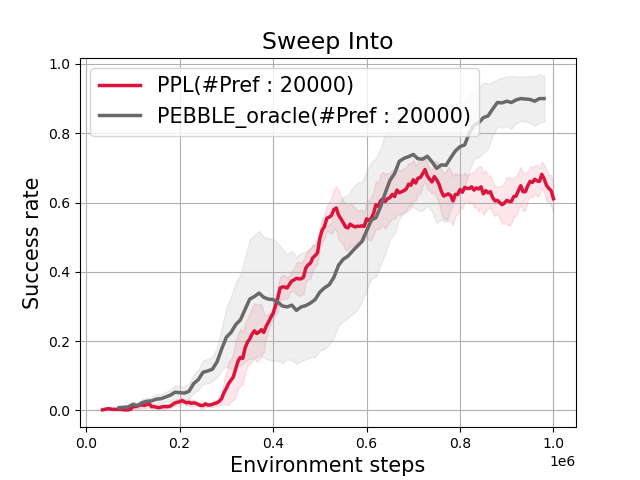} 
\caption{PPL and PEBBLE learning curves in online learning.}
\end{figure*}
\vspace{-10pt}

\subsection{Ablation on Preference Query Count}

We evaluate the performance of PPL over iterations with different numbers of preference queries (\#Pref).  
Overall, increasing the number of preference queries leads to improved performance, demonstrating the benefit of richer preference feedback in online learning.

\begin{figure*}[h!]
    \centering
    \vspace{-10pt}
    \includegraphics[width = 0.3\textwidth]{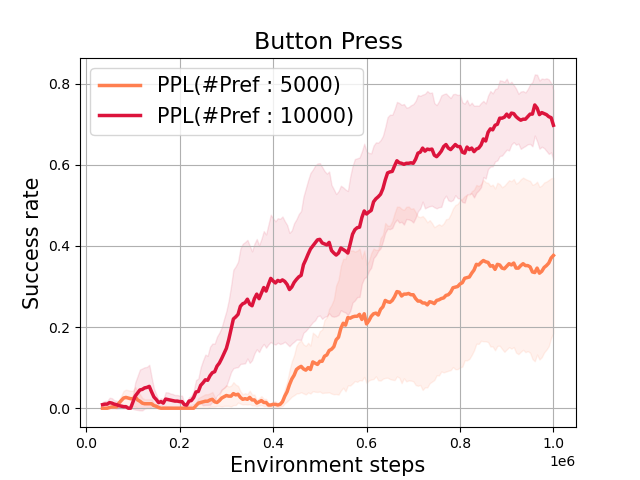} \hspace{-10pt}
    \includegraphics[width = 0.3\textwidth]{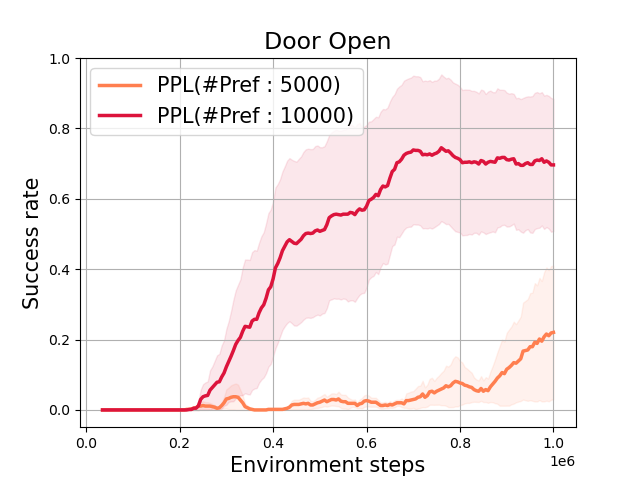} \hspace{-10pt}
    \includegraphics[width = 0.3\textwidth]{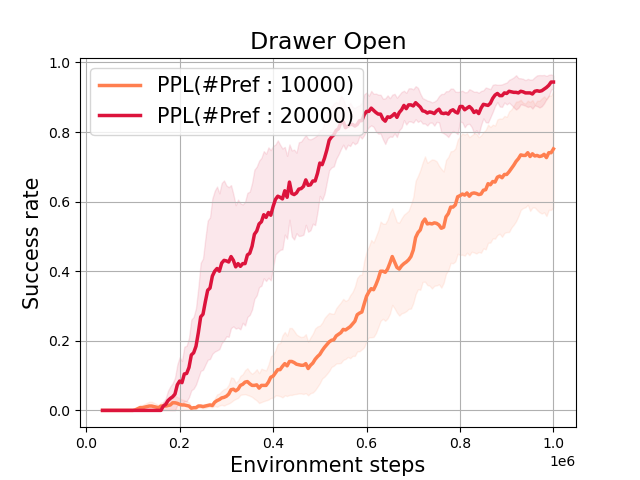}
    \includegraphics[width = 0.3\textwidth]{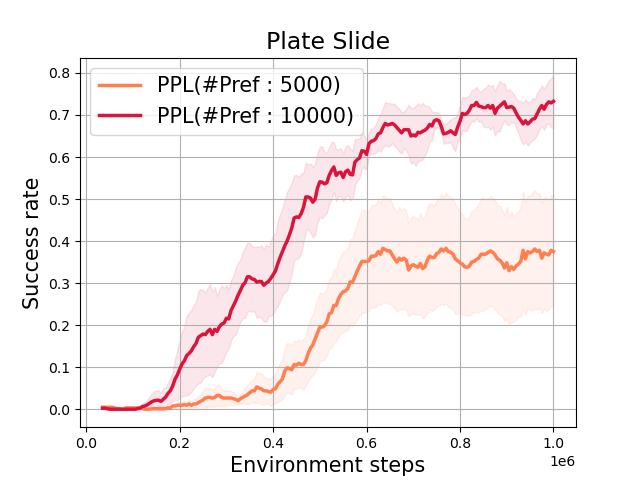} \hspace{-10pt}
    \includegraphics[width = 0.3\textwidth]{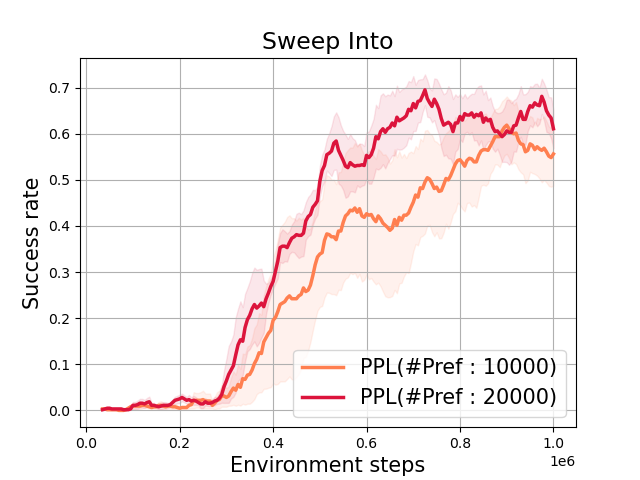} 
\caption{Effect of preference query count in online learning.}
    % \noindent\rule{\textwidth}{0.5pt}

    % \includegraphics[width = 0.33\textwidth]{image/plot/button_press_reward_online.png} \hspace{-20pt}
    % \includegraphics[width = 0.33\textwidth]{image/plot/door_open_reward_online.png} \hspace{-20pt}
    % \includegraphics[width = 0.33\textwidth]{image/plot/drawer_open_reward_online.png}
    % \includegraphics[width = 0.33\textwidth]{image/plot/plate_slide_reward_online.png} \hspace{-20pt}
    % \includegraphics[width = 0.33\textwidth]{image/plot/sweep_into_reward_online.png} 
\end{figure*}
\vspace{-10pt}

\subsection{Ablation on Rollout Length}

We analyze the impact of different rollout lengths $L$ on the performance of PPL in an online RLHF setting across five MetaWorld tasks.  
Each plot compares the success rate over training iterations for three rollout lengths: $L = \{5, 10, 20\}$.  
% Longer rollouts generally improve performance by allowing deeper lookahead, but excessive rollout length may lead to diminishing returns or increased variance in training.

\begin{figure*}[h!]
    \centering
    \vspace{-10pt}
    \includegraphics[width = 0.3\textwidth]{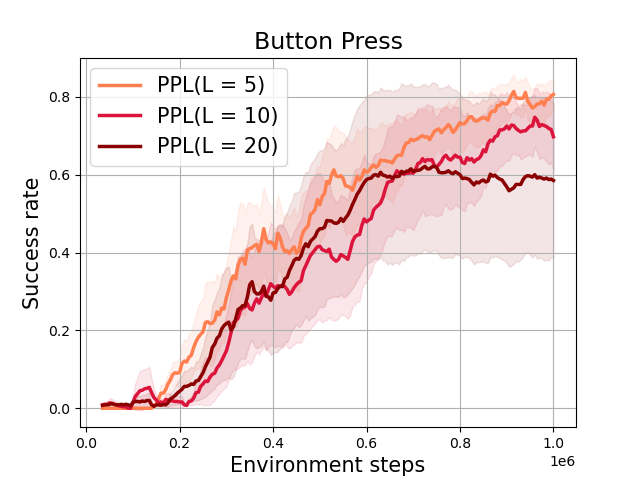} \hspace{-10pt}
    \includegraphics[width = 0.3\textwidth]{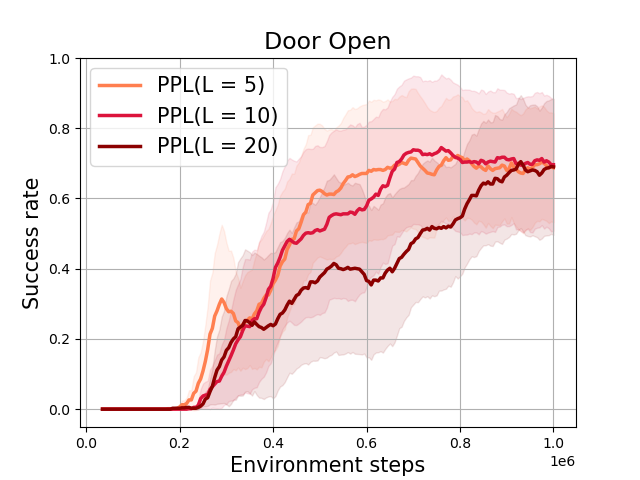} \hspace{-10pt}
    \includegraphics[width = 0.3\textwidth]{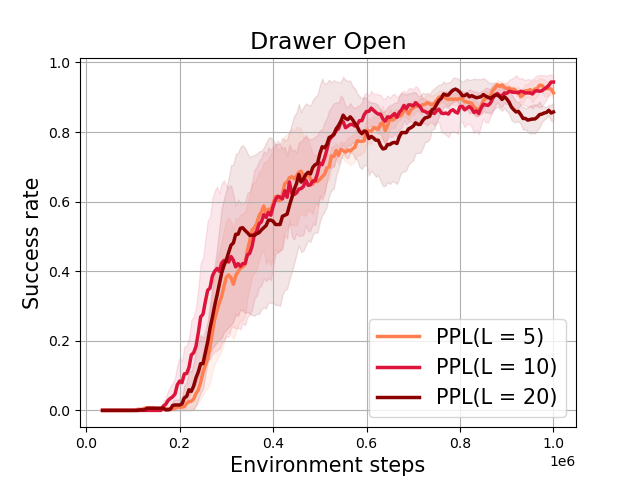}
    \includegraphics[width = 0.3\textwidth]{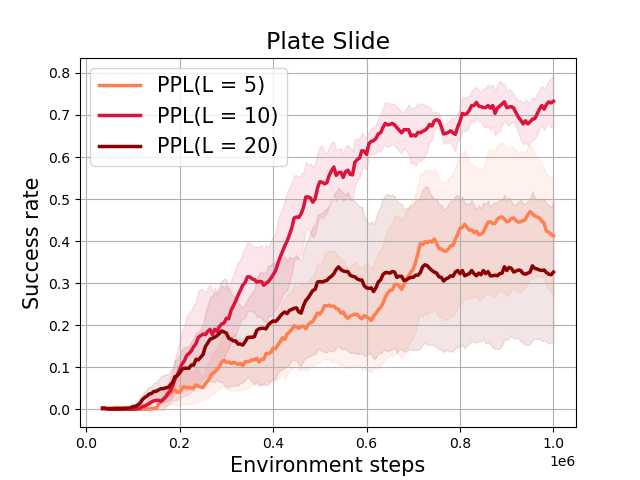} \hspace{-10pt}
    \includegraphics[width = 0.3\textwidth]{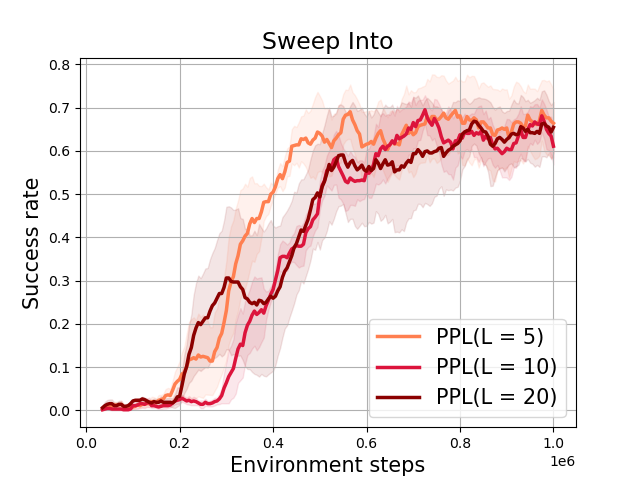} 
\caption{Effect of rollout length in online learning.}
    % \noindent\rule{\textwidth}{0.5pt}

    % \includegraphics[width = 0.33\textwidth]{image/plot/button_press_reward_online_l.png} \hspace{-20pt}
    % \includegraphics[width = 0.33\textwidth]{image/plot/door_open_reward_online_l.png} \hspace{-20pt}
    % \includegraphics[width = 0.33\textwidth]{image/plot/drawer_open_reward_online_l.png}
    % \includegraphics[width = 0.33\textwidth]{image/plot/plate_slide_reward_online_l.png} \hspace{-20pt}
    % \includegraphics[width = 0.33\textwidth]{image/plot/sweep_into_reward_online_l.png} 
\end{figure*}

%%%%%%%%%%%%%%%%%%%%%%%%%%%%%%%%
% APPENDIX
%%%%%%%%%%%%%%%%%%%%%%%%%%%%%%%%    

\end{document}